\newcolumntype{M}[1]{>{\centering\arraybackslash}m{#1}}
\newcolumntype{L}[1]{>{\raggedright\arraybackslash}m{#1}}
\theoremstyle{plain}
\newtheorem{theorem}{Theorem}
\newtheorem{proposition}[theorem]{Proposition}
\theoremstyle{definition}
\newtheorem{definition}[theorem]{Definition}
\newtheorem{remark}[theorem]{Remark}
\newcommand{\rbr}[1]{\left(#1\right)}
\newcommand{\bbr}[1]{\Big(#1\Big)}
\newcommand{\sbr}[1]{\left[#1\right]}
\newcommand{\cbr}[1]{\left\{#1\right\}}
\newcommand{\nbr}[1]{\left\|#1\right\|}
\newcommand{\bs}[1]{{\boldsymbol{#1}}}  
\newcommand{\mathset}[1]{\cbr{#1}}  
\providecommand{\ind}[1]{{\bf 1}{\cbr{#1}}}  
\newcommand\numberthis{\addtocounter{equation}{1}\tag{\theequation}}  
\DeclareMathOperator*{\argmin}{\mathrm{argmin}}
\DeclareMathOperator{\sign}{\mathrm{sign}}
\DeclareMathOperator{\trace}{\mathrm{Tr}}
\DeclareMathOperator{\E}{\mathbb{E}}
\newcommand{\ignore}[1]{}
\newcommand{\best}[1]{\cellcolor{gray!25}{#1}}
\newcommand{\updatesKD}[2]{{#1}{}}
\newcommand{\rebuttal}[1]{#1}
\def \HH {\mathcal{H}}
\def \XX {\mathcal{X}}
\def \YY {\mathcal{Y}}
\def \ZZ {\mathcal{Z}}
\def \LL {\mathcal{L}}
\def \FF {\mathcal{F}}
\def \GG {\mathcal{G}}
\def \bR {\mathbb{R}}
\title{Supervision Complexity and its Role in Knowledge Distillation}
\author{Hrayr Harutyunyan\thanks{Work done while interning at Google Research NYC.}\ \ $^1$ \quad Ankit Singh Rawat$^2$ \quad Aditya Krishna Menon$^2$ \\ {Seungyeon Kim$^2$\quad  Sanjiv Kumar$^2$}
}
\affil{$^1$ USC Information Sciences Institute \\
{\small \texttt{hrayrhar@usc.edu}
}
}
\affil{$^2$ Google Research NYC \\
{\small \texttt{\{ankitsrawat,adityakmenon,seungyeonk,sanjivk\}@google.com} \\
\vspace{-3mm}}
}
\date{}
\begin{document}

\maketitle

\begin{abstract}
Despite the popularity and efficacy of knowledge distillation, there is limited understanding of why it helps.
In order to study the generalization behavior of a distilled student, we propose a new theoretical framework that leverages \textit{supervision complexity}: a measure of alignment between teacher-provided supervision and the student's \textit{neural tangent kernel}.
The framework highlights a delicate interplay among the teacher's accuracy, the student’s margin with respect to the teacher predictions, and the complexity of the teacher predictions.
Specifically, it provides a rigorous justification for the utility of various techniques that are prevalent in the context of distillation,
such as early stopping and temperature scaling.
Our analysis further suggests the use of \textit{online distillation}, where a student receives increasingly more complex supervision from teachers in different stages of their training.
We demonstrate efficacy of online distillation and validate the theoretical findings on a range of image classification benchmarks and model architectures.

\end{abstract}

\section{Introduction}\label{sec:intro}

Knowledge distillation (KD)~\citep{Bucilla:2006, hinton2015distilling} is a popular method of  compressing a large ``teacher'' model into a more compact ``student'' model.
In its most basic form, this involves training the student to fit the teacher's predicted \emph{label distribution} or \emph{soft labels} for each sample.
There is strong empirical evidence that distilled students usually perform better than students trained on raw dataset labels~\citep{hinton2015distilling, furlanello2018born, stanton2021does, gou2021knowledge}.
Multiple works have devised novel \updatesKD{KD}{knowledge distillation} procedures that further improve the student model performance (see \citet{gou2021knowledge} and references therein).
Simultaneously, several works have aimed to rigorously formalize \emph{why} \updatesKD{KD}{knowledge distillation} can improve the student model performance. 
Some prominent observations from this line of work are that (self-)distillation induces certain favorable optimization biases in the training objective
~\citep{phuong19understand,ji2020knowledge}, 
lowers variance of the objective~\citep{menon2021statistical,Dao:2021,ren2022better}, 
increases regularization towards learning ``simpler'' functions~\citep{mobahi2020self}, 
transfers information from different data views~\citep{Zhu:2020}, and 
scales per-example gradients based on the teacher's confidence~\citep{furlanello2018born,tang2020understanding}.

Despite this remarkable progress, there are still many open problems and unexplained phenomena around knowledge distillation;
to name a few:
\begin{itemize}[label=---,itemsep=0pt,topsep=0pt,leftmargin=16pt]
    \item \emph{Why do soft labels (sometimes) help?}
    It is agreed that teacher's soft predictions carry information about class similarities~\citep{hinton2015distilling, furlanello2018born}, and that this softness of predictions has a regularization effect similar to label smoothing~\citep{yuan2020revisiting}.
    Nevertheless, \updatesKD{KD}{knowledge distillation} also works in binary classification settings with limited class similarity information~\citep{Muller:2020}.
    How exactly the softness of teacher predictions (controlled by a temperature parameter) affects the student learning remains far from well understood.

    \item \emph{The role of capacity gap}. 
    There is evidence that when there is a significant capacity gap between the teacher and the student, the distilled model usually falls behind its teacher~\citep{mirzadeh2020improved, cho2019efficacy, stanton2021does}. It is unclear whether this is due to difficulties in optimization, or due to insufficient student capacity.

    \item \emph{What makes a good teacher?} 
    Sometimes less accurate models are better teachers~\citep{cho2019efficacy, mirzadeh2020improved}. Moreover, early stopped or exponentially averaged models are often better teachers~\citep{ren2022better}.
    A comprehensive explanation of this remains elusive.
\end{itemize}
The aforementioned wide range of phenomena suggest that there is a complex interplay between teacher accuracy, softness of teacher-provided targets, and complexity of the distillation objective.

This paper provides a new theoretically grounded perspective on \updatesKD{KD}{knowledge distillation} through the lens of \emph{supervision complexity}.
In a nutshell, this quantifies why certain targets 
(e.g., temperature-scaled teacher probabilities) may be ``easier'' for a student model to learn compared to others (e.g., raw one-hot labels), 
owing to better alignment with the student's \emph{neural tangent kernel} (\emph{NTK})~\citep{jacot2018ntk,lee2019wide}.
In particular, we provide a novel theoretical analysis (\S\ref{sec:analysis}, \cref{thm:margin-bound-label-complexity} and \ref{thm:margin-bound-label-complexity-vectorcase}) of the role of {supervision complexity} on kernel classifier generalization, and use this to derive a new generalization bound for distillation (\cref{prop:distillation-error-wrt-dataset-labels}).
The latter highlights how student generalization is controlled by a balance of 
the \emph{teacher generalization}, the student's \emph{margin} with respect to the teacher predictions, and the complexity of the teacher's predictions.

Based on the preceding analysis, we establish the conceptual and practical efficacy of a simple \emph{online distillation} approach (\S\ref{sec:experiments}), wherein the student is fit to progressively more complex targets, in the form of teacher predictions at various checkpoints during its training.
This method can be seen as guiding the student in the function space (see \cref{fig:offline-vs-online}), and leads to better generalization compared to offline distillation.
We provide empirical results on a range of image classification benchmarks confirming the value of online distillation, particularly for students with weak inductive biases.

Beyond practical benefits, the supervision complexity view yields new insights into distillation:
\begin{itemize}[label=---,itemsep=0pt,topsep=0pt,leftmargin=16pt]
    \item \emph{The role of temperature scaling and early-stopping}.
    Temperature scaling and early-stopping of the teacher have proven effective for \updatesKD{KD}{knowledge distillation}.
    We show that 
    both of these techniques reduce the supervision complexity, at the expense of also lowering the classification margin.
    Online distillation manages to smoothly increase teacher complexity, without degrading the margin.
    
    \item \emph{Teaching a weak student}.
    We show that 
    for students with weak inductive biases, 
    and/or with much less capacity than the teacher,
    the final teacher predictions 
    \emph{are often as complex as dataset labels}, particularly during the early stages of training.
    In contrast, online distillation allows the supervision complexity to progressively increase, thus allowing even a weak student to learn.
    
    \item \emph{NTK and relational transfer}.
    We show that 
    online distillation is highly effective at matching the teacher and student NTK matrices.
    This transfers \emph{relational knowledge} in the form of example-pair similarity,
    as opposed to standard distillation which only transfers \emph{per-example knowledge}.
\end{itemize}

\begin{figure}[t]
    \centering
    \begin{subfigure}{0.31\textwidth}
       \includegraphics[width=\textwidth]{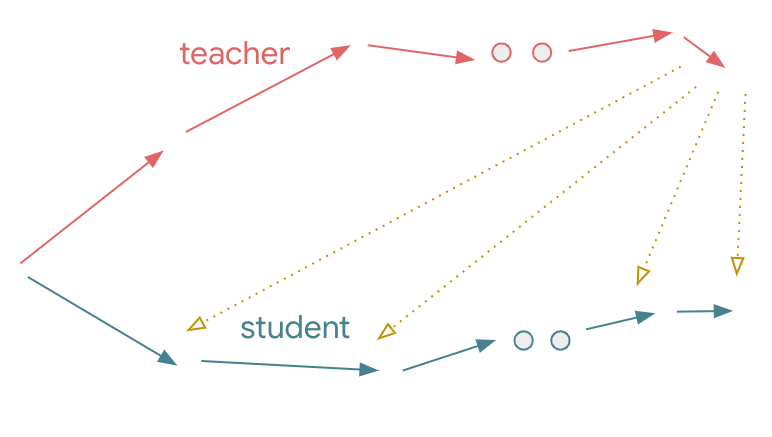}
       \caption{{\centering Offline distillation}}
    \end{subfigure}%
    \hspace{0.02\textwidth}%
    \begin{subfigure}{0.31\textwidth}
       \includegraphics[width=\textwidth]{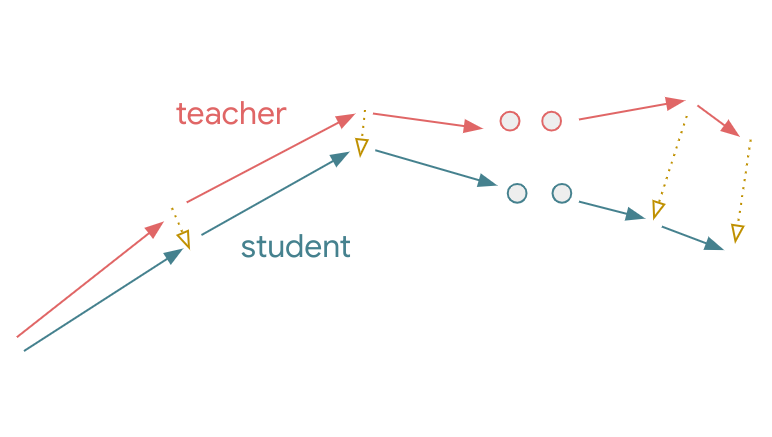}
       \caption{{\centering Online distillation}}
    \end{subfigure}%
    \hspace{1em}
    \begin{subfigure}{0.31\textwidth}
       \includegraphics[width=\textwidth]{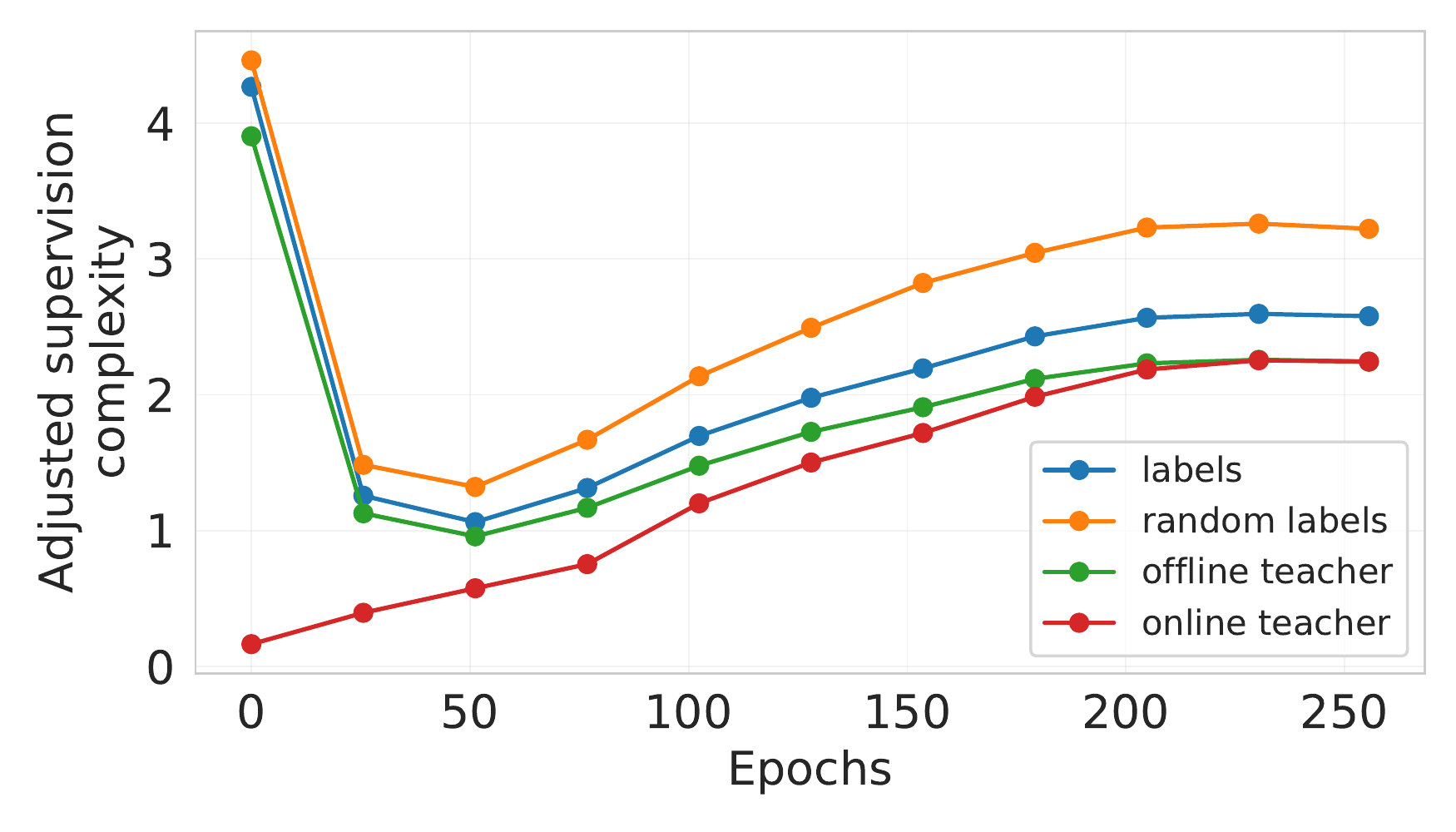}
       \caption{{\centering Supervision complexity}}
       \label{fig:sup-res-complexity-comparison-test}
    \end{subfigure}%
    \caption{Online vs. online distillation. Figures (a) and (b) illustrate possible teacher and student function trajectories in offline and offline \updatesKD{KD}{knowledge distillation}.
    The yellow dotted lines indicate \updatesKD{KD}{knowledge distillation}. Figure (c) plots adjusted supervision complexity of various targets with respect to NTKs at different stages of training (see \S\ref{sec:experiments} for more details).
    }
    \label{fig:offline-vs-online}
\end{figure}

\textbf{Problem setting.}
We focus on classification problems from input domain $\XX$ to $d$ classes.
We are given a training set of $n$ labeled examples $\{ (x_1, y_1), \ldots, (x_n, y_n) \}$, 
with one-hot encoded labels
$y_i \in \mathset{0,1}^d$.
Typically, a model $f_\theta : \XX \rightarrow \bR^d$ is trained with the \emph{softmax cross-entropy} loss:
\begin{equation}
    \label{eqn:softmax-ce}
    \mathcal{L}_\text{ce}( f_\theta ) = -\frac{1}{n}\sum\nolimits_{i=1}^n y_i^\top \log \sigma(f_\theta(x_i)),
\end{equation}
where $\sigma(\cdot)$ is the softmax function.
In {standard} \updatesKD{KD}{knowledge distillation}, given a trained \emph{teacher} model $g : \XX \rightarrow \bR^d$ that outputs logits,
one trains a \emph{student} model $f_\theta : \XX \rightarrow \bR^d$ to fit the teacher predictions.
\citet{hinton2015distilling} propose the following \updatesKD{KD}{knowledge distillation} loss:
\begin{equation}
    \label{eqn:standard-kd}
    \mathcal{L}_\text{kd-ce}(f_\theta; g, \tau ) = -\frac{\tau^2}{n}\sum\nolimits_{i=1}^n \sigma(g(x_i) / \tau)^\top \log \sigma(f_\theta(x_i) / \tau),
\end{equation}
where temperature $\tau > 0$ controls the softness of teacher predictions.
To highlight the effect of KD and simplify exposition, 
we assume that the student is not trained with the dataset labels.

\section{Supervision complexity and generalization}
\label{sec:analysis}
One apparent difference between standard training and \updatesKD{KD}{knowledge distillation} (Eq.~\ref{eqn:softmax-ce} and~\ref{eqn:standard-kd}) is that
the latter modifies the \emph{targets} that the student attempts to fit. The targets used during distillation ensure a better generalization for the student;
what is the reason for this?
Towards answering this question,
we present a new perspective on \updatesKD{KD}{knowledge distillation} in terms of \emph{supervision complexity}.
To begin, we show how the generalization of a \emph{kernel-based} classifier is controlled by a measure of alignment between the target labels and the kernel matrix. 
We first treat \emph{binary} kernel-based classifiers (\cref{thm:margin-bound-label-complexity}), and later extend our analysis to \emph{multiclass} kernel-based classifiers (\cref{thm:margin-bound-label-complexity-vectorcase}). 
Finally, 
{by leveraging the neural tangent kernel machinery,}
we discuss the implications of our analysis for neural classifiers in~\S\ref{sec:target-complexity-neural}.

\subsection{Supervision complexity controls kernel machine generalization}\label{subsec:target-complexity-scalar-output}
\rebuttal{
The notion of supervision complexity is easiest to introduce and study for kernel-based classifiers.
}
We briefly review some necessary background~\citep{Scholkopf:2001}.
Let $k : \XX \times \XX \rightarrow \bR$ be a positive semidefinite kernel defined over an input space $\XX$.
Any such kernel uniquely determines a reproducing kernel Hilbert space (RKHS) $\HH$ of functions from $\mathcal{X}$ to $\bR$.
This RKHS is the completion of the set of functions of form $f(x) = \sum_{i=1}^m \alpha_i k(x_i, x)$, with 
$x_i \in \XX, \alpha_i \in \mathbb{R}$.
Any $f(x) = \sum_{i=1}^m \alpha_i k(x_i, x) \in \HH$ has (RKHS) norm
\begin{align}
\nbr{f}^2_\HH = \sum_{i=1}^m\sum_{j=1}^m \alpha_i \alpha_j k(x_i, x_j) = \bs{\alpha}^\top K \bs{\alpha},
\end{align}
where 
$\bs{\alpha} = (\alpha_1, \ldots, \alpha_n)^\top$ and $K_{i,j} = k(x_i, x_j)$.
Intuitively, $\| f \|_\HH^2$ measures the \emph{smoothness} of $f$, e.g.,
for a Gaussian kernel it measures the Fourier spectrum decay of $f$~\citep{Scholkopf:2001}.

For simplicity, we start with the case of binary classification.
Suppose $\{(X_i, Y_i)\}_{i \in [n]}$ are
$n$ i.i.d. examples sampled from some probability distribution
on $\XX \times \YY$, with $\YY \subset \mathbb{R}$, where positive and negative labels correspond to distinct classes.
Let $K_{i,j} = k(X_i, X_j)$ denote the kernel matrix, and
$\bs{Y} = (Y_1,\ldots,Y_n)^\top$ be the concatenation of all training labels.

\begin{definition}[Supervision complexity]
The \emph{supervision complexity} of targets $Y_1,\ldots,Y_n$ with respect to a kernel $k$ is defined to be $\bs{Y}^\top K^{-1}\bs{Y}$ in cases when $K$ is invertible, and $+\infty$ otherwise.    
\end{definition}

We now establish how supervision complexity controls the \emph{smoothness} of the optimal kernel classifier.
Consider a classifier obtained by solving a \textit{regularized} kernel classification problem:
\begin{equation}
    f^* \in \argmin_{f \in \HH} \frac{1}{n}\sum\nolimits_{i=1}^n \ell(f(X_i), Y_i) + \frac{\lambda}{2} \nbr{f}_\HH^2,
    \label{eq:kernel-method-general}
\end{equation}
where $\ell$ is a loss function and $\lambda > 0$.
The following proposition shows whenever the supervision complexity is small, the RKHS norm of any optimal solution $f^*$ will also be small (see \cref{app:proofs} for a proof).
This is an important learning bias that shall help us explain certain aspects of KD.
\begin{proposition}
Assume that $K$ is full rank almost surely;
$\ell(y, y') \ge 0, \forall y, y' \in \YY$; and $\ell(y,y) = 0, \forall y\in\YY$. Then, with probability 1, for any solution $f^*$ of \eqref{eq:kernel-method-general}, we have 
$
    \nbr{f^*}_\mathcal{H}^2 \le \bs{Y}^\top K^{-1} \bs{Y}.
$
\label{prop:kernel-method-solution-norm}
\end{proposition}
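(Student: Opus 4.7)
The plan is to construct an explicit reference function in $\HH$ that perfectly interpolates the training targets and whose squared RKHS norm equals the supervision complexity, and then to invoke the optimality of $f^*$ in \eqref{eq:kernel-method-general} to conclude.

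Concretely, I would set $\tilde{\bs{\alpha}} := K^{-1}\bs{Y}$, which is well-defined with probability one by the full-rank hypothesis, and define $\tilde{f}(x) := \sum_{i=1}^n \tilde{\alpha}_i\, k(X_i, x) \in \HH$. Two facts then follow by direct computation. First, by the reproducing structure we get $\tilde{f}(X_j) = (K\tilde{\bs{\alpha}})_j = Y_j$ for every $j \in [n]$, so $\tilde{f}$ fits every training target exactly. Second, using the norm identity $\nbr{f}_\HH^2 = \bs{\alpha}^\top K \bs{\alpha}$ recalled in the paragraph preceding the proposition,
\begin{equation*}
\nbr{\tilde{f}}_\HH^2 \;=\; \tilde{\bs{\alpha}}^\top K \tilde{\bs{\alpha}} \;=\; \bs{Y}^\top K^{-1} K K^{-1} \bs{Y} \;=\; \bs{Y}^\top K^{-1} \bs{Y},
\end{equation*}
which is exactly the supervision complexity.

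From here the proof collapses to a one-line variational argument. Since $\ell(y,y) = 0$ for every $y \in \YY$, the empirical risk of $\tilde{f}$ vanishes and its objective value in \eqref{eq:kernel-method-general} reduces to $(\lambda/2)\,\bs{Y}^\top K^{-1} \bs{Y}$. Combining optimality of $f^*$ with the non-negativity of $\ell$ (used to drop the empirical risk of $f^*$) yields
\begin{equation*}
\frac{\lambda}{2}\,\nbr{f^*}_\HH^2 \;\le\; \frac{1}{n}\sum_{i=1}^n \ell(f^*(X_i), Y_i) + \frac{\lambda}{2}\,\nbr{f^*}_\HH^2 \;\le\; \frac{1}{n}\sum_{i=1}^n \ell(\tilde{f}(X_i), Y_i) + \frac{\lambda}{2}\,\nbr{\tilde{f}}_\HH^2 \;=\; \frac{\lambda}{2}\,\bs{Y}^\top K^{-1} \bs{Y},
\end{equation*}
and cancelling $\lambda/2 > 0$ gives the claim.

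I do not anticipate any substantive obstacle: each of the three hypotheses is used once in a transparent way (full rank to construct $\tilde{f}$, $\ell(y,y)=0$ to zero out its risk, $\ell\ge 0$ to drop $f^*$'s risk term), and the $K$ singular event has probability zero, where the supervision complexity is $+\infty$ by definition and the bound is vacuous. Non-uniqueness of $f^*$ is likewise harmless because the chain of inequalities above holds for an arbitrary minimizer.
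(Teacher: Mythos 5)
Your proposal is correct and follows essentially the same route as the paper: construct the interpolant $\tilde{f} = \sum_i (K^{-1}\bs{Y})_i\, k(X_i,\cdot)$ with zero empirical loss and squared norm $\bs{Y}^\top K^{-1}\bs{Y}$, then compare objective values with $f^*$. You merely spell out the final variational inequality that the paper leaves implicit.
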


Equipped with the above result, we now show how supervision complexity controls generalization. 
In the following, 
let $\phi_\gamma : \bR \rightarrow [0, 1]$ be the \emph{margin loss}~\citep{mohri2018foundations} with scale $\gamma > 0$:
\begin{equation}
    \phi_\gamma(\alpha) = \begin{cases}
        1 &\text{ if }  \alpha \le 0\\
        1 - \alpha / \gamma &\text{ if } 0 < \alpha \le \gamma \\
        0 &\text{ if } \alpha > \gamma.
        \end{cases}
\end{equation}

\begin{theorem}
Assume that $\kappa=\sup_{x \in \XX} k(x,x) < \infty$ and $K$ is full rank almost surely.
Further, assume that $\ell(y, y') \ge 0, \forall y, y' \in \YY$ and $\ell(y,y) = 0, \forall y\in\YY$.
Let $M_0 = \left\lceil{\gamma\sqrt{n}}/{(2\sqrt{\kappa})}\right\rceil$.
Then, with probability at least $1-\delta$, for any solution $f^*$ of problem in \cref{eq:kernel-method-general}, we have
\begin{align*}
    \mathbb{P}_{X,Y}(Y f^*(X) \le 0) &\le \frac{1}{n}\sum_{i=1}^n \phi_\gamma(\sign\rbr{Y_i} f^*(X_i)) + \frac{2\sqrt{\bs{Y}^\top K^{-1} \bs{Y}}+2}{\gamma n} \sqrt{\trace\rbr{K}}\\
    &\quad+3\sqrt{\frac{\ln\rbr{2M_0/\delta}}{2n}}.
    \numberthis
    \label{eqn:target-complexity-bound}
\end{align*}
\label{thm:margin-bound-label-complexity}
\end{theorem}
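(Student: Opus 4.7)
The target inequality has the shape of a standard Rademacher-based margin bound for the RKHS ball $\FF_B = \cbr{f \in \HH : \nbr{f}_\HH \le B}$, with the twist that the cap $B$ is the data-dependent quantity $\sqrt{\bs{Y}^\top K^{-1}\bs{Y}}$. My plan is to combine three ingredients: \emph{(i)} \cref{prop:kernel-method-solution-norm}, which controls $\nbr{f^*}_\HH$ by the supervision complexity; \emph{(ii)} the classical margin generalization bound for a fixed-radius RKHS ball in terms of its empirical Rademacher complexity; and \emph{(iii)} a discretization-of-the-radius and union-bound trick to convert the random cap into a probabilistic statement.

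For ingredient (ii), for each positive integer $m$ define $\FF_m = \cbr{f \in \HH : \nbr{f}_\HH \le m}$. A short RKHS calculation (Cauchy--Schwarz in $\HH$, then Jensen on the Rademacher expectation) gives the familiar bound $\widehat{\mathfrak{R}}_n(\FF_m) \le m\sqrt{\trace(K)}/n$. Feeding this into the standard margin generalization bound of Mohri--Rostamizadeh--Talwalkar for the $(1/\gamma)$-Lipschitz margin loss $\phi_\gamma$ yields, for each fixed $m$ and $\delta' \in (0,1)$, with probability at least $1-\delta'$, uniformly over $f \in \FF_m$:
\[
\mathbb{P}_{X,Y}\rbr{Y f(X) \le 0} \;\le\; \frac{1}{n}\sum_{i=1}^n \phi_\gamma\rbr{\sign(Y_i)\, f(X_i)} \;+\; \frac{2m\sqrt{\trace(K)}}{\gamma n} \;+\; 3\sqrt{\frac{\ln(2/\delta')}{2n}}.
\]

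For ingredient (iii), I instantiate the above for $m \in \cbr{1,\ldots,M_0}$ with $\delta' = \delta/M_0$ and apply a union bound: on a single event of probability $\ge 1-\delta$ all $M_0$ inequalities hold simultaneously, and the concentration term becomes $3\sqrt{\ln(2M_0/\delta)/(2n)}$. Let $m^\star = \lceil\sqrt{\bs{Y}^\top K^{-1}\bs{Y}}\rceil$. By \cref{prop:kernel-method-solution-norm}, $f^* \in \FF_{m^\star}$. In the main case $m^\star \le M_0$, I apply the $m^\star$-instance of the union-bounded event to $f^*$ and use $m^\star \le \sqrt{\bs{Y}^\top K^{-1}\bs{Y}} + 1$ inside the $m$-slot, which reproduces exactly the $(2\sqrt{\bs{Y}^\top K^{-1}\bs{Y}} + 2)\sqrt{\trace(K)}/(\gamma n)$ factor in the stated bound; the $+2$ is the ceiling penalty from discretization.

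The main obstacle is the residual tail case $m^\star > M_0$, in which the realised radius lies outside the discretization grid and no $\FF_m$ in the union bound contains $f^*$. The calibration $M_0 = \lceil \gamma\sqrt{n}/(2\sqrt{\kappa})\rceil$ is chosen precisely so that this case is vacuous: combining the uniform trace estimate $\trace(K) \le n\kappa$ with the hypothesis $\sqrt{\bs{Y}^\top K^{-1}\bs{Y}} > M_0$ forces the stated RHS past $1$, and since its left-hand side is a probability the inequality then holds trivially. Reconciling the trace-based Rademacher bound that appears in the final statement with the $\sqrt{n\kappa}$ worst-case scaling used to define $M_0$ is the bookkeeping step I expect to require the most care.
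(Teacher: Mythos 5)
Your proposal is correct and follows essentially the same route as the paper's proof: the fixed-radius estimate $\widehat{\mathfrak{R}}_n\rbr{\cbr{f\in\HH:\nbr{f}_\HH\le m}}\le m\sqrt{\trace\rbr{K}}/n$ fed into the standard margin bound (this is the paper's \cref{thm:margin-bound-classical}), the norm control of \cref{prop:kernel-method-solution-norm}, a union bound over integer radii $1,\ldots,M_0$ at confidence $\delta/M_0$, and the ceiling estimate $\left\lceil\sqrt{\bs{Y}^\top K^{-1}\bs{Y}}\right\rceil\le\sqrt{\bs{Y}^\top K^{-1}\bs{Y}}+1$. The one step to re-examine is the tail case, which the paper likewise dispatches by declaring the bound vacuous: since $\trace\rbr{K}\le n\kappa$ gives $2M_0\sqrt{\trace\rbr{K}}/(\gamma n)\lesssim 1$, i.e., it bounds the complexity term from \emph{above} rather than below, the hypothesis $\sqrt{\bs{Y}^\top K^{-1}\bs{Y}}>M_0$ together with that trace estimate does not by itself push the right-hand side past $1$ — your instinct that this bookkeeping "requires the most care" is warranted, but your treatment is no less rigorous than the paper's own.
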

The proof is available in \cref{app:proofs}.
One can compare \cref{thm:margin-bound-label-complexity}
with the standard Rademacher bound for kernel classifiers~\citep{bartlett2002rademacher}.
The latter typically consider learning over functions with RKHS norm bounded by a \emph{constant} $M > 0$.
The corresponding complexity term then decays as $\mathscr{O}( \sqrt{M \cdot \trace\rbr{K} / n} )$,
which is \emph{data-independent}.
Consequently, 
such a bound cannot adapt to the intrinsic ``difficulty'' of the targets $\bs{Y}$.
In contrast, \cref{thm:margin-bound-label-complexity} considers functions with RKHS norm bounded by the \emph{data-dependent} supervision complexity term. This results in a more informative generalization bound, which captures the ``difficulty'' of the targets. 
Here, we note that \cite{arora2019fine} characterized the generalization of an overparameterized two-layer neural network via a term closely related to the supervision complexity (see \S\ref{sec:related} for additional discussion).

The supervision complexity $\bs{Y}^\top K^{-1} \bs{Y}$ is small whenever $\bs{Y}$ is aligned with top eigenvectors of $K$ and/or $\bs{Y}$ has small scale. Furthermore, one cannot make the bound close to zero by just reducing the scale of targets, as one would need a small $\gamma$ to control the margin loss that would otherwise increase due to student predictions getting closer to zero (as the student aims to match $Y_i$).

To better understand the role of supervision complexity, it is instructive to consider two special cases that lead to a poor generalization bound:
(1) uninformative \emph{features}, and (2) uninformative \emph{labels}.

\textbf{Complexity under uninformative features.~} 
Suppose the kernel matrix $K$ is diagonal, so that the kernel provides \emph{no} information on example-pair similarity; i.e., the kernel is ``uninformative''.
An application of Cauchy-Schwarz reveals
the key expression in the second term in~\eqref{eqn:target-complexity-bound} satisfies:
\begin{equation*}
    \frac{1}{n}\sqrt{\bs{Y}^\top K^{-1} \bs{Y} \trace(K)} = \frac{1}{n}\sqrt{\bbr{\sum\nolimits_{i=1}^n Y^2_i \cdot k(X_i, X_i)^{-1}}\bbr{\sum\nolimits_{i=1}^n k(X_i,X_i)}} \ge \frac{1}{n}\sum_{i=1}^n |Y_i|.
\end{equation*}
Consequently, this term is least constant in order, and \emph{does not} vanish as $n \to \infty$.

\textbf{Complexity under uninformative labels.~} 
Suppose the labels $Y_i$ are 
purely random, and
independent from inputs $X_i$.
Conditioned on $\{X_i\}$, 
$\bs{Y}^\top K^{-1} \bs{Y}$ concentrates
around its mean by the Hanson-Wright inequality~\citep{vershynin_2018}.
Hence, $\exists \, \epsilon(K, \delta, n)$ such that with probability 
$\geq 1-\delta$, $\bs{Y}^\top K^{-1} \bs{Y} \ge \E_{\{Y_i\}}\big[{\bs{Y}^\top K^{-1} \bs{Y}}\big] - \epsilon = \E\sbr{Y_1^2}\trace(K^{-1}) - \epsilon$.
Thus, with the same probability, 
\begin{align*}
    \frac{1}{n}\sqrt{\bs{Y}^\top K^{-1} \bs{Y} \trace(K)} &\ge \frac{1}{n}\sqrt{\rbr{\E\sbr{Y_1^2}\trace(K^{-1}) - \epsilon} \trace(K)}
    \ge \frac{1}{n}\sqrt{\E\sbr{Y_1^2}n^2 - \epsilon \trace(K)},
\end{align*}
where the last inequality is by Cauchy-Schwarz.
For sufficiently large $n$, the quantity $\E\sbr{Y_1^2}n^2$ dominates $\epsilon \trace\rbr{K}$, rendering the bound of \cref{thm:margin-bound-label-complexity} close to a constant.

\subsection{Extensions: multiclass classification and neural networks}
\label{sec:target-complexity-neural}
We now show that a result similar to \cref{thm:margin-bound-label-complexity} holds for multiclass classification as well. In addition, we also discuss how our results are instructive about the behavior of neural networks.

\textbf{Extension to multiclass classification.}
Let $\{(X_i, Y_i)\}_{i \in [n]}$ be drawn i.i.d.\, from a distribution over $\XX \times \YY$, where $\YY \subset \bR^d$.
Let $k : \XX \times \XX \rightarrow \bR^{d \times d}$ be a matrix-valued positive definite kernel and $\HH$ be the corresponding vector-valued RKHS.
As in the binary classification case, we consider a kernel problem in \cref{eq:kernel-method-general}.
Let $\bs{Y}^\top = (Y_1^\top, \ldots, Y_n^\top)$ and $K$ be the kernel matrix of training examples:
\begin{equation}
    K = \left[ \begin{matrix}
        k(X_1, X_1) & \cdots & k(X_1, X_n)\\
        \cdots & \cdots & \cdots \\
        k(X_n, X_1) & \cdots & k(X_n, X_n)\\
    \end{matrix} \right] \in \bR^{nd \times nd}.
\end{equation}
For 
$f : \XX \rightarrow \bR^d$ and a labeled example $(x,y)$, let 
$\rho_{f}(x,y) = f(x)_y - \max_{y'\neq y} f(x)_{y'}$ be the \emph{prediction margin}.
Then, the following analogue of \cref{thm:margin-bound-label-complexity} holds (see \cref{sec:extension-to-multiclass} for the proof).

\begin{theorem}
Assume that $\kappa=\sup_{x \in \XX, y\in[d]} k(x,x)_{y,y} < \infty$,
and $K$ is full rank almost surely.
Further, assume that $\ell(y, y') \ge 0, \forall y, y' \in \YY$ and $\ell(y,y) = 0, \forall y\in\YY$.
Let $M_0 = \left\lceil{\gamma\sqrt{n}}/{(4d\sqrt{\kappa})}\right\rceil$.
Then, with probability at least $1-\delta$, for any solution $f^*$ of problem in \cref{eq:kernel-method-general},
\begin{align*}
    \mathbb{P}_{X,Y}(\rho_{f^*}(X,Y) \le 0) &\le \frac{1}{n}\sum_{i=1}^n \ind{\rho_{f^*}(X_i, Y_i) \le \gamma} + \frac{4 d (\bs{Y}^\top K^{-1} \bs{Y}+1)}{\gamma n} \sqrt{\trace\rbr{K}}\\
    &\quad+3\sqrt{\frac{\log(2M_0/\delta)}{2n}}.
    \numberthis
\end{align*}
\label{thm:margin-bound-label-complexity-vectorcase}
\end{theorem}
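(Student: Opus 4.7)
My plan is to mirror the proof of \cref{thm:margin-bound-label-complexity}, replacing scalar kernel calculus with its vector-valued analogue and the binary margin loss with the multiclass prediction margin $\rho_f$. Four ingredients are required: (i) a data-dependent bound on $\nbr{f^*}_\HH$; (ii) a margin-based Rademacher generalization bound over a fixed RKHS ball; (iii) a Rademacher-complexity estimate for coordinate projections of that ball; (iv) a union bound over an integer grid of radii that converts the fixed-radius result into a data-dependent one. For (i), I would extend \cref{prop:kernel-method-solution-norm} essentially verbatim: the interpolating function $\tilde{f}(\cdot) = \sum_i k(\cdot, X_i)\bs{\alpha}_i$ with block-stacked coefficients $\bs{\alpha} = K^{-1}\bs{Y}$ satisfies $\tilde{f}(X_j) = Y_j$ for all $j$ (hence zero empirical loss) and has $\nbr{\tilde{f}}_\HH^2 = \bs{\alpha}^\top K \bs{\alpha} = \bs{Y}^\top K^{-1}\bs{Y}$; optimality of $f^*$ in~\eqref{eq:kernel-method-general} combined with $\ell\ge 0$ then forces $\nbr{f^*}_\HH^2 \le \bs{Y}^\top K^{-1}\bs{Y}$ almost surely.

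\textbf{Steps 2 and 3.} For each integer $M \in \cbr{1,\ldots,M_0}$, fix the ball $\HH_M := \cbr{f \in \HH : \nbr{f}_\HH \le M}$ and apply a standard multiclass margin Rademacher bound: for all $f \in \HH_M$, with probability at least $1-\delta_M$,
\begin{equation*}
\mathbb{P}\rbr{\rho_f(X,Y) \le 0} \le \frac{1}{n}\sum_{i=1}^n \ind{\rho_f(X_i,Y_i) \le \gamma} + \frac{c\,d}{\gamma}\,\hat{\mathcal{R}}_n\rbr{\Pi_1 \HH_M} + 3\sqrt{\frac{\ln(2/\delta_M)}{2n}},
\end{equation*}
where $\Pi_1\HH_M := \cbr{(x,y) \mapsto f(x)_y : f \in \HH_M}$ and the factor $d$ arises from peeling off the max over the $d-1$ competitor classes in $\rho_f$ before applying Ledoux-Talagrand contraction for the $1/\gamma$-Lipschitz ramp. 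For a vector-valued RKHS, $\hat{\mathcal{R}}_n\rbr{\Pi_1 \HH_M} \le M\sqrt{\trace(K)}/n$ follows from the reproducing property, Cauchy-Schwarz, and Jensen's inequality. I would then set $\delta_M = \delta/M_0$ and union-bound over $M \in \cbr{1,\ldots,M_0}$; for $M > M_0$ the Rademacher term already exceeds $1$ (using $\trace(K) \le nd\kappa$), so the integer grid up to $M_0$ suffices to cover every non-vacuous value of $\nbr{f^*}_\HH$. Choosing $M := \lceil \nbr{f^*}_\HH \rceil$ and invoking Step~1, routine slack (absorbing $\sqrt{\bs{Y}^\top K^{-1}\bs{Y}}+1$ into $\bs{Y}^\top K^{-1}\bs{Y}+1$ up to a multiplicative constant, using $\sqrt{t}\le t+1$ for $t\ge 0$) yields the stated numerator, while the union-bound overhead produces the $\ln(2M_0/\delta)$ term.

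\textbf{Main obstacle.} The principal technical challenge is the multiclass-to-scalar reduction. Because $\rho_f$ is a max of $d-1$ differences rather than a Lipschitz function of a single output coordinate, a direct Ledoux-Talagrand contraction does not apply; the standard remedy---union-bounding over the argmax class, or equivalently invoking a vector-contraction inequality with coordinatewise Lipschitz constant---is exactly what injects the extra factor $d$ relative to the binary statement and is responsible for the weaker dependence on $\bs{Y}^\top K^{-1}\bs{Y}$ (rather than its square root) in the final bound. A secondary subtlety is calibrating the integer grid: choosing $M_0 = \lceil \gamma\sqrt{n}/(4d\sqrt{\kappa}) \rceil$ exploits $\trace(K) \le nd\kappa$ to keep the union-bound penalty logarithmic while still covering the entire range of possible $\nbr{f^*}_\HH$. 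All remaining manipulations are routine once these two ingredients are in place.
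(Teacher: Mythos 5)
Your proposal follows essentially the same route as the paper's proof: the vector-valued analogue of \cref{prop:kernel-method-solution-norm} for the norm bound, a multiclass margin Rademacher bound with the $4d/\gamma$ factor (which the paper imports from Theorem~2 of \citet{kuznetsov2015rademacher} rather than re-deriving via the max-peeling/contraction argument you sketch), the same reproducing-property/Cauchy--Schwarz/Jensen estimate $\widehat{\mathfrak{R}}_n(\tilde{\FF}) \le M\sqrt{\trace(K)}/n$, and the same union bound over the integer grid $\{1,\ldots,M_0\}$. The argument is correct and matches the paper's.
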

\textbf{Implications for neural classifiers}.
Our analysis has so far focused on kernel-based classifiers.
While neural networks are not exactly kernel methods, many aspects of their performance can be 
understood via a corresponding \emph{linearized neural network} (see \citet{ortiz2021can} and references therein).
We follow this approach, and given a neural network $f_\theta$ with current weights $\theta_0$, we consider the corresponding linearized neural network 
comprising
the linear terms of the Taylor expansion of $f_\theta(x)$ around $\theta_0$~\citep{jacot2018ntk, lee2019wide}:
\begin{equation}
f^\text{lin}_\theta(x) \triangleq f_{\theta_0}(x) + \nabla_\theta f_{\theta_0}(x)^\top (\theta-\theta_0).
\end{equation}
Let $\omega \triangleq \theta - \theta_0$.
This network $f^\text{lin}_\omega(x)$ is a linear function with respect to 
the parameters $\omega$, but is generally non-linear with respect to the input $x$.
Note that $\nabla_\theta f_{\theta_0}(x)$ acts as a feature representation, 
and induces the \emph{neural tangent kernel} (\emph{NTK}) $k_0(x, x') = \nabla_\theta f_{\theta_0}(x)^\top \nabla_\theta f_{\theta_0}(x') \in \bR^{d\times d}$.

Given a labeled dataset $S = \{(x_i, y_i)\}_{i\in[n]}$
and a loss function $\LL( f; S )$, the dynamics of gradient flow with learning rate $\eta > 0$ for $f^\text{lin}_\omega$
can be fully characterized in the function space, and depends only on the predictions at $\theta_0$ and the NTK $k_0$:
\begin{align}
    \dot{f}_t^\text{lin}(x') &= -\eta \cdot K_0(x', x_{1:n}) \phantom{^\top} 
    \nabla_{f}\LL( f_{t}^\text{lin}(x_{1:n}); S ),\label{eq:lin-pred-dyn}
\end{align}
where $f(x_{1:n}) \in \bR^{nd}$ denotes the concatenation of predictions on training examples and $K_0(x', x_{1:n}) = \nabla_\theta f_{\theta_0}(x')^\top \nabla_\theta f_{\theta_0}(x_{1:n})$.
\citet{lee2019wide} show that as one increases the width of the network or when $(\theta - \theta_0)$ does not change much during training, the dynamics of the linearized and original neural network become close.

When $f_\theta$ is sufficiently overparameterized and $\LL$ is convex with respect to $\omega$, then $\omega_t$ converges to an interpolating solution.
Furthermore, for the mean squared error objective, the solution has the minimum Euclidean norm~\citep{gunasekar2017implicit}.
As the Euclidean norm of $\omega$ corresponds to norm of $f^\text{lin}_\omega(x) - f_{\theta_0}(x)$ in the vector-valued RKHS $\HH$ corresponding to 
$k_0$, training a linearized network to interpolation is equivalent to solving the following with a small $\lambda > 0$:
\begin{equation}
    h^* = \argmin_{h \in \HH} \frac{1}{n}\sum\nolimits_{i=1}^n (f_{\theta_0}(x_i) + h(x_i) - y_i)^2 + \frac{\lambda}{2} \nbr{h}_\HH^2.
    \label{eq:kernel-method-NN-fn-space}
\end{equation}
\rebuttal{
Therefore, the generalization bounds of Thm.~\ref{thm:margin-bound-label-complexity} and \ref{thm:margin-bound-label-complexity-vectorcase} apply to $h^*$ with supervision complexity of residual targets $y_i - f_{\theta_0}(x_i)$.
However, we are interested in the performance of $f_{\theta_0} + h^*$.
As the proofs of these results rely on bounding the Rademacher complexity of hypothesis sets of form $\mathset{h \in \HH \colon \nbr{h} \le M}$, and shifting a hypothesis set by a constant function does not change the Rademacher complexity (see \cref{remark:shfiting-rademacher}), these proofs can be easily modified to handle hypotheses shifted by the constant function $f_{\theta_0}$.
}

\section{Knowledge distillation: a supervision complexity lens}
\label{sec:kd-analysis}
We now turn to \updatesKD{KD}{knowledge distillation}, 
and explore how supervision complexity affects student's generalization.
We show that student's generalization depends on three terms:
the \emph{teacher generalization}, 
the student's \emph{margin} with respect to the teacher predictions,
and the complexity of the teacher's predictions.

\subsection{Trade-off between teacher accuracy, margin, and complexity}
Consider the binary classification setting of \S\ref{sec:analysis}, and a fixed teacher $g : \XX \rightarrow \bR$ that outputs a logit.
Let $\{(X_i, Y^*_i)\}_{i \in [n]}$ be $n$ i.i.d. labeled examples, where $Y^*_i \in \mathset{-1, 1}$ denotes the ground truth labels.
For temperature $\tau > 0$,
let $Y_i \triangleq 2\ \sigma(g(X_i)/\tau) - 1 \in [-1, +1]$ denote the teacher's \emph{soft predictions}, 
for sigmoid function $\sigma \colon z \mapsto (1 + \exp(-z))^{-1}$.
Our key observation is:
if the teacher predictions $Y_i$ are accurate enough
and have significantly lower complexity compared to ground truth labels 
$Y^*_i$, then a student kernel method (cf.~\cref{eq:kernel-method-general}) trained with $Y_i$  
can generalize better than the one trained with $Y^*_i$. 
The following result quantifies the trade-off between teacher accuracy, student prediction margin, and teacher prediction complexity (see Appendix~\ref{appen:prop:distillation-error-wrt-dataset-labels} for a proof).
\begin{proposition}
Assume that $\kappa=\sup_{x \in \XX} k(x,x) < \infty$ and $K$ is full rank almost surely, $\ell(y, y') \ge 0, \forall y, y' \in \YY$, and $\ell(y,y) = 0, \forall y\in\YY$.
Let $Y_i$ and $Y^*_i$ be defined as above.
Let $M_0 = \left\lceil{\gamma\sqrt{n}}/{(2\sqrt{\kappa})}\right\rceil$.
Then, with probability at least $1-\delta$, any solution $f^*$ of problem \eqref{eq:kernel-method-general} satisfies
\begin{align*}
    \underbrace{\mathbb{P}_{X,Y^*}(Y^* f^*(X) \le 0)}_{\text{student risk}} &\le \underbrace{\mathbb{P}_{X,Y^*}(Y^* g(X) \le 0)}_{\text{teacher risk}}\ \ + \underbrace{\frac{1}{n}\sum\nolimits_{i=1}^n \phi_\gamma(\sign\rbr{Y_i} f^*(X_i))}_{\text{student's empirical margin loss w.r.t. teacher predictions}}\\
    &\quad+\underbrace{{\rbr{2\sqrt{\bs{Y}^\top K^{-1} \bs{Y}}+2}} \sqrt{\trace\rbr{K}}/{(\gamma n)}}_{\text{complexity of teacher's predictions}}
    +3\sqrt{{\ln\rbr{2M_0/\delta}}/{(2n)}}.
\end{align*}
\label{prop:distillation-error-wrt-dataset-labels}
\end{proposition}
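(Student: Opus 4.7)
The plan is to reduce Proposition~\ref{prop:distillation-error-wrt-dataset-labels} to an application of Theorem~\ref{thm:margin-bound-label-complexity} via a simple event decomposition that splits the student's risk on the true labels into (i) the teacher's risk on the true labels and (ii) a ``disagreement'' event between the student and the teacher. First I would observe the pointwise implication: whenever the student and the teacher's sign agree, the student is correct on any example on which the teacher is correct. More precisely, for any $x$ and any $y^*\in\{-1,+1\}$,
\begin{equation*}
\{y^* f^*(x) \le 0\} \subseteq \{y^* g(x) \le 0\} \,\cup\, \{\sign(g(x))\, f^*(x) \le 0\},
\end{equation*}
because if both events on the right fail, then $\sign(g(x)) = y^*$ and $\sign(f^*(x)) = \sign(g(x))$, hence $y^* f^*(x) > 0$. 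Taking probabilities and applying the union bound yields
\begin{equation*}
\mathbb{P}_{X,Y^*}(Y^* f^*(X) \le 0) \le \mathbb{P}_{X,Y^*}(Y^* g(X) \le 0) + \mathbb{P}_X(\sign(g(X))\, f^*(X) \le 0).
\end{equation*}

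The second step is the key identification. By construction $Y_i = 2\sigma(g(X_i)/\tau) - 1$, and since $\sigma$ is strictly monotone with $\sigma(0)=1/2$, we have $\sign(Y_i) = \sign(g(X_i))$ on the training sample, and analogously $\sign(Y) = \sign(g(X))$ for a fresh draw (up to a measure-zero set where $g$ vanishes, which does not affect the probability bound). Moreover, since $Y\in[-1,1]$, the event $\{Y f^*(X)\le 0\}$ coincides with $\{\sign(Y) f^*(X)\le 0\} = \{\sign(g(X))f^*(X)\le 0\}$ away from the measure-zero set where $Y=0$. Thus the disagreement probability above equals $\mathbb{P}_{X,Y}(Y f^*(X)\le 0)$, which is exactly the quantity bounded in Theorem~\ref{thm:margin-bound-label-complexity}.

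The third step is to invoke Theorem~\ref{thm:margin-bound-label-complexity} directly with targets $Y_i$ in place of the ground-truth labels. Its hypotheses (bounded kernel diagonal, a.s.\ full-rank $K$, nonnegative loss vanishing on the diagonal) are inherited from the assumptions of the proposition, and its training sample is precisely $\{(X_i,Y_i)\}_{i\in[n]}$, so we obtain
\begin{equation*}
\mathbb{P}_{X,Y}(Y f^*(X)\le 0) \le \tfrac{1}{n}\sum_{i=1}^n \phi_\gamma(\sign(Y_i) f^*(X_i)) + \tfrac{2\sqrt{\bs{Y}^\top K^{-1}\bs{Y}}+2}{\gamma n}\sqrt{\trace(K)} + 3\sqrt{\tfrac{\ln(2M_0/\delta)}{2n}},
\end{equation*}
with $M_0 = \lceil \gamma\sqrt{n}/(2\sqrt{\kappa})\rceil$. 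Substituting this bound into the union-bound inequality from the first step yields the claimed inequality.

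There is no substantive analytic obstacle here: the proof is a short reduction, and the only subtlety is bookkeeping around the sign map. In particular, one must be careful that $\{Y f^*(X)\le 0\}$ and $\{\sign(Y) f^*(X)\le 0\}$ agree up to measure zero even though $Y\in[-1,1]$ is not $\pm 1$-valued, and that the substitution of $Y_i$ for the ``$Y_i$'' in the statement of Theorem~\ref{thm:margin-bound-label-complexity} is legitimate because that theorem was proved for general $\YY\subset\bR$. Once these points are handled, the constants, the norm-complexity term $\sqrt{\bs{Y}^\top K^{-1}\bs{Y}}$, and the $M_0$ appearing in the proposition all transfer verbatim from Theorem~\ref{thm:margin-bound-label-complexity}.
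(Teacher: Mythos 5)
Your proposal is correct and follows essentially the same route as the paper: the paper likewise splits the student's risk into the teacher's risk plus the teacher--student disagreement probability $\mathbb{P}_X(g(X) f^*(X) \le 0)$, identifies that event with $\mathbb{P}_{X,Y}(Y f^*(X) \le 0)$ via $\sign(Y) = \sign(g(X))$, and bounds it with Theorem~\ref{thm:margin-bound-label-complexity}. Your version merely makes the sign bookkeeping more explicit (and in fact the two events coincide exactly, not just up to measure zero, since $Y = 2\sigma(g(X)/\tau)-1$ has the same sign as $g(X)$ everywhere).
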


Note that a similar result is easy to establish for multiclass classification  using \cref{thm:margin-bound-label-complexity-vectorcase}.
The first term in the above accounts for the misclassification rate of the teacher.
While this term is not irreducible (it is possible for a student to perform better than its teacher), generally a student performs worse that its teacher, especially when there is a significant teacher-student capacity gap.
The second term is student's empirical margin loss w.r.t. teacher predictions. This captures the price of making teacher predictions too soft.
Intuitively, the softer (i.e., closer to zero) teacher predictions are, the harder it is for the student to learn the classification rule.
The third term accounts for the supervision complexity and the margin parameter $\gamma$.
Thus, one has to choose $\gamma$ carefully to achieve a good balance between empirical margin loss and margin-normalized supervision complexity.

\textbf{The effect of temperature.~}
For a fixed margin parameter $\gamma > 0$, increasing the temperature $\tau$ makes teacher's predictions $Y_i$ softer.
On the one hand, the reduced scale decreases the supervision complexity $\bs{Y}^\top K^{-1} \bs{Y}$.
Moreover, we shall see that in the case of neural networks the complexity decreases even further due to $\bs{Y}$ becoming more aligned with top eigenvectors of $K$.
On the other hand, the scale of predictions of the (possibly interpolating) student $f^*$ will decrease too, increasing the empirical margin loss.
This suggests that setting the value of $\tau$ is not trivial: the optimal value can be different based on the kernel $k$ and teacher logits $g(X_i)$.

\subsection{From Offline to Online knowledge distillation}
\label{sec:online-kd}
We identified that supervision complexity plays a key role in determining the efficacy of a distillation procedure.
The supervision from a fully trained teacher model can prove to be very complex for a student model in an early stage of its training (\cref{fig:sup-res-complexity-comparison-test}).
This raises the question: 
\emph{is there value in providing progressively difficult supervision to the student}? 
In this section, we describe a simple online distillation method, where 
the the teacher is updated during the student training.

Over the course of their training, neural models learn functions of increasing complexity~\citep{Nakkiran:2019}. This provides a natural way to construct a set of teachers with varying prediction complexities.
Similar to \citet{jin2019rco}, for practical considerations of not training the teacher and the student simultaneously, we assume the availability of teacher checkpoints over the course of its training.
Given $m$ teacher checkpoints at times $\mathcal{T} = \{t_i\}_{i \in [m]}$,
during the $t$-th step of distillation, the student receives supervision from the teacher checkpoint at time $\min\{t' \in \mathcal{T}: t' > t\}$.
Note that the student is trained for the same number of epochs in total as in offline distillation.
Throughout the paper we use the term ``online distillation'' for this approach (cf.~Algorithm~\ref{alg:online-distillation} of \cref{app:hyperparams}).

Online distillation can be seen as guiding the student network to follow the teacher's trajectory in \emph{function space} (see \cref{fig:offline-vs-online}).
Given that NTK can be interpreted as a principled notion of example similarity and controls which examples affect each other during training~\citep{charpiat2019input}, it is desirable for the student to have an NTK similar to that of its teacher at each time step.
To test whether online distillation also transfers NTK, we propose to measure similarity between the final student and final teacher NTKs.
For computational efficiency we work with NTK matrices corresponding to a batch of $b$ examples ($bd \times bd$ matrices).
Explicit computation of even batch NTK matrices can be costly, especially when the number of classes $d$ is large.
We propose to view student and teacher batch NTK matrices (denoted by $K_f$ and $K_g$ respectively) as operators and measure their similarity by comparing their behavior on random vectors:
\begin{equation}
\mathrm{sim}(K_f, K_g)=\E_{v \sim \mathcal{N}(0, I)}\sbr{\frac{\langle K_f v, K_g v\rangle}{\nbr{K_f v}\nbr{K_g v}}}.
\end{equation}
Note that the cosine distance is used to account for scale differences of $K_g$ and $K_f$.
The kernel-vector products appearing in this similarity measure above can be computed efficiently without explicitly constructing the kernel matrices.
For example, $K_f v =\nabla_{\theta} f_\theta(x_{1:b})^\top\rbr{\nabla_{\theta} f_\theta(x_{1:b}) v}$ can be computed with one vector-Jacobian product followed by a Jacobian-vector product.
The former can be computed efficiently using backpropagation, while the latter can be computed efficiently using forward-mode differentiation.

\section{Experimental results}
\label{sec:experiments}
\begin{table}[!t]
    \centering
    \caption{Results on CIFAR-100.}
    \label{tbl:results-cifar100}
    \begin{tabular}{@{}lcccccc@{}}
    \toprule
    {\bf Setting} & {\bf No KD} & \multicolumn{2}{c}{\bf Offline KD} & \multicolumn{2}{c}{\bf Online KD} & {\bf Teacher} \\
     &  & $\tau = 1$ & $\tau = 4$ & $\tau = 1$ & $\tau = 4$ & \\
    \midrule
    ResNet-56 $\rightarrow$ LeNet-5x8 & 47.3 $\pm$ 0.6 & 50.1 $\pm$ 0.4 & 59.9 $\pm$ 0.2 & 61.9 $\pm$ 0.2 & \best{66.1 $\pm$ 0.4} & 72.0\\
    ResNet-56 $\rightarrow$ ResNet-20 & 67.7 $\pm$ 0.5 & 68.2 $\pm$ 0.3 & \best{71.6 $\pm$ 0.2} & 69.6 $\pm$ 0.3 & 71.4 $\pm$ 0.3 & 72.0\\
    ResNet-110 $\rightarrow$ LeNet-5x8 & 47.2 $\pm$ 0.5 & 48.6 $\pm$ 0.8 & 59.0 $\pm$ 0.3 & 60.8 $\pm$ 0.2 & \best{65.8 $\pm$ 0.2} & 73.4\\
    ResNet-110 $\rightarrow$ ResNet-20 & 67.8 $\pm$ 0.3 & 67.8 $\pm$ 0.2 & 71.2 $\pm$ 0.0 & 69.0 $\pm$ 0.3 & \best{71.4 $\pm$ 0.0} & 73.4\\
    \bottomrule
    \end{tabular}%
\end{table}

\begin{table}[!t]
    \small
    \centering
    \caption{Results on Tiny ImageNet.}
    \label{tbl:results-imagenet}
    \scalebox{0.97}{
    \begin{tabular}{@{}lcccccc@{}}
    \toprule
    {\bf Setting} & {\bf No KD} & \multicolumn{2}{c}{\bf Offline KD} & \multicolumn{2}{c}{\bf Online KD} & {\bf Teacher} \\
    &  & $\tau = 1$ & $\tau = 2$ & $\tau = 1$ & $\tau = 2$ & \\
    \midrule
    MobileNet-V3-125 $\rightarrow$ MobileNet-V3-35 & 58.5 $\pm$ 0.2 & 59.2 $\pm$ 0.1 & 60.2 $\pm$ 0.2 & 60.7 $\pm$ 0.2 & \best{62.3 $\pm$ 0.3} & 62.7 \\
    ResNet-101 $\rightarrow$ MobileNet-V3-35 & 58.5 $\pm$ 0.2& 59.4 $\pm$ 0.5 & 61.6 $\pm$ 0.2 & 61.1 $\pm$ 0.3 & \best{62.0 $\pm$ 0.3} & 66.0 \\
    MobileNet-V3-125 $\rightarrow$ VGG-16 & 48.9 $\pm$ 0.3 & 54.1 $\pm$ 0.4 & 59.4 $\pm$ 0.4 & 58.9 $\pm$ 0.7 & \best{62.3 $\pm$ 0.3} & 62.7 \\
    ResNet-101 $\rightarrow$ VGG-16 & 48.6 $\pm$ 0.4 & 53.1 $\pm$ 0.4 & 60.6 $\pm$ 0.2 & 60.4 $\pm$ 0.2 & \best{64.0 $\pm$ 0.1} & 66.0 \\
    \bottomrule
    \end{tabular}%
    }
\end{table}
We now present experimental results to showcase the importance of supervision complexity in distillation, and to establish efficacy of online distillation.

\subsection{Experimental setup}
We consider standard image classification benchmarks: CIFAR-10, CIFAR-100, and Tiny-ImageNet.
Additionally, we derive a binary classification task from CIFAR-100 by grouping the first and last 50 classes into two meta-classes.
We consider teacher and student architectures that are ResNets~\citep{he2016deep}, VGGs~\citep{vgg2014simonyan2014}, and MobileNets~\citep{howard2019searching} of various depths.
As a student architecture with relatively weaker inductive biases, we also consider the LeNet-5~\citep{lecun1998gradient} with 8 times wider hidden layers.
We use standard hyperparameters (Appendix~\ref{app:hyperparams}) to train these models.
We compare
(1) regular one-hot training (without any distillation),
(2) regular offline distillation using the temperature-scaled softmax cross-entropy,
and (3) online distillation using the same loss.
For CIFAR-10 and binary CIFAR-100, we also consider training with mean-squared error (MSE) loss and its corresponding KD loss:
\begin{align}
    \mathcal{L}_\text{mse}( f_\theta) &= \frac{1}{2n}\sum_{i=1}^n \nbr{y_i - f_\theta(x_i)}_2^2,
    \quad
    \mathcal{L}_\text{kd-mse}( f_\theta; g, \tau ) = \frac{\tau}{2n}\sum_{i=1}^n \nbr{\sigma(g(x_i) / \tau) - f_\theta(x_i)}_2^2.\label{eq:kd-mse}
\end{align}
The MSE loss allows for interpolation in case of one-hot labels $y_i$, making it amenable to the analysis in \S\ref{sec:analysis} and \S\ref{sec:kd-analysis}. 
Moreover,~\citet{hui2021evaluation} show that under standard training, the CE and MSE losses perform similarly; as we shall see, the same is true for distillation as well.

\rebuttal{
As mentioned in \cref{sec:intro}, in all KD experiments student networks receive supervision only through a knowledge distillation loss (i.e., dataset labels are not used).
This choice help us decrease differences between the theory and experiments.
Furthermore, in our preliminary experiments we observed that this choice does not result in student performance degradation (see \cref{app:add-results}).
}
\subsection{Results and discussion}
Tables~\ref{tbl:results-cifar100}, 
\ref{tbl:results-imagenet}, \ref{tbl:results-cifar100-binary} 
and Table \ref{tbl:results-cifar10} (\cref{app:add-results})
present the results 
(mean and standard deviation of test accuracy over 3 random trials).
First, we see that 
online distillation with proper temperature scaling 
typically
yields the 
most accurate
student.
The gains over regular distillation are particularly pronounced when there is a large teacher-student gap.
For example, on CIFAR-100, 
ResNet to LeNet distillation
with temperature scaling
appears to hit a limit of $\sim 60\%$ accuracy.
Online distillation however manages to further increase accuracy  by $+6\%$, which is a
$\sim 20\%$ increase compared to standard training.
Second, the 
similar results on binary CIFAR-100 shows that ``dark knowledge'' in the form of 
membership information in multiple classes is not necessary for distillation to succeed.

The results also demonstrate that knowledge distillation with the MSE loss of \eqref{eq:kd-mse} has a qualitatively similar behavior to KD with CE objective.
We use these MSE models to highlight the role of supervision complexity.
As an instructive case, we consider a LeNet-5x8 network trained on binary CIFAR-100 with the standard MSE loss function.
For a given checkpoint of this network and a given set of $m$ labeled (\emph{test}) examples $\mathset{(X_i,Y_i)}_{i \in [m]}$, we compute the \emph{adjusted supervision complexity} $1/n \sqrt{(\bs{Y} - f(X_{1:m}))^\top K^{-1} (\bs{Y} - f(X_{1:m})) \cdot \trace\rbr{K}}$, where $f$ denotes the current prediction function,
and $K$ is derived from the current NTK.
Note that the subtraction of initial predictions is the appropriate way to measure complexity given the form of the optimization problem~\eqref{eq:kernel-method-NN-fn-space}.
As the training NTK matrix becomes aligned with dataset labels during training (see~\citet{baratin2021implicit} and \cref{fig:ntk-alignment-with-labels} of \cref{app:add-results}), we pick $\mathset{X_i}_{i\in m}$ to be a set of $2^{12}$ 
\emph{test} examples.

\begin{table}[!t]
    \centering
    \caption{Results on binary CIFAR-100. Every second line is an MSE student.}
    \label{tbl:results-cifar100-binary}
    
    \begin{tabular}{@{}lcccccc@{}}
    \toprule
    {\bf Setting} & {\bf No KD} & \multicolumn{2}{c}{\bf Offline KD} & \multicolumn{2}{c}{\bf Online KD} & {\bf Teacher} \\
    &  & $\tau = 1$ & $\tau = 4$ & $\tau = 1$ & $\tau = 4$ & \\
    \midrule
    ResNet-56 $\rightarrow$ LeNet-5x8 & 71.5 $\pm$ 0.2 & 72.4 $\pm$ 0.1 & 73.6 $\pm$ 0.2 & 74.7 $\pm$ 0.2 & \best{76.1 $\pm$ 0.2} & 77.9 \\
    ResNet-56 $\rightarrow$ LeNet-5x8 & 71.5 $\pm$ 0.4 & 71.9 $\pm$ 0.3 & 73.0 $\pm$ 0.3 & \best{75.1 $\pm$ 0.3} & \best{75.1 $\pm$ 0.1} & 77.9 \\
    \midrule
    ResNet-56 $\rightarrow$ ResNet-20 & 75.8 $\pm$ 0.5 & 76.1 $\pm$ 0.2 & 77.1 $\pm$ 0.6 & 77.8 $\pm$ 0.3 & \best{78.1 $\pm$ 0.1} & 77.9 \\
    ResNet-56 $\rightarrow$ ResNet-20 & 76.1 $\pm$ 0.5 & 76.0 $\pm$ 0.2 & 77.4 $\pm$ 0.3 & 78.0 $\pm$ 0.2 & \best{78.4 $\pm$ 0.3} & 77.9 \\
    \midrule
    ResNet-110 $\rightarrow$ LeNet-5x8 & 71.4 $\pm$ 0.4 & 71.9 $\pm$ 0.1 & 72.9 $\pm$ 0.3 & 74.3 $\pm$ 0.3 & \best{75.4 $\pm$ 0.3} & 78.4 \\
    ResNet-110 $\rightarrow$ LeNet-5x8 & 71.6 $\pm$ 0.2 & 71.5 $\pm$ 0.4 & 72.6 $\pm$ 0.4 & \best{74.8 $\pm$ 0.4} & 74.6 $\pm$ 0.2 & 78.4 \\
    \midrule
    ResNet-110 $\rightarrow$ ResNet-20 & 76.0 $\pm$ 0.3 & 76.0 $\pm$ 0.2 & 77.0 $\pm$ 0.1 & 77.3 $\pm$ 0.2 & \best{78.0 $\pm$ 0.4} & 78.4 \\
    ResNet-110 $\rightarrow$ ResNet-20 & 76.1 $\pm$ 0.2 & 76.4 $\pm$ 0.3 & 77.6 $\pm$ 0.3 & 77.9 $\pm$ 0.2 & \best{78.1 $\pm$ 0.1} & 78.4 \\
    \bottomrule
    \end{tabular}%
\end{table}

\textbf{Comparison of supervision complexities.~} We compare the adjusted supervision complexities of random labels, dataset labels, and predictions of an offline and online ResNet-56 teacher predictions with respect to various checkpoints of the LeNet-5x8 network.
The results presented in \cref{fig:sup-res-complexity-comparison-test} indicate that 
the dataset labels and offline teacher predictions are as complex as random labels in the beginning.
After some initial decline, the complexity of these targets increases as the network starts to overfit.
Given the lower bound on the supervision complexity of random labels (see \S\ref{sec:analysis}), this increase means that the NTK spectrum becomes less uniform (see \cref{fig:ntk-condition-number} of \cref{app:add-results}).

In contrast to these static targets, the complexity of the online teacher predictions smoothly increases, and is significantly smaller for most of the epochs.
To account for softness differences of the various targets, we consider plotting the adjusted supervision complexity normalized by the target norm $\sqrt{m}\nbr{Y}_2$.
As shown in \cref{fig:normalized-sup-res-complexity-comparison-test}, the normalized complexity of offline and online teacher predictions is smaller compared to the dataset labels, indicating a better alignment with top eigenvectors of the LeNet  NTK.
Importantly, we see that the predictions of an online teacher have significantly lower normalized complexity in the critical early stages of training.
Similar observations hold when complexities are measured with respect to a ResNet-20 network (see \cref{app:add-results}).

\textbf{Average teacher complexity.~} \citet{ren2022better} observed that teacher predictions fluctuate over time, and showed that using exponentially averaged teachers improves knowledge distillation. \cref{fig:teacher-averaging} demonstrates that the supervision complexity of an online teacher predictions is always slightly larger than that of the average of predictions of teachers of the last 10 preceding epochs.

\begin{figure}[!t]
    \centering
    \begin{subfigure}{0.32\textwidth}
        \includegraphics[width=\textwidth]{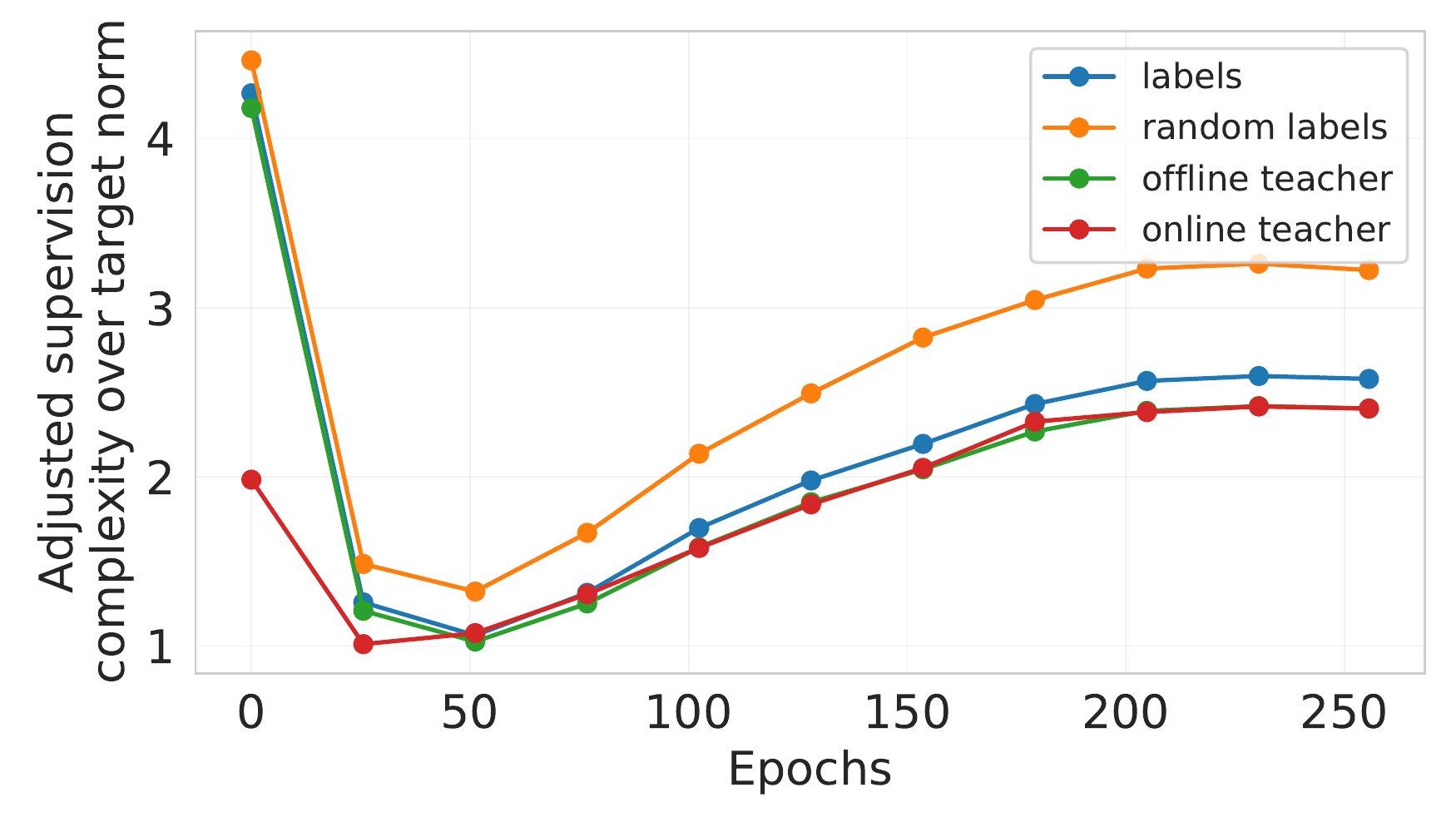}
        \caption{{\centering Normalized complexity}}
        \label{fig:normalized-sup-res-complexity-comparison-test}
    \end{subfigure}
    \hspace{0.25em}
    \begin{subfigure}{0.32\textwidth}
        \includegraphics[width=\textwidth]{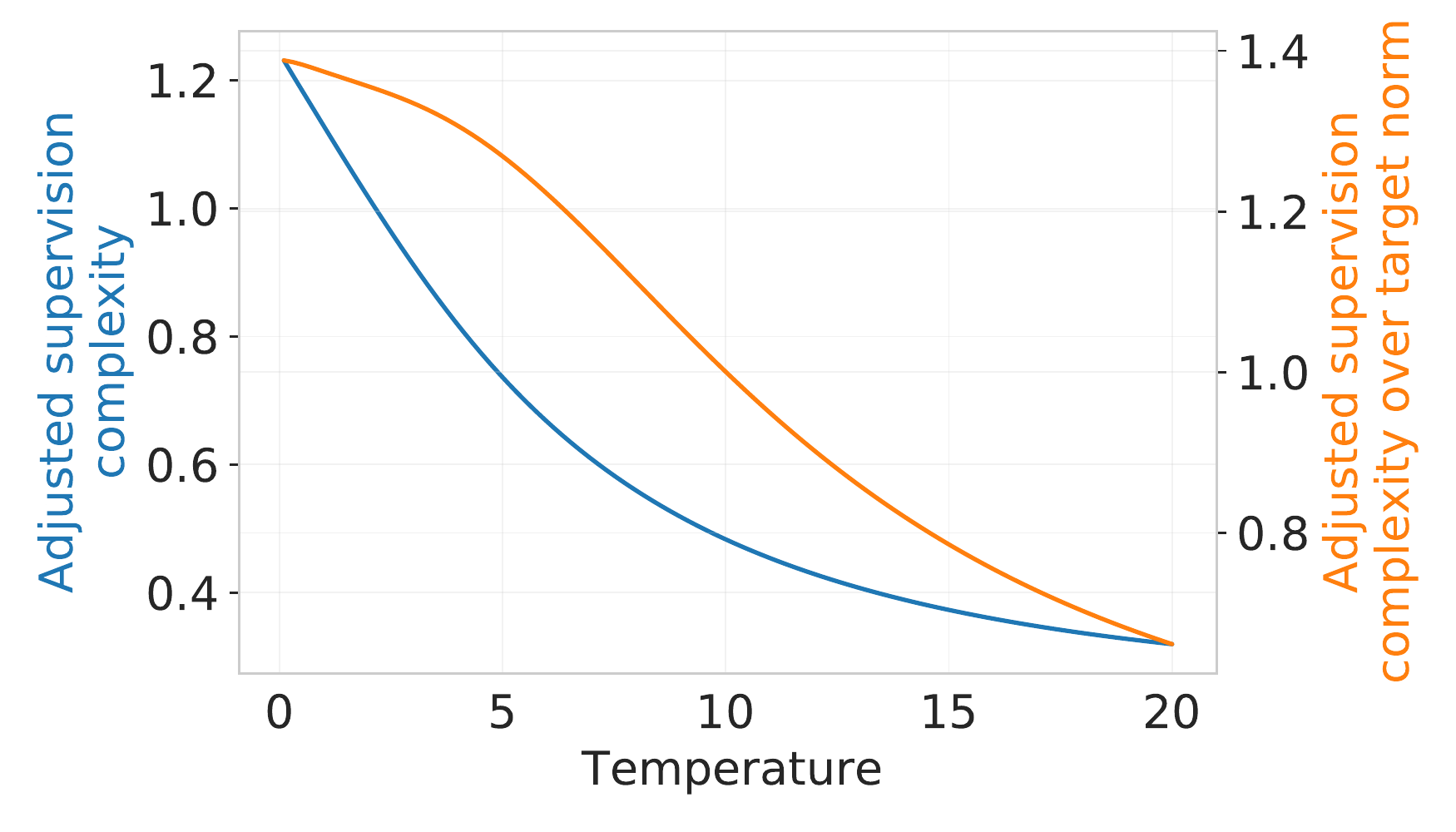}
        \caption{{\centering Temperature effect}}
        \label{fig:temperature-effect}
    \end{subfigure}
    \hspace{0.25em}
    \begin{subfigure}{0.32\textwidth}
        \includegraphics[width=\textwidth]{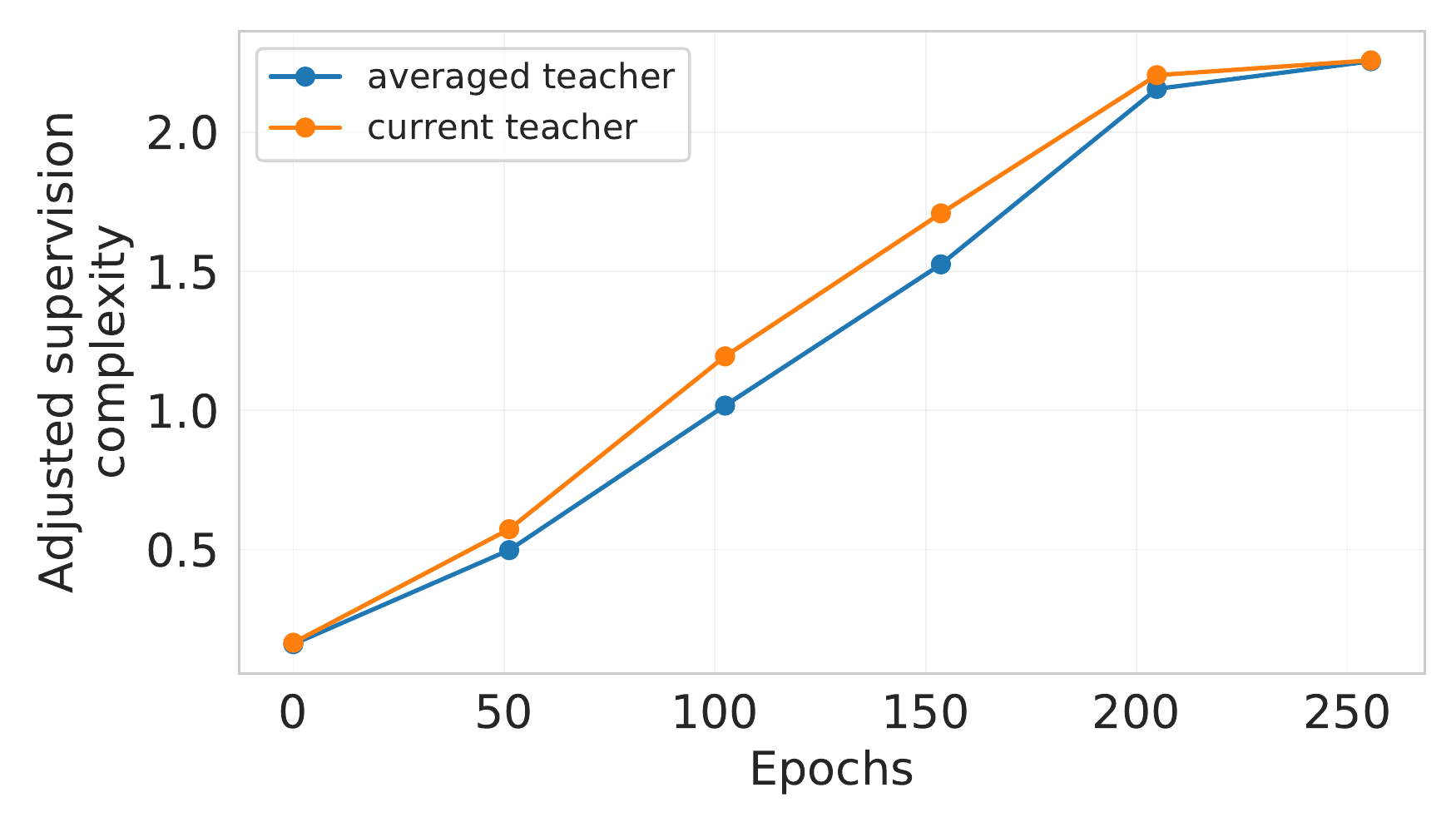}
        \caption{{\centering Teacher averaging}}
        \label{fig:teacher-averaging}
    \end{subfigure}
    \caption{Supervision complexity for various targets. \emph{On the left:} Normalized adjusted supervision complexities of various targets with respect to a LeNet-5x8 network at different stages of its training. \emph{In the middle:} The effect of temperature on the supervision complexity of an offline teacher for a LeNet-5x8 after training for 25 epochs. \emph{On the right:} The effect of averaging teacher predictions.}
\end{figure}

\textbf{Effect of temperature scaling.~} As discussed earlier, higher temperature makes the teacher predictions softer, decreasing their norm.
This has a large effect on supervision complexity (\cref{fig:temperature-effect}).
Even when one controls for the norm of the predictions, the complexity still decreases (\cref{fig:temperature-effect}).

\textbf{NTK similarity.~} Remarkably, we observe that across all of our experiments, the final test accuracy of the student is strongly correlated with the similarity of final teacher and student NTKs (see Figures \ref{fig:ntk-matching-comparision},\ref{fig:ntk-matching-comparision-1-appendix}, and \ref{fig:ntk-matching-comparision-2-appendix}).
This cannot be explained by better matching the teacher predictions.
In fact, we see that the final fidelity 
(the rate of classification agreement of a teacher-student pair) measured on training set has no clear relationship with test accuracy.
Furthermore, we see that online KD results in better NTK transfer without an explicit regularization loss enforcing such transfer.

\begin{figure}[!t]
    \centering
    \begin{subfigure}{0.3\textwidth}
        \includegraphics[width=\textwidth]{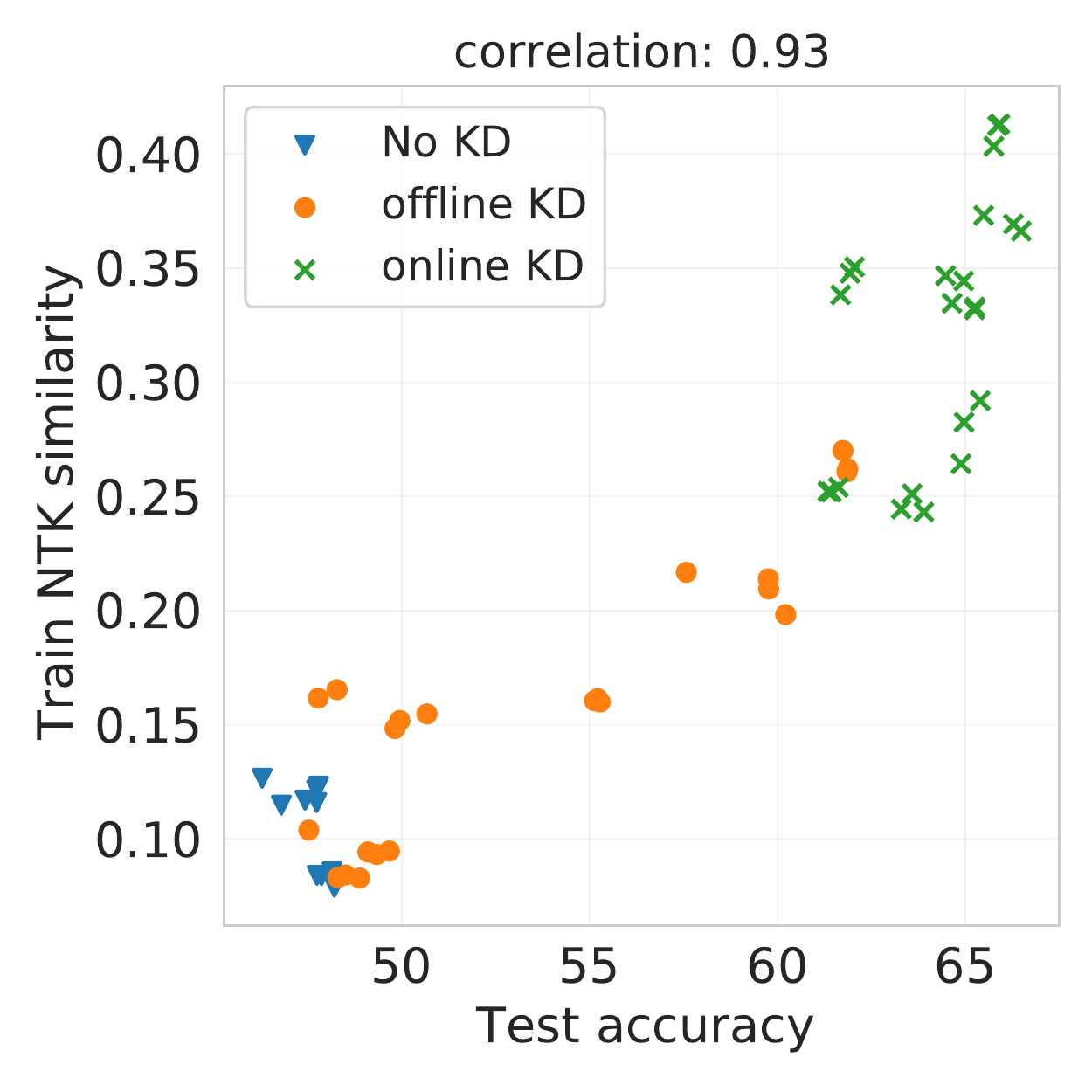}
        \caption{{\centering LeNet-5x8 student}}
    \end{subfigure}
    \hspace{1em}
    \begin{subfigure}{0.3\textwidth}
        \includegraphics[width=\textwidth]{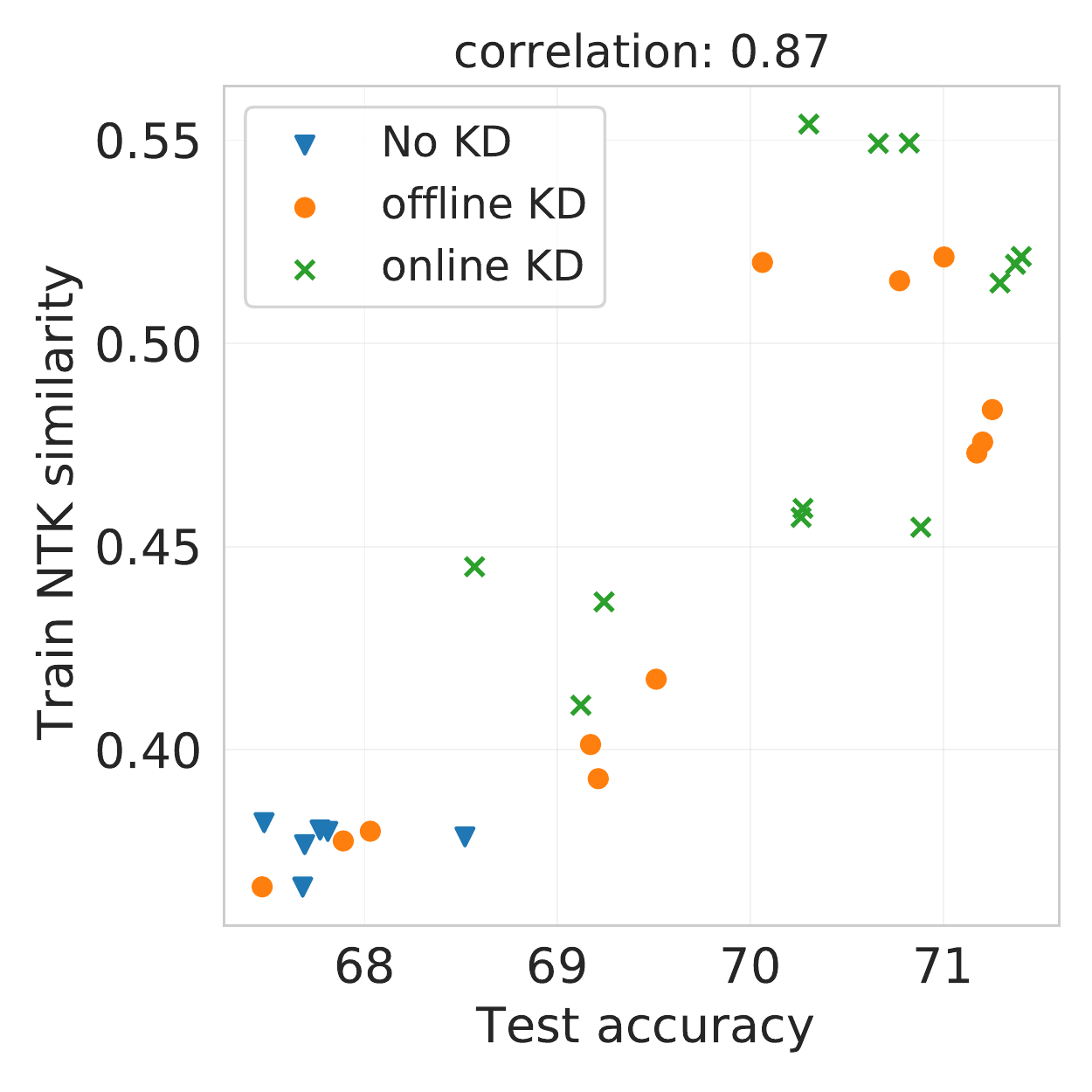}
        \caption{{\centering ResNet-20 student}}
    \end{subfigure}
    \hspace{1em}
    \begin{subfigure}{0.3\textwidth}
        \includegraphics[width=\textwidth]{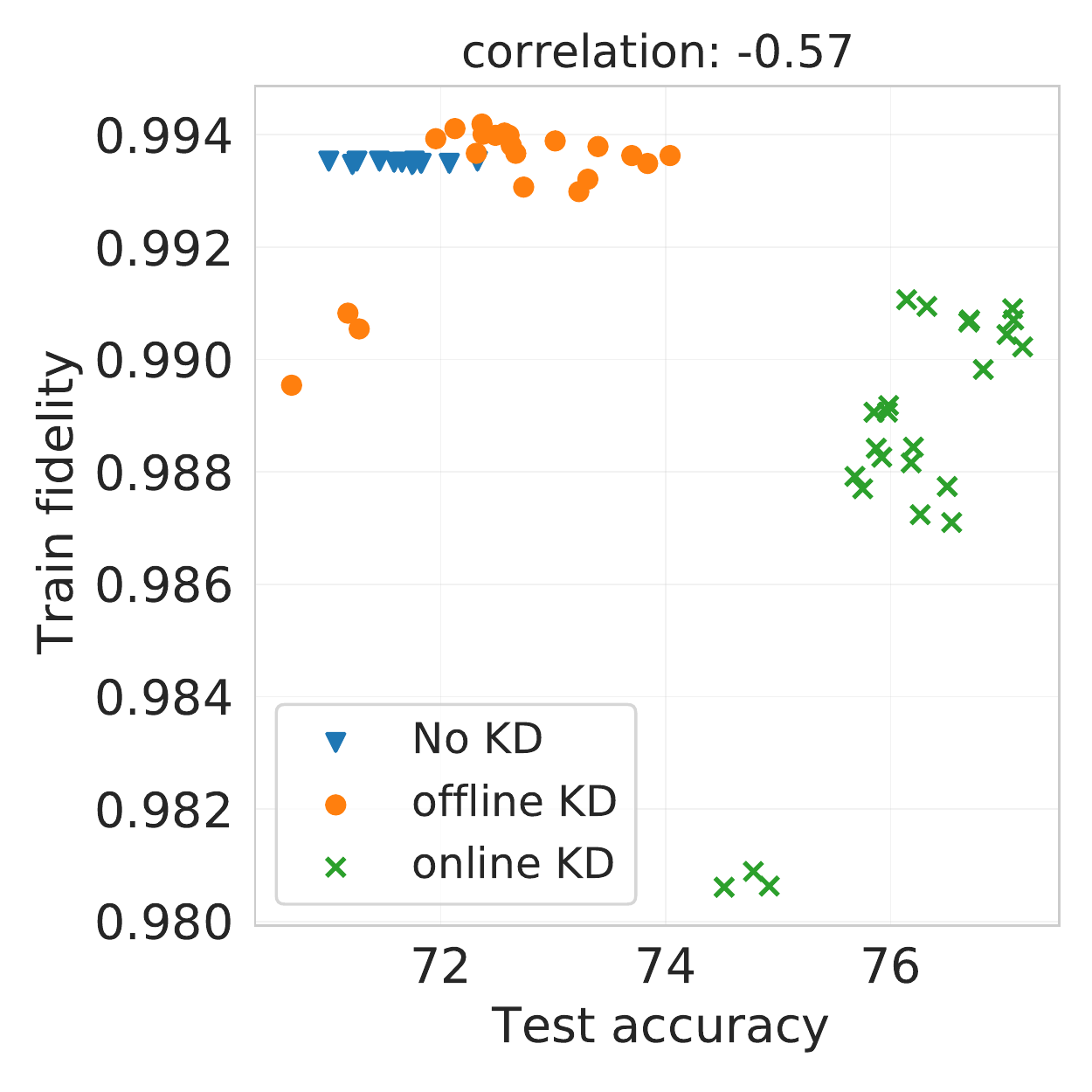}
        \caption{{\centering LeNet-5x8 student}}
    \end{subfigure}
    \caption{Relationship between test accuracy, train NTK similarity, and train fidelity for CIFAR-100 students training with either ResNet-56 teacher (panels (a) and (c)) or ResNet-110 (panel (b)).}
    \label{fig:ntk-matching-comparision}
\end{figure}

\section{Related work}
\label{sec:related}
Due to space constraints, we briefly discuss the existing works that are the most closely related to our exploration in this paper (see Appendix~\ref{appen:related} for a more comprehensive account of related work.)

\textbf{Supervision complexity.}~The key quantity in our work is supervision complexity $\bs{Y}^\top K^{-1}\bs{Y}$.
\citet{cristianini2001kernel} introduced a related quantity $\bs{Y}^\top K \bs{Y}$ called \emph{kernel-target alignment}
and derived a generalization bound with it for expected Parzen window classifiers. 
\citet{deshpande2021linearized} use kernel-target alignment 
for model selection in transfer learning.
\citet{ortiz2021can} demonstrate that when NTK-target alignment is high, learning is faster and generalizes better.
\citet{arora2019fine} prove a generalization bound for overparameterized two-layer neural networks with NTK parameterization, trained with gradient flow.
Their bound is 
roughly
${\big(\bs{Y}^\top (K^\infty)^{-1}\bs{Y}\big)^{1/2}} / \sqrt{n}$, where $K^\infty$ is the \emph{expected NTK matrix} at a random initialization.
Our bound of \cref{thm:margin-bound-label-complexity} can be seen as a generalization of this result for all kernel methods, including linearized neural networks of any depth and sufficient width, with the only difference of using the empirical NTK matrix.
\citet{belkin2018understand} warns that bounds based on RKHS complexity of the learned function can fail to explain the good generalization of kernel methods under label noise.

\citet{mobahi2020self} prove that for kernel methods with RKHS norm regularization, self-distillation increases regularization strength.
\citet{phuong19understand}, while studying self-distillation of deep linear networks, derive a bound of the transfer risk that depends on the distribution of the acute angle between teacher parameters and data points. This is in spirit related to supervision complexity as it measures an ``alignment '' between the distillation objective and data.
\citet{ji2020knowledge} extend this results to linearized neural networks, showing that $\Delta_z^\top \bs{K}^{-1} \Delta_z$, where $\Delta_z$ is the logit change during training, plays a key role in estimating the bound.
Their bound is qualitatively different than ours, and $\Delta_z^\top \bs{K}^{-1} \Delta_z$ becomes ill-defined for hard labels.

\textbf{Non-static teachers.}~
Multiple works consider various approaches that train multiple students simultaneously to distill either from each other or from an ensemble~\citep[see, e.g.,][]{zhang2018deep, anil2018online, guo2020online}.
\citet{zhou2018rocket} and \citet{shi2021prokt} train teacher and student together while having a common architecture trunk and regularizing teacher predictions to close that of students, respectively. 
\citet{jin2019rco} study \textit{route constrained optimization} which is closest to the online distillation in \S\ref{sec:online-kd}.
They employ a few teacher checkpoints to perform a multi-round KD.
We complement this line of work by highlighting the role of supervision complexity and by demonstrating that online distillation can be very powerful for students with weak inductive biases.

\section{Conclusion and future work}
We presented a treatment of knowledge distillation through the lens of supervision complexity.
We formalized how the student generalization is controlled by three key quantities:
the teacher's accuracy,
the student's margin with respect to the teacher labels,
and the supervision complexity of the teacher labels under the student's kernel.
This motivated an online distillation procedure that gradually increases the complexity of the targets that the student fits.

There are several potential directions for future work.
\emph{Adaptive temperature} scaling for online distillation, where the teacher predictions are smoothened so as to ensure low target complexity, is one such direction.
Another avenue is to explore alternative ways to smoothen teacher prediction besides temperature scaling; e.g., can one perform \emph{sample-dependent} scaling?
There is large potential for improving online KD by making more informed choices for the frequency and positions of teacher checkpoints, and controlling how much the student is trained in between teacher updates.
Finally, while we demonstrated that online distillation results in a better alignment with the teacher's NTK matrix, understanding \emph{why} this happens is an open and interesting problem.

\bibliography{main}

\begin{thebibliography}{58}
\providecommand{\natexlab}[1]{#1}
\providecommand{\url}[1]{\texttt{#1}}
\expandafter\ifx\csname urlstyle\endcsname\relax
  \providecommand{\doi}[1]{doi: #1}\else
  \providecommand{\doi}{doi: \begingroup \urlstyle{rm}\Url}\fi

\bibitem[Allen{-}Zhu and Li(2020)]{Zhu:2020}
Zeyuan Allen{-}Zhu and Yuanzhi Li.
\newblock Towards understanding ensemble, knowledge distillation and
  self-distillation in deep learning.
\newblock \emph{CoRR}, abs/2012.09816, 2020.

\bibitem[Anil et~al.(2018)Anil, Pereyra, Passos, Orm{\'{a}}ndi, Dahl, and
  Hinton]{anil2018online}
Rohan Anil, Gabriel Pereyra, Alexandre Passos, R{\'{o}}bert Orm{\'{a}}ndi,
  George~E. Dahl, and Geoffrey~E. Hinton.
\newblock Large scale distributed neural network training through online
  distillation.
\newblock \emph{CoRR}, abs/1804.03235, 2018.

\bibitem[Arora et~al.(2019)Arora, Du, Hu, Li, and Wang]{arora2019fine}
Sanjeev Arora, Simon Du, Wei Hu, Zhiyuan Li, and Ruosong Wang.
\newblock Fine-grained analysis of optimization and generalization for
  overparameterized two-layer neural networks.
\newblock In \emph{International Conference on Machine Learning}, pages
  322--332. PMLR, 2019.

\bibitem[Baratin et~al.(2021)Baratin, George, Laurent, Hjelm, Lajoie, Vincent,
  and Lacoste-Julien]{baratin2021implicit}
Aristide Baratin, Thomas George, C{\'e}sar Laurent, R~Devon Hjelm, Guillaume
  Lajoie, Pascal Vincent, and Simon Lacoste-Julien.
\newblock Implicit regularization via neural feature alignment.
\newblock In \emph{International Conference on Artificial Intelligence and
  Statistics}, pages 2269--2277. PMLR, 2021.

\bibitem[Bartlett and Mendelson(2002)]{bartlett2002rademacher}
Peter~L Bartlett and Shahar Mendelson.
\newblock Rademacher and gaussian complexities: Risk bounds and structural
  results.
\newblock \emph{Journal of Machine Learning Research}, 3\penalty0
  (Nov):\penalty0 463--482, 2002.

\bibitem[Belkin et~al.(2018)Belkin, Ma, and Mandal]{belkin2018understand}
Mikhail Belkin, Siyuan Ma, and Soumik Mandal.
\newblock To understand deep learning we need to understand kernel learning.
\newblock In \emph{International Conference on Machine Learning}, pages
  541--549. PMLR, 2018.

\bibitem[Beyer et~al.(2022)Beyer, Zhai, Royer, Markeeva, Anil, and
  Kolesnikov]{beyer2022knowledge}
Lucas Beyer, Xiaohua Zhai, Am{\'e}lie Royer, Larisa Markeeva, Rohan Anil, and
  Alexander Kolesnikov.
\newblock Knowledge distillation: A good teacher is patient and consistent.
\newblock In \emph{Proceedings of the IEEE/CVF Conference on Computer Vision
  and Pattern Recognition}, pages 10925--10934, 2022.

\bibitem[Bucilu\v{a} et~al.(2006)Bucilu\v{a}, Caruana, and
  Niculescu-Mizil]{Bucilla:2006}
Cristian Bucilu\v{a}, Rich Caruana, and Alexandru Niculescu-Mizil.
\newblock Model compression.
\newblock In \emph{Proceedings of the 12th ACM SIGKDD International Conference
  on Knowledge Discovery and Data Mining}, KDD '06, page 535–541, New York,
  NY, USA, 2006. Association for Computing Machinery.
\newblock ISBN 1595933395.
\newblock \doi{10.1145/1150402.1150464}.

\bibitem[Charpiat et~al.(2019)Charpiat, Girard, Felardos, and
  Tarabalka]{charpiat2019input}
Guillaume Charpiat, Nicolas Girard, Loris Felardos, and Yuliya Tarabalka.
\newblock Input similarity from the neural network perspective.
\newblock \emph{Advances in Neural Information Processing Systems}, 32, 2019.

\bibitem[Chen et~al.(2020)Chen, Mei, Wang, Feng, and Chen]{chen2020online}
Defang Chen, Jian-Ping Mei, Can Wang, Yan Feng, and Chun Chen.
\newblock Online knowledge distillation with diverse peers.
\newblock In \emph{Proceedings of the AAAI Conference on Artificial
  Intelligence}, volume~34, pages 3430--3437, 2020.

\bibitem[Chen et~al.(2022)Chen, Mei, Zhang, Wang, Feng, and
  Chen]{chen2022knowledge}
Defang Chen, Jian-Ping Mei, Hailin Zhang, Can Wang, Yan Feng, and Chun Chen.
\newblock Knowledge distillation with the reused teacher classifier.
\newblock In \emph{Proceedings of the IEEE/CVF Conference on Computer Vision
  and Pattern Recognition}, pages 11933--11942, 2022.

\bibitem[Cho and Hariharan(2019)]{cho2019efficacy}
Jang~Hyun Cho and Bharath Hariharan.
\newblock On the efficacy of knowledge distillation.
\newblock In \emph{Proceedings of the IEEE/CVF international conference on
  computer vision}, pages 4794--4802, 2019.

\bibitem[Cristianini et~al.(2001)Cristianini, Shawe-Taylor, Elisseeff, and
  Kandola]{cristianini2001kernel}
Nello Cristianini, John Shawe-Taylor, Andre Elisseeff, and Jaz Kandola.
\newblock On kernel-target alignment.
\newblock \emph{Advances in neural information processing systems}, 14, 2001.

\bibitem[Dao et~al.(2021)Dao, Kamath, Syrgkanis, and Mackey]{Dao:2021}
Tri Dao, Govinda~M Kamath, Vasilis Syrgkanis, and Lester Mackey.
\newblock Knowledge distillation as semiparametric inference.
\newblock In \emph{International Conference on Learning Representations}, 2021.

\bibitem[Deshpande et~al.(2021)Deshpande, Achille, Ravichandran, Li, Zancato,
  Fowlkes, Bhotika, Soatto, and Perona]{deshpande2021linearized}
Aditya Deshpande, Alessandro Achille, Avinash Ravichandran, Hao Li, Luca
  Zancato, Charless Fowlkes, Rahul Bhotika, Stefano Soatto, and Pietro Perona.
\newblock A linearized framework and a new benchmark for model selection for
  fine-tuning.
\newblock \emph{arXiv preprint arXiv:2102.00084}, 2021.

\bibitem[Furlanello et~al.(2018)Furlanello, Lipton, Tschannen, Itti, and
  Anandkumar]{furlanello2018born}
Tommaso Furlanello, Zachary Lipton, Michael Tschannen, Laurent Itti, and Anima
  Anandkumar.
\newblock Born again neural networks.
\newblock In \emph{International Conference on Machine Learning}, pages
  1607--1616. PMLR, 2018.

\bibitem[Gou et~al.(2021)Gou, Yu, Maybank, and Tao]{gou2021knowledge}
Jianping Gou, Baosheng Yu, Stephen~J Maybank, and Dacheng Tao.
\newblock Knowledge distillation: A survey.
\newblock \emph{International Journal of Computer Vision}, 129\penalty0
  (6):\penalty0 1789--1819, 2021.

\bibitem[Gunasekar et~al.(2017)Gunasekar, Woodworth, Bhojanapalli, Neyshabur,
  and Srebro]{gunasekar2017implicit}
Suriya Gunasekar, Blake~E Woodworth, Srinadh Bhojanapalli, Behnam Neyshabur,
  and Nati Srebro.
\newblock Implicit regularization in matrix factorization.
\newblock \emph{Advances in Neural Information Processing Systems}, 30, 2017.

\bibitem[Guo et~al.(2020)Guo, Wang, Wu, Yu, Liang, Hu, and Luo]{guo2020online}
Qiushan Guo, Xinjiang Wang, Yichao Wu, Zhipeng Yu, Ding Liang, Xiaolin Hu, and
  Ping Luo.
\newblock Online knowledge distillation via collaborative learning.
\newblock In \emph{Proceedings of the IEEE/CVF Conference on Computer Vision
  and Pattern Recognition}, pages 11020--11029, 2020.

\bibitem[He and Ozay(2021)]{he2021feature}
Bobby He and Mete Ozay.
\newblock Feature kernel distillation.
\newblock In \emph{International Conference on Learning Representations}, 2021.

\bibitem[He et~al.(2016)He, Zhang, Ren, and Sun]{he2016deep}
Kaiming He, Xiangyu Zhang, Shaoqing Ren, and Jian Sun.
\newblock Deep residual learning for image recognition.
\newblock In \emph{Proceedings of the IEEE conference on computer vision and
  pattern recognition}, pages 770--778, 2016.

\bibitem[Hinton et~al.(2015)Hinton, Vinyals, Dean,
  et~al.]{hinton2015distilling}
Geoffrey Hinton, Oriol Vinyals, Jeff Dean, et~al.
\newblock Distilling the knowledge in a neural network.
\newblock \emph{arXiv preprint arXiv:1503.02531}, 2\penalty0 (7), 2015.

\bibitem[Howard et~al.(2019)Howard, Sandler, Chu, Chen, Chen, Tan, Wang, Zhu,
  Pang, Vasudevan, et~al.]{howard2019searching}
Andrew Howard, Mark Sandler, Grace Chu, Liang-Chieh Chen, Bo~Chen, Mingxing
  Tan, Weijun Wang, Yukun Zhu, Ruoming Pang, Vijay Vasudevan, et~al.
\newblock Searching for mobilenetv3.
\newblock In \emph{Proceedings of the IEEE/CVF international conference on
  computer vision}, pages 1314--1324, 2019.

\bibitem[Hui and Belkin(2021)]{hui2021evaluation}
Like Hui and Mikhail Belkin.
\newblock Evaluation of neural architectures trained with square loss vs
  cross-entropy in classification tasks.
\newblock In \emph{International Conference on Learning Representations}, 2021.

\bibitem[Jacot et~al.(2018)Jacot, Gabriel, and Hongler]{jacot2018ntk}
Arthur Jacot, Franck Gabriel, and Clement Hongler.
\newblock Neural tangent kernel: Convergence and generalization in neural
  networks.
\newblock In S.~Bengio, H.~Wallach, H.~Larochelle, K.~Grauman, N.~Cesa-Bianchi,
  and R.~Garnett, editors, \emph{Advances in Neural Information Processing
  Systems 31}, pages 8571--8580. Curran Associates, Inc., 2018.

\bibitem[Ji and Zhu(2020)]{ji2020knowledge}
Guangda Ji and Zhanxing Zhu.
\newblock Knowledge distillation in wide neural networks: Risk bound, data
  efficiency and imperfect teacher.
\newblock \emph{Advances in Neural Information Processing Systems},
  33:\penalty0 20823--20833, 2020.

\bibitem[Jin et~al.(2019)Jin, Peng, Wu, Liu, Liu, Liang, Yan, and
  Hu]{jin2019rco}
Xiao Jin, Baoyun Peng, Yichao Wu, Yu~Liu, Jiaheng Liu, Ding Liang, Junjie Yan,
  and Xiaolin Hu.
\newblock Knowledge distillation via route constrained optimization.
\newblock In \emph{Proceedings of the IEEE/CVF International Conference on
  Computer Vision (ICCV)}, October 2019.

\bibitem[Kalimeris et~al.(2019)Kalimeris, Kaplun, Nakkiran, Edelman, Yang,
  Barak, and Zhang]{Nakkiran:2019}
Dimitris Kalimeris, Gal Kaplun, Preetum Nakkiran, Benjamin Edelman, Tristan
  Yang, Boaz Barak, and Haofeng Zhang.
\newblock Sgd on neural networks learns functions of increasing complexity.
\newblock \emph{Advances in Neural Information Processing Systems}, 32, 2019.

\bibitem[Kuznetsov et~al.(2015)Kuznetsov, Mohri, and
  Syed]{kuznetsov2015rademacher}
Vitaly Kuznetsov, Mehryar Mohri, and U~Syed.
\newblock Rademacher complexity margin bounds for learning with a large number
  of classes.
\newblock In \emph{ICML Workshop on Extreme Classification: Learning with a
  Very Large Number of Labels}, volume~2, 2015.

\bibitem[LeCun et~al.(1998)LeCun, Bottou, Bengio, and
  Haffner]{lecun1998gradient}
Yann LeCun, L{\'e}on Bottou, Yoshua Bengio, and Patrick Haffner.
\newblock Gradient-based learning applied to document recognition.
\newblock \emph{Proceedings of the IEEE}, 86\penalty0 (11):\penalty0
  2278--2324, 1998.

\bibitem[Ledoux and Talagrand(1991)]{ledoux1991probability}
Michel Ledoux and Michel Talagrand.
\newblock \emph{Probability in Banach Spaces: isoperimetry and processes},
  volume~23.
\newblock Springer Science \& Business Media, 1991.

\bibitem[Lee et~al.(2019)Lee, Xiao, Schoenholz, Bahri, Novak, Sohl-Dickstein,
  and Pennington]{lee2019wide}
Jaehoon Lee, Lechao Xiao, Samuel Schoenholz, Yasaman Bahri, Roman Novak, Jascha
  Sohl-Dickstein, and Jeffrey Pennington.
\newblock Wide neural networks of any depth evolve as linear models under
  gradient descent.
\newblock In \emph{Advances in neural information processing systems}, pages
  8572--8583, 2019.

\bibitem[Menon et~al.(2021)Menon, Rawat, Reddi, Kim, and
  Kumar]{menon2021statistical}
Aditya~K Menon, Ankit~Singh Rawat, Sashank Reddi, Seungyeon Kim, and Sanjiv
  Kumar.
\newblock A statistical perspective on distillation.
\newblock In \emph{International Conference on Machine Learning}, pages
  7632--7642. PMLR, 2021.

\bibitem[Mirzadeh et~al.(2020)Mirzadeh, Farajtabar, Li, Levine, Matsukawa, and
  Ghasemzadeh]{mirzadeh2020improved}
Seyed~Iman Mirzadeh, Mehrdad Farajtabar, Ang Li, Nir Levine, Akihiro Matsukawa,
  and Hassan Ghasemzadeh.
\newblock Improved knowledge distillation via teacher assistant.
\newblock In \emph{Proceedings of the AAAI conference on artificial
  intelligence}, volume~34, pages 5191--5198, 2020.

\bibitem[Mobahi et~al.(2020)Mobahi, Farajtabar, and Bartlett]{mobahi2020self}
Hossein Mobahi, Mehrdad Farajtabar, and Peter Bartlett.
\newblock Self-distillation amplifies regularization in hilbert space.
\newblock \emph{Advances in Neural Information Processing Systems},
  33:\penalty0 3351--3361, 2020.

\bibitem[Mohri et~al.(2018)Mohri, Rostamizadeh, and
  Talwalkar]{mohri2018foundations}
Mehryar Mohri, Afshin Rostamizadeh, and Ameet Talwalkar.
\newblock \emph{Foundations of machine learning}.
\newblock MIT press, 2018.

\bibitem[M{\"{u}}ller et~al.(2020)M{\"{u}}ller, Kornblith, and
  Hinton]{Muller:2020}
Rafael M{\"{u}}ller, Simon Kornblith, and Geoffrey~E. Hinton.
\newblock Subclass distillation.
\newblock \emph{CoRR}, abs/2002.03936, 2020.

\bibitem[Ortiz-Jim{\'e}nez et~al.(2021)Ortiz-Jim{\'e}nez, Moosavi-Dezfooli, and
  Frossard]{ortiz2021can}
Guillermo Ortiz-Jim{\'e}nez, Seyed-Mohsen Moosavi-Dezfooli, and Pascal
  Frossard.
\newblock What can linearized neural networks actually say about
  generalization?
\newblock \emph{Advances in Neural Information Processing Systems},
  34:\penalty0 8998--9010, 2021.

\bibitem[Park et~al.(2019)Park, Kim, Lu, and Cho]{park2019relational}
Wonpyo Park, Dongju Kim, Yan Lu, and Minsu Cho.
\newblock Relational knowledge distillation.
\newblock In \emph{Proceedings of the IEEE/CVF Conference on Computer Vision
  and Pattern Recognition}, pages 3967--3976, 2019.

\bibitem[Passalis and Tefas(2018)]{passalis2018learning}
Nikolaos Passalis and Anastasios Tefas.
\newblock Learning deep representations with probabilistic knowledge transfer.
\newblock In \emph{Proceedings of the European Conference on Computer Vision
  (ECCV)}, pages 268--284, 2018.

\bibitem[Phuong and Lampert(2019)]{phuong19understand}
Mary Phuong and Christoph Lampert.
\newblock Towards understanding knowledge distillation.
\newblock In Kamalika Chaudhuri and Ruslan Salakhutdinov, editors,
  \emph{Proceedings of the 36th International Conference on Machine Learning},
  volume~97 of \emph{Proceedings of Machine Learning Research}, pages
  5142--5151. PMLR, 09--15 Jun 2019.

\bibitem[Ren et~al.(2022)Ren, Guo, and Sutherland]{ren2022better}
Yi~Ren, Shangmin Guo, and Danica~J. Sutherland.
\newblock Better supervisory signals by observing learning paths.
\newblock In \emph{International Conference on Learning Representations}, 2022.

\bibitem[Rezagholizadeh et~al.(2022)Rezagholizadeh, Jafari, Saladi, Sharma,
  Pasand, and Ghodsi]{rezagholizadeh-etal-2022-pro}
Mehdi Rezagholizadeh, Aref Jafari, Puneeth~S.M. Saladi, Pranav Sharma,
  Ali~Saheb Pasand, and Ali Ghodsi.
\newblock Pro-{KD}: Progressive distillation by following the footsteps of the
  teacher.
\newblock In \emph{Proceedings of the 29th International Conference on
  Computational Linguistics}, pages 4714--4727, 2022.

\bibitem[Romero et~al.(2015)Romero, Kahou, Montréal, Bengio, Montréal,
  Romero, Ballas, Kahou, Chassang, Gatta, and Bengio]{Romero15fitnets:hints}
Adriana Romero, Samira~Ebrahimi Kahou, Polytechnique Montréal, Y.~Bengio,
  Université~De Montréal, Adriana Romero, Nicolas Ballas, Samira~Ebrahimi
  Kahou, Antoine Chassang, Carlo Gatta, and Yoshua Bengio.
\newblock Fitnets: Hints for thin deep nets.
\newblock In \emph{in International Conference on Learning Representations
  (ICLR}, 2015.

\bibitem[Scholkopf and Smola(2001)]{Scholkopf:2001}
Bernhard Scholkopf and Alexander~J. Smola.
\newblock \emph{Learning with Kernels: Support Vector Machines, Regularization,
  Optimization, and Beyond}.
\newblock MIT Press, Cambridge, MA, USA, 2001.
\newblock ISBN 0262194759.

\bibitem[Shi et~al.(2021)Shi, Song, Zhou, Li, and Li]{shi2021prokt}
Wenxian Shi, Yuxuan Song, Hao Zhou, Bohan Li, and Lei Li.
\newblock Follow your path: A progressive method for knowledge distillation.
\newblock In Nuria Oliver, Fernando P{\'e}rez-Cruz, Stefan Kramer, Jesse Read,
  and Jose~A. Lozano, editors, \emph{Machine Learning and Knowledge Discovery
  in Databases. Research Track}, pages 596--611, Cham, 2021. Springer
  International Publishing.
\newblock ISBN 978-3-030-86523-8.

\bibitem[Simonyan and Zisserman(2015)]{vgg2014simonyan2014}
Karen Simonyan and Andrew Zisserman.
\newblock Very deep convolutional networks for large-scale image recognition.
\newblock In Yoshua Bengio and Yann LeCun, editors, \emph{3rd International
  Conference on Learning Representations, {ICLR} 2015, San Diego, CA, USA, May
  7-9, 2015, Conference Track Proceedings}, 2015.

\bibitem[Stanton et~al.(2021)Stanton, Izmailov, Kirichenko, Alemi, and
  Wilson]{stanton2021does}
Samuel Stanton, Pavel Izmailov, Polina Kirichenko, Alexander~A Alemi, and
  Andrew~G Wilson.
\newblock Does knowledge distillation really work?
\newblock \emph{Advances in Neural Information Processing Systems},
  34:\penalty0 6906--6919, 2021.

\bibitem[Tang et~al.(2020)Tang, Shivanna, Zhao, Lin, Singh, Chi, and
  Jain]{tang2020understanding}
Jiaxi Tang, Rakesh Shivanna, Zhe Zhao, Dong Lin, Anima Singh, Ed~H Chi, and
  Sagar Jain.
\newblock Understanding and improving knowledge distillation.
\newblock \emph{arXiv preprint arXiv:2002.03532}, 2020.

\bibitem[Tian et~al.(2020)Tian, Krishnan, and Isola]{Tian2020Contrastive}
Yonglong Tian, Dilip Krishnan, and Phillip Isola.
\newblock Contrastive representation distillation.
\newblock In \emph{International Conference on Learning Representations}, 2020.

\bibitem[Tung and Mori(2019)]{tung2019similarity}
Frederick Tung and Greg Mori.
\newblock Similarity-preserving knowledge distillation.
\newblock In \emph{Proceedings of the IEEE/CVF International Conference on
  Computer Vision}, pages 1365--1374, 2019.

\bibitem[Vershynin(2018)]{vershynin_2018}
Roman Vershynin.
\newblock \emph{High-Dimensional Probability: An Introduction with Applications
  in Data Science}.
\newblock Cambridge Series in Statistical and Probabilistic Mathematics.
  Cambridge University Press, 2018.
\newblock \doi{10.1017/9781108231596}.

\bibitem[Yang et~al.(2019)Yang, Xie, Su, and Yuille]{yang2019snapshot}
Chenglin Yang, Lingxi Xie, Chi Su, and Alan~L Yuille.
\newblock Snapshot distillation: Teacher-student optimization in one
  generation.
\newblock In \emph{Proceedings of the IEEE/CVF Conference on Computer Vision
  and Pattern Recognition}, pages 2859--2868, 2019.

\bibitem[Yuan et~al.(2020)Yuan, Tay, Li, Wang, and Feng]{yuan2020revisiting}
Li~Yuan, Francis~EH Tay, Guilin Li, Tao Wang, and Jiashi Feng.
\newblock Revisiting knowledge distillation via label smoothing regularization.
\newblock In \emph{Proceedings of the IEEE/CVF Conference on Computer Vision
  and Pattern Recognition}, pages 3903--3911, 2020.

\bibitem[Zagoruyko and Komodakis(2017)]{zagoruyko2017paying}
Sergey Zagoruyko and Nikos Komodakis.
\newblock Paying more attention to attention: Improving the performance of
  convolutional neural networks via attention transfer.
\newblock In \emph{International Conference on Learning Representations}, 2017.

\bibitem[Zhang et~al.(2022)Zhang, Zhai, Littwin, and
  Susskind]{zhang2022learning}
Ruixiang Zhang, Shuangfei Zhai, Etai Littwin, and Joshua~M. Susskind.
\newblock Learning representation from neural fisher kernel with low-rank
  approximation.
\newblock In \emph{International Conference on Learning Representations}, 2022.

\bibitem[Zhang et~al.(2018)Zhang, Xiang, Hospedales, and Lu]{zhang2018deep}
Ying Zhang, Tao Xiang, Timothy~M Hospedales, and Huchuan Lu.
\newblock Deep mutual learning.
\newblock In \emph{Proceedings of the IEEE conference on computer vision and
  pattern recognition}, pages 4320--4328, 2018.

\bibitem[Zhou et~al.(2018)Zhou, Fan, Cui, Bian, Zhu, and Gai]{zhou2018rocket}
Guorui Zhou, Ying Fan, Runpeng Cui, Weijie Bian, Xiaoqiang Zhu, and Kun Gai.
\newblock Rocket launching: A universal and efficient framework for training
  well-performing light net.
\newblock In \emph{Proceedings of the AAAI Conference on Artificial
  Intelligence}, volume~32, 2018.

\end{thebibliography}
\bibliographystyle{plainnat}

\newpage
\appendix
\section{Hyperparameters and implementation details}
\label{app:hyperparams}

\begin{table}[thb]
    \centering
    \begin{tabular}{llc}
    \toprule
    Dataset & Model & Learning rate\\
    \midrule
    \multirow{4}*{CIFAR-10, CIFAR-100, binary CIFAR-100} & ResNet-56 (teacher) & 0.1 \\
     & ResNet-110 (teacher) & 0.1 \\
     & ResNet-20 (CE or MSE students) & 0.1 \\
     & LeNet-5x8 (CE or MSE students) & 0.04 \\
    \midrule
    \multirow{3}*{Tiny ImageNet} & MobileNet-V3-125 (teacher) & 0.04\\
     & ResNet-101 (teacher) & 0.1 \\
     & MobileNet-V3-35 (student) & 0.04 \\
     & VGG-16 (student) & 0.01 \\
    \bottomrule
    \end{tabular}
    \caption{Initial learning rates for different dataset and model pairs.}
    \label{tab:learning-rate-table}
\end{table}
In all experiments we use stochastic gradient descent optimizer with 128 batch size and 0.9 Nesterov momentum.
The starting learning rates are presented in \cref{tab:learning-rate-table}.
All models for CIFAR datasets are trained for 256 epochs, with a learning schedule that divides the learning rate by 10 at epochs 96, 192, and 224.
All models for Tiny ImageNet are trained for 200 epochs, with a learning rate schedule that divides the learning rate by 10 at epochs 75 and 135.
The learning rate is warmed-up linearly to its initial value in the first 10 and 5 epochs for CIFAR and Tiny ImageNet models respectively.
All VGG and ResNet models use 2e-4 weight decay, while MobileNet models use 1e-5 weight decay.

The LeNet-5 uses ReLU activations.
We use the CIFAR variants of ResNets in experiments with CIFAR-10 or (binary) CIFAR-100 datasets.
Tiny ImageNet examples are resized to 224x224 resolution to suit the original ResNet, VGG and MobileNet architectures.
In all experiments we use standard data augmentation -- random cropping and random horizontal flip.
In all online learning methods we consider one teacher checkpoint per epoch.

\begin{algorithm}[t]
    \caption{Online knowledge distillation.}
    \label{alg:online-distillation}
    \begin{algorithmic}[1] 
    \Require Training sample $S$;
    teacher checkpoints $\{ \bs{g}^{(t_1)}, \ldots, \bs{g}^{(t_m)} \}$;
    temperature $\tau > 0$;
    training steps $T$;
    minibatch size $b$
        \For{$t = 1, \ldots, T$}
            \State Draw random $b$-sized minibatch $S'$ from $S$
            \State Compute nearest teacher checkpoint $t^* = \min \{ i \in [m] \colon t_i > t \}$
            \State Update student $\theta \gets \theta - \eta_t \cdot \nabla_\theta \mathcal{L}_{\rm kd-ce}( f_\theta; \bs{g}^{(t^*)}, \tau )$ over $S'$
        \EndFor
        \State \textbf{return} $f_\theta$
    \end{algorithmic}
\end{algorithm}

\section{Proofs}\label{app:proofs}
In this appendix we present the deferred proofs. We use the following definition of Rademacher complexity~\citep{mohri2018foundations}.

\begin{definition}[Rademacher complexity] Let $\GG$ be a family of functions from $\ZZ$ to $\bR$, and $Z_1,\ldots,Z_n$ be $n$ i.i.d. examples from a distribution $P$ on $\ZZ$. Then, the \emph{empirical Rademacher complexity} of $\GG$ with respect to $(Z_1,\ldots,Z_n)$ is defined as
\begin{equation}
    \widehat{\mathfrak{R}}_n(\GG) = \E_{\sigma_1,\ldots,\sigma_n}\sbr{\sup_{g\in\GG} \frac{1}{m}\sum_{i=1}^n \sigma_i g(Z_i)},
\end{equation}
where $\sigma_i$ are independent Rademacher random variables (i.e., uniform random variables taking values in $\mathset{-1, 1}$).
The \emph{Rademacher complexity} of $\GG$ is then defined as
\begin{equation}
    \mathfrak{R}_n(\GG) = \E_{Z_1,\ldots,Z_n}\sbr{ \widehat{\mathfrak{R}}_n(\GG)}.
\end{equation}    
\end{definition}

\rebuttal{
\begin{remark}
    Shifting the hypothesis class $\GG$ by a constant function does not change the empirical Rademacher complexity:
    \begin{align}
     \widehat{\mathfrak{R}}_n\rbr{\mathset{f + g : g \in \GG}} &= \E_{\sigma_1,\ldots,\sigma_n}\sbr{\sup_{g\in\GG} \frac{1}{m}\sum_{i=1}^n \sigma_i \rbr{f(Z_i) + g(Z_i)}}\\
     &= \E_{\sigma_1,\ldots,\sigma_n}\sbr{\frac{1}{m}\sum_{i=1}^n \sigma_i f(Z_i) + \sup_{g\in\GG} \frac{1}{m}\sum_{i=1}^n \sigma_i g(Z_i)}\\
     &= \E_{\sigma_1,\ldots,\sigma_n}\sbr{\sup_{g\in\GG} \frac{1}{m}\sum_{i=1}^n \sigma_i g(Z_i)} = \widehat{\mathfrak{R}}_n(\GG).
    \end{align}
    \label{remark:shfiting-rademacher}
\end{remark}
}

Given the kernel classification setting described in \cref{subsec:target-complexity-scalar-output}, we first prove a slightly more general variant of a classical generalization gap bound in~\citet[Theorem 21]{bartlett2002rademacher}.

\begin{theorem}
Assume $\sup_{x \in \XX} k(x,x) < \infty$. Fix any constant $M > 0$.
Then with probability at least $1-\delta$, every function $f \in \HH$ with $\nbr{f}_\HH \le M$ satisfies
\begin{align*}
    \mathbb{P}_{X,Y}(Y f(X) \le 0) &\le \frac{1}{n}\sum_{i=1}^n \phi_\gamma(\sign\rbr{Y_i} f(X_i)) + \frac{2M}{\gamma n} \sqrt{\trace\rbr{K}} + 3\sqrt{\frac{\ln\rbr{2/\delta}}{2n}}.\numberthis 
\end{align*}
\label{thm:margin-bound-classical}
\end{theorem}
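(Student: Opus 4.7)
The plan is to follow the standard Rademacher-complexity pipeline for margin-based bounds, as in Mohri~et~al., with one small twist to handle the fact that $Y_i$ here is only assumed to be real-valued (not $\pm 1$), so that the misclassification event is governed by $\sign(Y_i)$ rather than $Y_i$ itself. Concretely, I will (i) upper bound the 0--1 loss by the margin surrogate $\phi_\gamma$, (ii) apply a uniform Rademacher deviation inequality for $[0,1]$-valued losses, (iii) strip off $\phi_\gamma$ by Talagrand's contraction lemma using the fact that $\phi_\gamma$ is $1/\gamma$-Lipschitz, (iv) absorb the $\sign(Y_i)$ into the Rademacher variables, and (v) invoke the classical closed-form bound on the empirical Rademacher complexity of an RKHS ball.

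For step (i), note $\phi_\gamma(\alpha)\ge \ind{\alpha\le 0}$ so
\[
\mathbb{P}_{X,Y}\!\rbr{Y f(X)\le 0} = \E\sbr{\ind{\sign(Y) f(X)\le 0}} \le \E\sbr{\phi_\gamma(\sign(Y) f(X))}.
\]
Let $\HH_M = \cbr{f\in\HH : \nbr{f}_\HH \le M}$ and define the loss class $\GG = \cbr{(x,y)\mapsto \phi_\gamma(\sign(y) f(x)) : f\in\HH_M}$; every element of $\GG$ takes values in $[0,1]$. Step (ii) is then the standard uniform McDiarmid-plus-symmetrization bound (Mohri et al., Theorem 3.3): with probability at least $1-\delta$, uniformly for $g\in\GG$,
\[
\E\sbr{g(X,Y)} \le \frac{1}{n}\sum_{i=1}^n g(X_i,Y_i) + 2\,\widehat{\mathfrak{R}}_n(\GG) + 3\sqrt{\frac{\ln(2/\delta)}{2n}},
\]
where the constant $3$ (rather than $1$) appears because one applies McDiarmid a second time to pass from $\mathfrak{R}_n$ to the data-dependent $\widehat{\mathfrak{R}}_n$ and then union-bounds. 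For step (iii), Talagrand's contraction lemma (no need for $\phi_\gamma(0)=0$ in Mohri's formulation) gives $\widehat{\mathfrak{R}}_n(\GG) \le \tfrac{1}{\gamma}\widehat{\mathfrak{R}}_n(\GG')$ with $\GG' = \cbr{(x,y)\mapsto \sign(y) f(x) : f\in\HH_M}$.

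For step (iv), I use that the Rademacher variables $\sigma_i$ are independent of the sample, so conditional on $(X_i,Y_i)$ the products $\sigma_i\sign(Y_i)$ are themselves i.i.d.\ Rademacher; hence $\widehat{\mathfrak{R}}_n(\GG') = \widehat{\mathfrak{R}}_n(\HH_M)$ after the usual conditioning argument (here one should be a bit careful on the set where $Y_i=0$, but redefining $\sign(0)$ to any fixed value in $\cbr{-1,+1}$ preserves the argument since the supremum is over $f$). Step (v) is the textbook RKHS ball bound: by the reproducing property and Cauchy--Schwarz in $\HH$,
\[
\widehat{\mathfrak{R}}_n(\HH_M) = \frac{1}{n}\E_\sigma\sbr{\sup_{\nbr{f}_\HH\le M}\inner{f}{\sum_i \sigma_i k(X_i,\cdot)}_\HH} \le \frac{M}{n}\E_\sigma\sqrt{\sum_{i,j}\sigma_i\sigma_j k(X_i,X_j)} \le \frac{M}{n}\sqrt{\trace(K)},
\]
where the last step is Jensen. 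Combining the four displayed inequalities yields the claim.

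The argument has no real obstacle: it is a careful concatenation of well-known tools. The only subtlety worth flagging is the sign-absorption in step (iv), since $\sign(Y_i)$ is not independent of the $X_i$, so one must justify it by conditioning on the sample before the $\sigma_i$ are drawn; this is routine but is the one place the proof departs from the classical $\pm 1$-labeled version and is the reason the authors call this a ``slightly more general variant''.
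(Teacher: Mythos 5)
Your proof is correct and follows essentially the same route as the paper's: bound the $0$--$1$ loss by $\phi_\gamma$, apply the standard uniform Rademacher bound for $[0,1]$-valued losses, strip $\phi_\gamma$ by the $1/\gamma$-Lipschitz contraction lemma, absorb $\sign(Y_i)$ into the Rademacher variables, and finish with the classical $\frac{M}{n}\sqrt{\trace(K)}$ bound on the RKHS ball (which the paper obtains by citing Lemma~22 of \citet{bartlett2002rademacher} rather than rederiving it). Your explicit remark on conditioning before drawing the $\sigma_i$ to justify the sign absorption is a nice touch but does not change the argument.
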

\begin{proof}
Let $\FF = \mathset{f \in \HH : \nbr{f} \le M}$ and consider the following class of functions:
\begin{equation}
    \GG = \mathset{(x,y) \mapsto \phi_\gamma(\sign(y) f(x)): f \in \FF}.
\end{equation}
By the standard Rademacher complexity  classification generalization bound~\citep[Theorem 3.3]{mohri2018foundations}, for any $\delta > 0$, with probability at least $1-\delta$, the following holds for all $f \in \FF$:
\begin{equation}
    \E_{X,Y}\sbr{\phi_\gamma(\sign(Y) f(X))} \le \frac{1}{n}\sum_{i=1}^n \phi_\gamma(\sign(Y_i) f(X_i)) + 2 \widehat{\mathfrak{R}}_n(\GG) + 3 \sqrt{\frac{\log(2/\delta)}{2n}}.
\end{equation}
Therefore, with probability at least $1-\delta$, for all $f \in \FF$
\begin{equation}
\mathbb{P}_{X,Y}\rbr{Y f(X) \le 0} \le \frac{1}{n}\sum_{i=1}^n \phi_\gamma(\sign(Y_i) f(X_i)) + 2 \widehat{\mathfrak{R}}_n(\GG) + 3 \sqrt{\frac{\log(2/\delta)}{2n}}.
\end{equation}

To finish the proof, we upper bound $\widehat{\mathfrak{R}}_n(\GG)$:
\begin{align}
    \widehat{\mathfrak{R}}_n(\GG) &= \E_{\sigma_1,\ldots,\sigma_n}\sbr{\sup_{g \in \GG}\frac{1}{n}\sum_{i=1}^n \sigma_i g(X_i, Y_i)}\\
    &=\E_{\sigma_1,\ldots,\sigma_n}\sbr{\sup_{f \in \FF}\frac{1}{n}\sum_{i=1}^n \sigma_i \phi_\gamma\rbr{\sign(Y_i) f(X_i)}}\\
    &\le \frac{1}{\gamma} \E_{\sigma_1,\ldots,\sigma_n}\sbr{\sup_{f \in \FF}\frac{1}{n}\sum_{i=1}^n \sigma_i \sign(Y_i) f(X_i)}\\
    &=\frac{1}{\gamma} \E_{\sigma_1,\ldots,\sigma_n}\sbr{\sup_{f \in \FF}\frac{1}{n}\sum_{i=1}^n \sigma_i f(X_i)}\\
    &=\frac{1}{\gamma}\widehat{\mathfrak{R}}_n(\FF),
\end{align}
where the third line is due to \citet{ledoux1991probability}.
By Lemma 22 of \citet{bartlett2002rademacher}, we thus conclude that
\begin{equation}
    \widehat{\mathfrak{R}}_n(\FF) \le \frac{M}{n} \sqrt{\trace\rbr{K}}.
\end{equation}
\end{proof}

\subsection{Proof of Proposition~\ref{prop:kernel-method-solution-norm}}

\begin{proof}[Proof of Proposition~\ref{prop:kernel-method-solution-norm}]
As $K$ is a full rank matrix almost surely, then with probability 1 there exists a vector $\bs{\alpha}\in\bR^n$, such that $K \bs{\alpha} = \bs{Y}$.
Consider the function $f(x) = \sum_{i=1}^n \alpha_i k(X_i, x) \in \HH$. Clearly, $f(X_i) = Y_i,\ \forall i \in [n]$. Furthermore, $\nbr{f}^2_\HH = \bs{\alpha}^\top K\bs{\alpha} = \bs{Y}^\top K^{-1} \bs{Y}$.
The existence of such $f \in \HH$ with zero empirical loss and the assumptions on the loss function imply that any optimal solution of problem \eqref{eq:kernel-method-general} has a norm at most $\bs{Y}^\top K^{-1} \bs{Y}$.\footnote{For $[0,1]$-bounded loss functions, it also holds that $\nbr{f^*}_\HH^2 \le 2/\lambda$.
This is not of direct relevance to us, as we will be interested in cases with small $\lambda > 0$.}
\end{proof}

\subsection{Proof of \cref{thm:margin-bound-label-complexity}}

\begin{proof}
To get a generalization bound for $f^*$ it is tempting to use \cref{thm:margin-bound-classical} with $M = \nbr{f^*}$.
However, $\nbr{f^*}$ is a random variable depending on the training data and is an invalid choice for the constant $M$. 
This issue can be resolved by paying a small logarithmic penalty.

For any $M \ge M_0 = \left\lceil \frac{\gamma \sqrt{n}}{2 \sqrt{\kappa}}\right \rceil$ the bound of \cref{thm:margin-bound-classical} is vacuous.
Let us consider the set of integers $\mathcal{M} = \mathset{1, 2, \ldots, M_0}$ and write \cref{thm:margin-bound-classical} for each element of $\mathcal{M}$ with $\delta / M_0$ failure probability.
By union bound, we have that with probability at least $1-\delta$, all instances of \cref{thm:margin-bound-classical} with $M$ chosen from $\mathcal{M}$ hold simultaneously.

If $\bs{Y}^\top K^{-1} \bs{Y} \ge M_0$, then the desired bound holds trivially, as the right-hand side becomes at least 1.
Otherwise, we set $M = \left\lceil \sqrt{\bs{Y}^\top K^{-1} \bs{Y}} \right\rceil \in \mathcal{M}$ and consider the corresponding part of the union bound.
We thus have that with at least $1 - \delta$ probability, every function $f \in \FF$ with $\nbr{f} \le M$ satisfies
\begin{equation*}
    \mathbb{P}_{X,Y}(Y f(X) \le 0) \le \frac{1}{n}\sum_{i=1}^n \phi_\gamma(\sign\rbr{Y_i} f(X_i)) + \frac{2M}{\gamma n} \sqrt{\trace\rbr{K}} + 3\sqrt{\frac{\ln\rbr{2 M_0/\delta}}{2n}}. 
\end{equation*}
As by \cref{prop:kernel-method-solution-norm} any optimal solution $f^*$ has norm at most $\sqrt{\bs{Y}^\top K^{-1} \bs{Y}}$ and $M \le \sqrt{\bs{Y}^\top K^{-1} \bs{Y}} + 1$, we have with probability at least $1-\delta$,
\begin{align*}
    \mathbb{P}_{X,Y}(Y f^*(X) \le 0) &\le \frac{1}{n}\sum_{i=1}^n \phi_\gamma(\sign{\rbr{Y_i}} f^*(X_i)) + \frac{2 \sqrt{\bs{Y}^\top K^{-1} \bs{Y}} + 2}{\gamma n} \sqrt{\trace\rbr{K}}+3\sqrt{\frac{\ln\rbr{2 M_0/\delta}}{2n}}. 
\end{align*}
\end{proof}

\subsection{Proof of \cref{prop:distillation-error-wrt-dataset-labels}}
\label{appen:prop:distillation-error-wrt-dataset-labels}
\cref{prop:distillation-error-wrt-dataset-labels} is a simple corollary of \cref{thm:margin-bound-label-complexity}.
\begin{proof}
We have that 
\begin{align*}
    P_{X,Y^*} (Y^* f^*(X) \le 0) &= P_{X,Y^*} (Y^* f^*(X) \le 0 \wedge Y^* g(X) \le 0)\\
    &\quad + P_{X,Y^*} (Y^* f^*(X) \le 0 \wedge Y^* g(X) > 0)\\
    &\le P_{X,Y^*} (Y^* g(X) \le 0) + P_{X} (g(X) f^*(X) \le 0).
\end{align*}
The rest follows from bounding $P_{X}(g(X) f^*(X) \le 0)$ using \cref{thm:margin-bound-label-complexity}.
\end{proof}

\section{Extension to multiclass classification}\label{sec:extension-to-multiclass}
Let us now consider the case of multiclass classification with $d$ classes.
Let $k : \XX \times \XX \rightarrow \bR^{d \times d}$ be a matrix-valued positive definite kernel.
For every $x\in \XX$ and $a\in\bR^d$, let $k_x a = k(\cdot, x)a$ be the function from $\XX$ to $\bR^d$ defined the following way:
\begin{equation}
    k_x a (x') = k(x', x) a, \text{ for all } x' \in \XX.
\end{equation}
With any such kernel $k$ there is a unique vector-valued RKHS $\HH$ of functions from $\XX$ to $\bR^d$.
This RKHS is the completion of $\text{span}\mathset{k_x a : x \in \XX, a\in \bR^d}$, with the following inner product:
\begin{equation}
    \left\langle \sum_{i=1}^n k_{x_i} a_i, \sum_{j=1}^m k_{x'_j} a'_j \right\rangle_\HH = \sum_{i=1}^n\sum_{j=1}^m a_i^\top k(x_i, x'_j) a'_j.
\end{equation}
For any $f \in \HH$, the norm $\nbr{f}_\HH$ is defined as $\sqrt{\langle f, f\rangle_\HH}$.
Therefore, if $f(x) = \sum_{i=1}^n  k_{x_i} a_i$ then
\begin{align}
\langle f, f \rangle^2_\HH &= \sum_{i,j=1}^n a_i^\top k(x_i, x_j) a_j\\
    &= \bs{a}^\top K \bs{a},
\end{align}
where $\bs{a}^\top = (a_1^\top, \ldots, a_n^\top) \in \bR^{n d}$ and
\begin{equation}
    K = \left[ \begin{matrix}
        k(x_1, x_1) & \cdots & k(x_1, x_n)\\
        \cdots & \cdots & \cdots \\
        k(x_n, x_1) & \cdots & k(x_n, x_n)\\
    \end{matrix} \right] \in \bR^{nd \times nd}.
\end{equation}

Suppose $\mathset{(X_i, Y_i)}_{i\in[n]}$ are $n$ i.i.d. examples sampled from some probability distribution on $\XX \times \YY$, where $\YY \subset \bR^d$.
As in the binary classification case, we consider the regularized kernel problem \eqref{eq:kernel-method-general}.
Let $\bs{Y}^\top = (Y_1^\top, \ldots, Y_n^\top)$ be the concatenation of targets.
The following proposition is the analog of \cref{prop:kernel-method-solution-norm} in this vector-valued setting.
\begin{proposition}
Assume $K$ is full rank almost surely.
Assume also $\ell(y, y') \ge 0, \forall y, y' \in \YY,$ and $\ell(y,y) = 0, \forall y\in\YY$. Then, with probability 1, for any solution $f^*$ of \eqref{eq:kernel-method-general}, we have that
\begin{equation}
    \nbr{f^*}_\mathcal{H}^2 \le \bs{Y}^\top K^{-1} \bs{Y}.
\end{equation}
\label{prop:kernel-method-solution-norm-vectorcase}
\end{proposition}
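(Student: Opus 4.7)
The plan is to mimic the scalar proof of Proposition~\ref{prop:kernel-method-solution-norm} almost verbatim, carrying the indices through the vector-valued RKHS machinery that the paper has just set up. The only real content is to exhibit a function in $\HH$ that interpolates the multiclass targets at the training inputs and has squared RKHS norm exactly $\bs{Y}^\top K^{-1}\bs{Y}$; the regularization then forces any optimal $f^*$ to have no larger norm.

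First, I would use the almost-sure invertibility of $K \in \bR^{nd \times nd}$ to define $\bs{\alpha} = K^{-1}\bs{Y} \in \bR^{nd}$ and split it into blocks $\bs{\alpha}^\top = (\alpha_1^\top,\ldots,\alpha_n^\top)$ with each $\alpha_i \in \bR^d$. Then set
\begin{equation*}
f(\cdot) = \sum_{i=1}^n k_{X_i} \alpha_i \in \HH,
\end{equation*}
which lies in $\HH$ by construction. Evaluated at $X_j$, this gives $f(X_j) = \sum_{i=1}^n k(X_j,X_i)\alpha_i$, which is precisely the $j$-th block of the vector $K\bs{\alpha} = \bs{Y}$, so $f(X_j) = Y_j$ for every $j \in [n]$. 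Thus $f$ interpolates the vector-valued targets on the training set.

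Next, I would compute the RKHS norm via the inner product spelled out just before the proposition:
\begin{equation*}
\nbr{f}_\HH^2 = \sum_{i=1}^n \sum_{j=1}^n \alpha_i^\top k(X_i, X_j)\alpha_j = \bs{\alpha}^\top K \bs{\alpha} = \bs{Y}^\top K^{-1} K K^{-1} \bs{Y} = \bs{Y}^\top K^{-1}\bs{Y}.
\end{equation*}
Finally, using $\ell(y,y)=0$ together with $\ell \ge 0$, the regularized objective from \eqref{eq:kernel-method-general} at this $f$ equals $\tfrac{\lambda}{2}\bs{Y}^\top K^{-1}\bs{Y}$, while at any minimizer $f^*$ it is at least $\tfrac{\lambda}{2}\nbr{f^*}_\HH^2$. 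Optimality of $f^*$ yields $\nbr{f^*}_\HH^2 \le \bs{Y}^\top K^{-1}\bs{Y}$, as claimed.

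There is no real obstacle here beyond keeping the block structure straight: the ``hard part,'' if any, is just checking that the vector-valued reproducing property $f(X_j) = \sum_i k(X_j,X_i)\alpha_i$ matches block $j$ of $K\bs{\alpha}$, and that the inner-product expansion collapses to the quadratic form $\bs{\alpha}^\top K\bs{\alpha}$ with the same $K$ used on the right-hand side of the bound. Both are immediate from the definitions of $k_x a$, $\HH$, and $K$ given right above the proposition, so the proof is essentially identical in structure to the binary case.
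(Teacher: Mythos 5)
Your proof is correct and follows essentially the same route as the paper's: exhibit the interpolant $f=\sum_i k_{X_i}\alpha_i$ with $\bs{\alpha}=K^{-1}\bs{Y}$, compute $\nbr{f}_\HH^2=\bs{Y}^\top K^{-1}\bs{Y}$, and compare objective values at $f$ and $f^*$ using $\ell\ge 0$ and $\ell(y,y)=0$. Your write-up is in fact slightly more explicit than the paper's about the final optimality comparison, but there is no substantive difference.
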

\begin{proof}
With probability 1, the kernel matrix $K$ is full rank. Therefore, there exists a vector $\bs{a}^\top = (a_1^\top, \ldots, a_n^\top)\in\bR^{nd}$, with $a_i \in \bR^d$, such that $K \bs{a} = \bs{Y}$. Consider the function $f(x) = \sum_{i=1}^n k_{X_i}a_i \in \HH$. Clearly, $f(X_i) = Y_i,\ \forall i \in [n]$. Furthermore,
\begin{align}
\nbr{f}^2_\HH &= \bs{a}^\top K \bs{a}\\
              &= \bs{Y}^\top K^{-1} \bs{Y}.
\end{align}
The existence of such $f(x) \in \HH$ with zero empirical loss and assumptions on the loss function imply that any optimal solution of problem \eqref{eq:kernel-method-general} has a norm at most $\bs{Y}^\top K^{-1} \bs{Y}$.
\end{proof}

As in the main text, for a hypothesis 
$f : \XX \rightarrow \bR^d$ and a labeled example $(x,y)$, let 
$\rho_{f}(x,y) = f(x)_y - \max_{y'\neq y} f(x)_{y'}$ be the \emph{prediction margin}.
We now restate \cref{thm:margin-bound-label-complexity-vectorcase} and present a proof.

\begin{theorem}[\cref{thm:margin-bound-label-complexity-vectorcase} restated]
Assume that $\kappa=\sup_{x \in \XX, y\in[d]} k(x,x)_{y,y} < \infty$,
and $K$ is full rank almost surely.
Further, assume that $\ell(y, y') \ge 0, \forall y, y' \in \YY$ and $\ell(y,y) = 0, \forall y\in\YY$.
Let $M_0 = \left\lceil{\gamma\sqrt{n}}/{(4d\sqrt{\kappa})}\right\rceil$.
Then, with probability at least $1-\delta$, for any solution $f^*$ of problem in \cref{eq:kernel-method-general},
\begin{align*}
    \mathbb{P}_{X,Y}(\rho_{f^*}(X,Y) \le 0) &\le \frac{1}{n}\sum_{i=1}^n \ind{\rho_{f^*}(X_i, Y_i) \le \gamma} + \frac{4 d (\bs{Y}^\top K^{-1} \bs{Y}+1)}{\gamma n} \sqrt{\trace\rbr{K}}\\
    &\quad+3\sqrt{\frac{\log(2M_0/\delta)}{2n}}.
    \numberthis
\end{align*}
\end{theorem}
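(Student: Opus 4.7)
The plan is to mirror the three-stage strategy of the binary case (Theorem \ref{thm:margin-bound-label-complexity}), adapting each step to the vector-valued RKHS setting. The first ingredient, Proposition \ref{prop:kernel-method-solution-norm-vectorcase} established just above, already gives $\nbr{f^*}^2_\HH \le \bs{Y}^\top K^{-1}\bs{Y}$ almost surely, so the task reduces to proving a uniform margin-based generalization bound over the ball $\FF_M = \mathset{f \in \HH : \nbr{f}_\HH \le M}$ for a suitable (and eventually data-dependent) radius $M$, and then stitching via a union bound over an integer grid.

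For fixed $M$ I would prove a multiclass analogue of Theorem \ref{thm:margin-bound-classical}: using $\ind{\rho_f \le 0} \le \phi_\gamma(\rho_f) \le \ind{\rho_f \le \gamma}$ and the standard Rademacher bound for the $[0,1]$-valued family $\GG = \mathset{(x,y)\mapsto \phi_\gamma(\rho_f(x,y)) : f\in\FF_M}$ yields, with probability at least $1-\delta$,
\begin{align*}
\mathbb{P}_{X,Y}(\rho_f(X,Y)\le 0) \le \frac{1}{n}\sum_{i=1}^n \ind{\rho_f(X_i,Y_i)\le \gamma} + 2\widehat{\mathfrak{R}}_n(\GG) + 3\sqrt{\frac{\ln(2/\delta)}{2n}}.
\end{align*}
Talagrand's contraction (with Lipschitz constant $1/\gamma$ for $\phi_\gamma$) strips off the margin loss, reducing the task to bounding $\widehat{\mathfrak{R}}_n(\rho_{\FF_M})$. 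To handle the $\max_{y'\ne y}$ inside the margin, I will use the standard multiclass contraction argument (Koltchinskii--Panchenko / \citet{mohri2018foundations}): expanding the max over the $d$ coordinates and applying the vector-contraction principle yields $\widehat{\mathfrak{R}}_n(\rho_{\FF_M})\le 2 d\cdot \widehat{\mathfrak{R}}_n(\FF^{(1)}_M)$, where $\FF^{(1)}_M$ is the union of scalar coordinate projections $x \mapsto f(x)_y$ of functions in $\FF_M$. Since every such projection lives in a scalar RKHS with the same norm bound $M$ (the projection operator is non-expansive on $\HH$), the vector-valued analogue of Lemma 22 of \cite{bartlett2002rademacher} gives $\widehat{\mathfrak{R}}_n(\FF^{(1)}_M) \le \frac{M}{n}\sqrt{\trace(K)}$, where $K$ is the full $nd\times nd$ matrix-valued kernel matrix and $\kappa = \sup_{x,y} k(x,x)_{y,y}$ ensures $\trace(K) \le nd\kappa$.

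Finally I apply the stratification trick from the proof of Theorem \ref{thm:margin-bound-label-complexity}: the bound becomes vacuous for any $M \ge M_0 = \lceil \gamma\sqrt{n}/(4 d\sqrt{\kappa})\rceil$, so I union-bound the display over the integer grid $\mathset{1,\ldots,M_0}$ with failure probability $\delta/M_0$ per element. If $\bs{Y}^\top K^{-1}\bs{Y} \ge M_0$ the claimed bound is trivial; otherwise I instantiate at $M = \lceil \bs{Y}^\top K^{-1}\bs{Y}\rceil$, which is admissible by Proposition \ref{prop:kernel-method-solution-norm-vectorcase}, and invoke $M \le \bs{Y}^\top K^{-1}\bs{Y}+1$ to produce the factor $4d(\bs{Y}^\top K^{-1}\bs{Y} + 1)$.

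The main obstacle is the Rademacher complexity bound for the multiclass margin function: the map $f(x)\mapsto \max_{y'\ne y} f(x)_{y'}$ is not Lipschitz in a form that admits the scalar contraction lemma directly, and one must carefully track the factor of $d$ coming out of the multiclass contraction together with the factor $2$ from expanding $\rho_f$ as a difference. A secondary subtlety is verifying that coordinate projection from the vector-valued RKHS to a scalar RKHS is norm-non-expansive, so that the scalar Rademacher bound applies with the original radius $M$ and the trace of the full matrix-valued kernel; this is where the uniform diagonal bound $\sup_{x,y} k(x,x)_{y,y}\le\kappa$ enters, both in controlling $\trace(K)$ and in defining $M_0$.
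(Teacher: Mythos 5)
Your proposal is correct and follows essentially the same route as the paper: norm control via Proposition~\ref{prop:kernel-method-solution-norm-vectorcase}, a multiclass margin bound with the $4d/\gamma$ factor (the paper imports this as Theorem~2 of \citet{kuznetsov2015rademacher} rather than re-deriving the Koltchinskii--Panchenko contraction), the kernel Rademacher bound $\widehat{\mathfrak{R}}_n(\tilde{\FF}) \le \tfrac{M}{n}\sqrt{\trace(K)}$ (which the paper obtains directly in the vector-valued RKHS, replacing the $\sup_{y\in[d]}$ by a sum over $y$ to recover the full $nd\times nd$ trace --- the step you flag as the main subtlety), and the same stratified union bound over $\{1,\dots,M_0\}$.
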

\begin{proof}
Consider the class of functions $\FF = \mathset{f \in \HH : \nbr{f} \le M}$ for some $M > 0$.
By Theorem 2 of \citet{kuznetsov2015rademacher}, for any $\gamma > 0$ and $\delta > 0$, with probability at least $1-\delta$, the following bound holds for all $f \in \FF$:\footnote{Note that their result is in terms of Rademacher complexity rather than empirical Rademacher complexity. The variant we use can be proved with the same proof, with a single modification of bounding $R(g)$ with empirical Rademacher complexity of $\tilde{\GG}$ using Theorem 3.3 of \citet{mohri2018foundations}.}
\begin{equation}
\mathbb{P}_{X,Y}(\rho_f (X,Y) \le 0) \le \frac{1}{n}\sum_{i=1}^n \ind{\rho_{f}(X_i, Y_i) \le \gamma} + \frac{4d}{\gamma} \widehat{\mathfrak{R}}_n(\tilde{\FF}) + 3\sqrt{\frac{\log(2/\delta)}{2n}},    
\end{equation}
where $\tilde{\FF} = \mathset{(x,y) \mapsto f(x)_y : f\in\FF, y\in[d]}$.
Next we upper bound the empirical Rademacher complexity of $\tilde{\FF}$:
\begin{align}
    \widehat{\mathfrak{R}}_n(\tilde{\FF}) 
    &= \E_{\sigma_1,\ldots,\sigma_n}\sbr{\sup_{y \in [d], h \in \HH, \nbr{h}\le M}\frac{1}{n}\sum_{i=1}^n \sigma_i h(X_i)_y}\\
    &= \E_{\sigma_1,\ldots,\sigma_n}\sbr{\sup_{y \in [d], h \in \HH, \nbr{h}\le M}\frac{1}{n}\sum_{i=1}^n \sigma_i h(X_i)^\top \mathbf{y}} && \hspace{-5.5em}\text{($\bf{y}$ is the one-hot enc. of $y$)}\\
    &= \E_{\sigma_1,\ldots,\sigma_n}\sbr{\sup_{y \in [d], h \in \HH, \nbr{h}\le M}\left\langle h, \frac{1}{n}\sum_{i=1}^n \sigma_i k_{X_i}\mathbf{y}\right\rangle_\HH}&&\hspace{-5.5em}\text{(reproducing property)}\\
    &\le\frac{M}{n} \E_{\sigma_1,\ldots,\sigma_n}\sbr{\sup_{y \in [d]} \nbr{\sum_{i=1}^n \sigma_i k_{X_i}\mathbf{y}}_\HH}&&\hspace{-4.5em}\text{(Cauchy-Schwarz)}\\
    &=\frac{M}{n} \sqrt{\E_{\sigma_1,\ldots,\sigma_n}\sbr{\sup_{y \in [d]} \nbr{\sum_{i=1}^n \sigma_i k_{X_i}\mathbf{y}}^2_\HH}}&&\hspace{-5.5em}\text{(Jensen's inequality)}\\
    &\le \frac{M}{n} \sqrt{\E_{\sigma_1,\ldots,\sigma_n}\sbr{\sum_{y=1}^d \nbr{\sum_{i=1}^n \sigma_i k_{X_i}\mathbf{y}}^2_\HH}}\\
    &= \frac{M}{n} \sqrt{\sum_{y=1}^d\E_{\sigma_1,\ldots,\sigma_n}\sbr{\sum_{i=1}^n \nbr{\sigma_i k_{X_i}\mathbf{y}}^2_\HH + \sum_{i\neq j}\left\langle \sigma_i k_{X_i}\mathbf{y}, \sigma_j k_{X_j}\mathbf{y}\right\rangle}}\\
    &= \frac{M}{n} \sqrt{\sum_{y=1}^d\E_{\sigma_1,\ldots,\sigma_n}\sbr{\sum_{i=1}^n \nbr{\sigma_i k_{X_i}\mathbf{y}}^2_\HH}} &&\text{\hspace{-5.5em}(independence of $\sigma_i$)}\\
    &=\frac{M}{n} \sqrt{\sum_{y=1}^d\sbr{\sum_{i=1}^n \mathbf{y}^\top k(X_i, X_i) \mathbf{y}}}\\
    &=\frac{M}{n}\sqrt{\trace\rbr{K}}.
\end{align}
The proof is concluded with the same reasoning of the proof of \cref{thm:margin-bound-label-complexity}.
\end{proof}

\section{Additional results and discussion}
\label{app:add-results}
In this appendix we present additional results and discussion to support the main findings of this work.

\rebuttal{
\textbf{On early stopped teachers.~} \citet{cho2019efficacy} observe that sometimes offline KD works better with early stopped teachers. Such teachers have worse accuracy and perhaps results in a smaller student margin, but they also have a significantly smaller supervision complexity (see \cref{fig:normalized-sup-res-complexity-comparison-test}), which provides a possible explanation for this phenomenon.
}

\rebuttal{
\textbf{Teaching students with weak inductive biases.~} As we saw earlier, a fully trained teacher can have predictions as complex as random labels for a weak student at initialization.
This low alignment of student NTK and teacher predictions can result in memorization.
In contrast, an early stopped teacher captures simple patterns and has a better alignment with the student NTK, allowing the student to learn these patterns in a generalizable fashion.
This feature learning improves the student NTK and allows learning more complex patterns in future iterations.
We hypothesize that this is the mechanism that allows online distillation to outperform offline distillation in some cases.
}

\textbf{CIFAR-10 results.~} \cref{tbl:results-cifar10} presents the comparison of standard training, offline distillation, and online distillation on CIFAR-10. We see that the results are qualitatively similar to CIFAR-100 results.

\begin{table}[!t]
    \centering
    \caption{Results on CIFAR-10. Every second line is an MSE student.}
    \label{tbl:results-cifar10}
    
    \begin{tabular}{@{}lcccccc@{}}
    \toprule
    {\bf Setting} & {\bf No KD} & \multicolumn{2}{c}{\bf Offline KD} & \multicolumn{2}{c}{\bf Online KD} & {\bf Teacher} \\
    &  & $\tau = 1$ & $\tau = 4$ & $\tau = 1$ & $\tau = 4$ & \\
    \midrule
    ResNet-56 $\rightarrow$ LeNet-5x8 & 81.8 $\pm$ 0.5 & 82.4 $\pm$ 0.5 & 86.0 $\pm$ 0.2 & 86.8 $\pm$ 0.2 & \best{88.6 $\pm$ 0.1} & 93.2 \\
    ResNet-56 $\rightarrow$ LeNet-5x8 & 83.4 $\pm$ 0.3 & 83.1 $\pm$ 0.2 & 84.9 $\pm$ 0.1 & 85.6 $\pm$ 0.1 & \best{87.1 $\pm$ 0.1} & 93.2 \\
    \midrule
    ResNet-110 $\rightarrow$ LeNet-5x8 & 81.7 $\pm$ 0.3 & 81.9 $\pm$ 0.5 & 85.8 $\pm$ 0.1 & 86.5 $\pm$ 0.1 & \best{88.8 $\pm$ 0.1} & 93.9 \\
    ResNet-110 $\rightarrow$ LeNet-5x8 & 83.2 $\pm$ 0.4 & 83.2 $\pm$ 0.1 & 85.0 $\pm$ 0.3 & 85.6 $\pm$ 0.1 & \best{87.1 $\pm$ 0.2} & 93.9 \\
    \midrule
    ResNet-110 $\rightarrow$ ResNet-20 & 91.4 $\pm$ 0.2 & 91.4 $\pm$ 0.1 & 92.8 $\pm$ 0.0 & 92.2 $\pm$ 0.3 & \best{93.1 $\pm$ 0.1} & 93.9 \\
    ResNet-110 $\rightarrow$ ResNet-20 & 90.9 $\pm$ 0.1 & 90.9 $\pm$ 0.2 & 91.6 $\pm$ 0.2 & 91.2 $\pm$ 0.1 & \best{92.1 $\pm$ 0.2} & 93.9 \\
    \bottomrule
    \end{tabular}%
\end{table}

\begin{figure}[!t]
    \centering
    \begin{subfigure}{0.49\textwidth}
        \includegraphics[width=\textwidth]{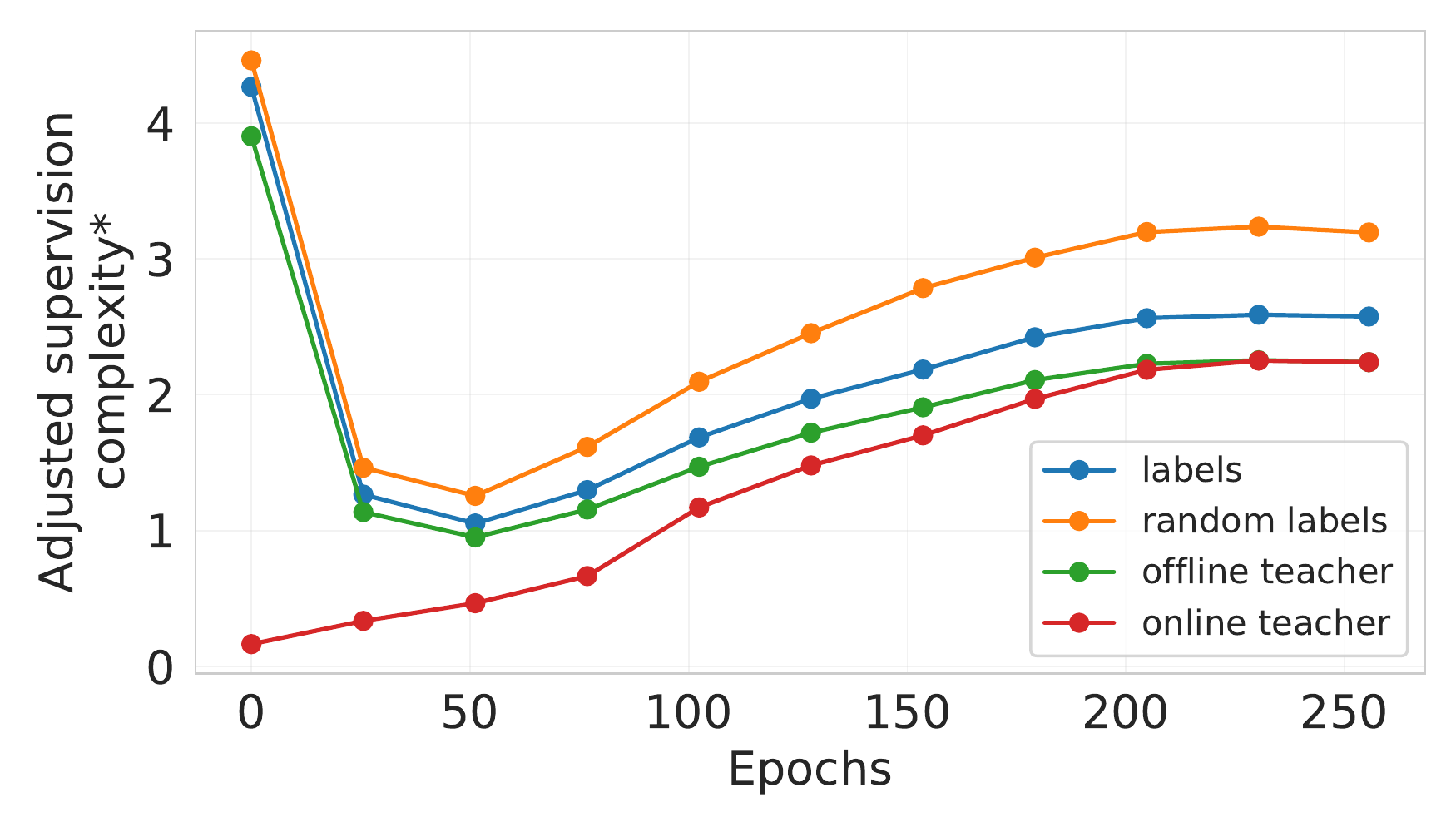}
    \end{subfigure}
    \begin{subfigure}{0.49\textwidth}
        \includegraphics[width=\textwidth]{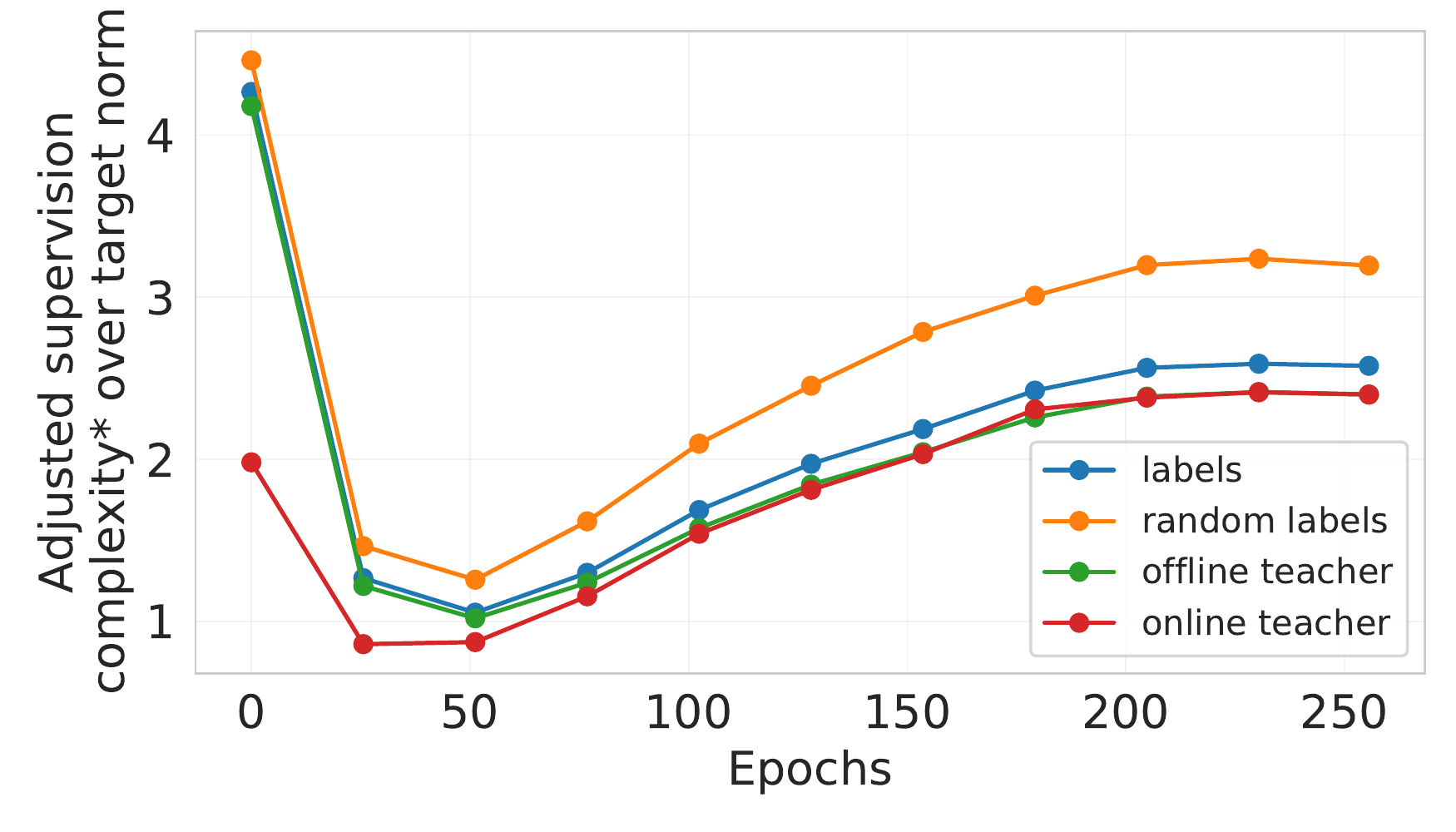}
    \end{subfigure}
    \caption{Adjusted supervision complexities* of various targets with respect to a LeNet-5x8 network at different stages of its training. The experimental setup of the left and right plots matches that of \cref{fig:sup-res-complexity-comparison-test} and \cref{fig:normalized-sup-res-complexity-comparison-test} respectively.}
    \label{fig:sup-non-res-complexity-comparison-test}
\end{figure}

\textbf{Comparison of supervision complexities.~} In the main text, we have introduced the notion of \emph{adjusted supervision complexity}, which for a given neural network $f(x)$ with NTK $k$ and a set of labeled examples $\mathset{(X_i,Y_i)}_{i \in [m]}$ is defined as:
\begin{equation}
    \frac{1}{n} \sqrt{(\bs{Y} - f(X_{1:m}))^\top K^{-1} (\bs{Y} - f(X_{1:m})) \cdot \trace\rbr{K}}.\label{eq:adjusted-res-complexity}
\end{equation}
As discussed earlier, the subtraction of initial predictions is the appropriate way to measure complexity given the form of the optimization problem~\eqref{eq:kernel-method-NN-fn-space}.
Nevertheless, it is meaningful to consider the following quantity as well:
\begin{equation}
    \frac{1}{n} \sqrt{\bs{Y}^\top K^{-1} \bs{Y} \cdot \trace\rbr{K}},\label{eq:adjusted-non-res-complexity}
\end{equation}
in order to measure ``alignment'' of targets $\bs{Y}$ with the NTK $k$. We call this quantity \emph{adjusted supervision complexity*}.
We compare the adjusted supervision complexities* of random labels, dataset labels, and predictions of an offline and online ResNet-56 teacher predictions with respect to various checkpoints of the LeNet-5x8 network.
The results presented in \cref{fig:sup-non-res-complexity-comparison-test} are remarkably similar to the results with adjusted supervision complexity (\cref{fig:sup-res-complexity-comparison-test} and \cref{fig:normalized-sup-res-complexity-comparison-test}).
We therefore, focus only on adjusted supervision complexity of \eqref{eq:adjusted-res-complexity} when comparing various targets.
The only other experiment where we compute adjusted supervision complexities* (i.e., without subtracting the current predictions from labels) is presented in \cref{fig:ntk-alignment-with-labels}, where the goal is to demonstrate that training labels become aligned with the training NTK matrix over the course of training.

\cref{fig:sup-res-complexity-comparison-test-resnet20} presents the comparison of adjusted supervision complexities, but with respect to a ResNet-20 network, instead of a LeNet-5x8 network.
Again, we see that in the early epochs dataset labels and offline teacher predictions are almost as complex as random labels.
Unlike the case of the LeNet-5x8 network, random labels, dataset labels, and offline teacher predictions do not exhibit a U-shaped behavior.
As for the LeNet-5x8 network, the shape of these curves is in agreement with the behavior of the condition number of the NTK (\cref{fig:ntk-condition-number}).
Importantly, we still observe that the complexity of the online teacher predictions is significantly smaller compared to the other targets, even when we account for the norm of predictions.

\begin{figure}[!t]
    \centering
    \begin{subfigure}{0.49\textwidth}
        \includegraphics[width=\textwidth]{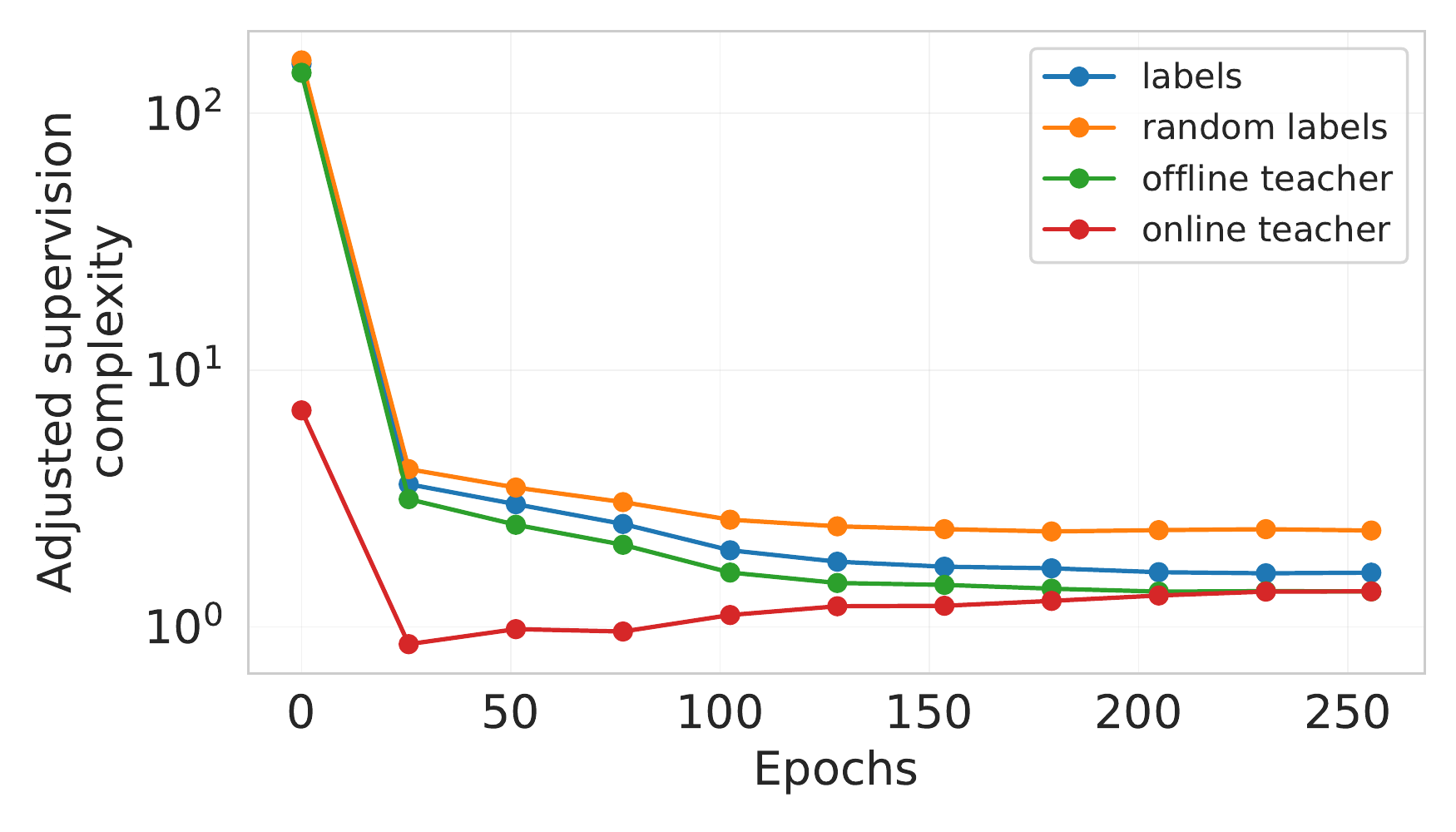}
    \end{subfigure}
    \begin{subfigure}{0.49\textwidth}
        \includegraphics[width=\textwidth]{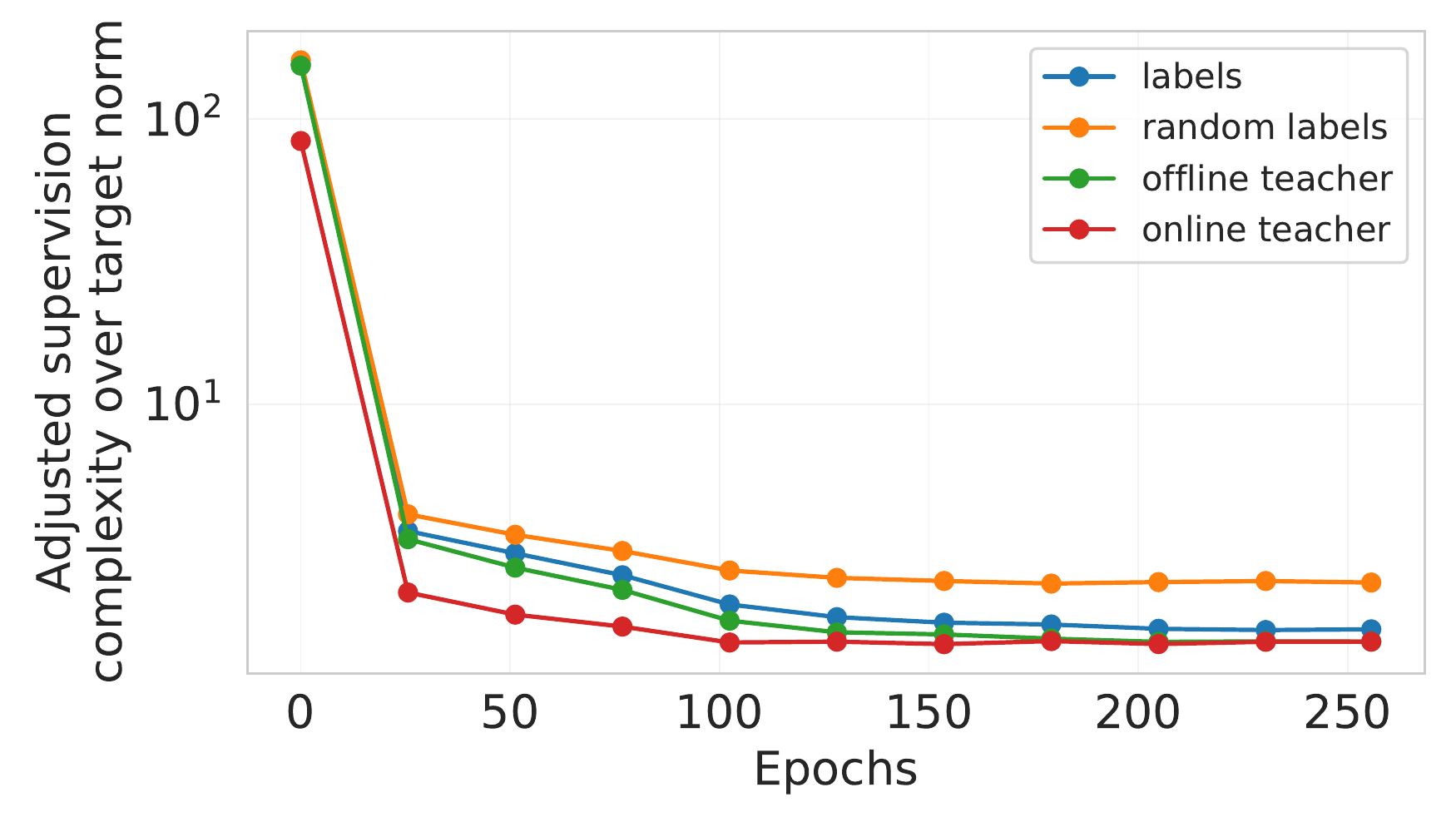}
    \end{subfigure}
    \caption{Adjusted supervision complexities of various targets with respect to a \emph{ResNet-20} network at different stages of its training. Besides the network choice, the experimental setup of the left and right plots matches that of \cref{fig:sup-res-complexity-comparison-test} and \cref{fig:normalized-sup-res-complexity-comparison-test} respectively. Note that the y-axes are in logarithmic scale.}
    \label{fig:sup-res-complexity-comparison-test-resnet20}
\end{figure}

\begin{figure}[!t]
    \centering
    \begin{subfigure}{0.49\textwidth}
        \includegraphics[width=\textwidth]{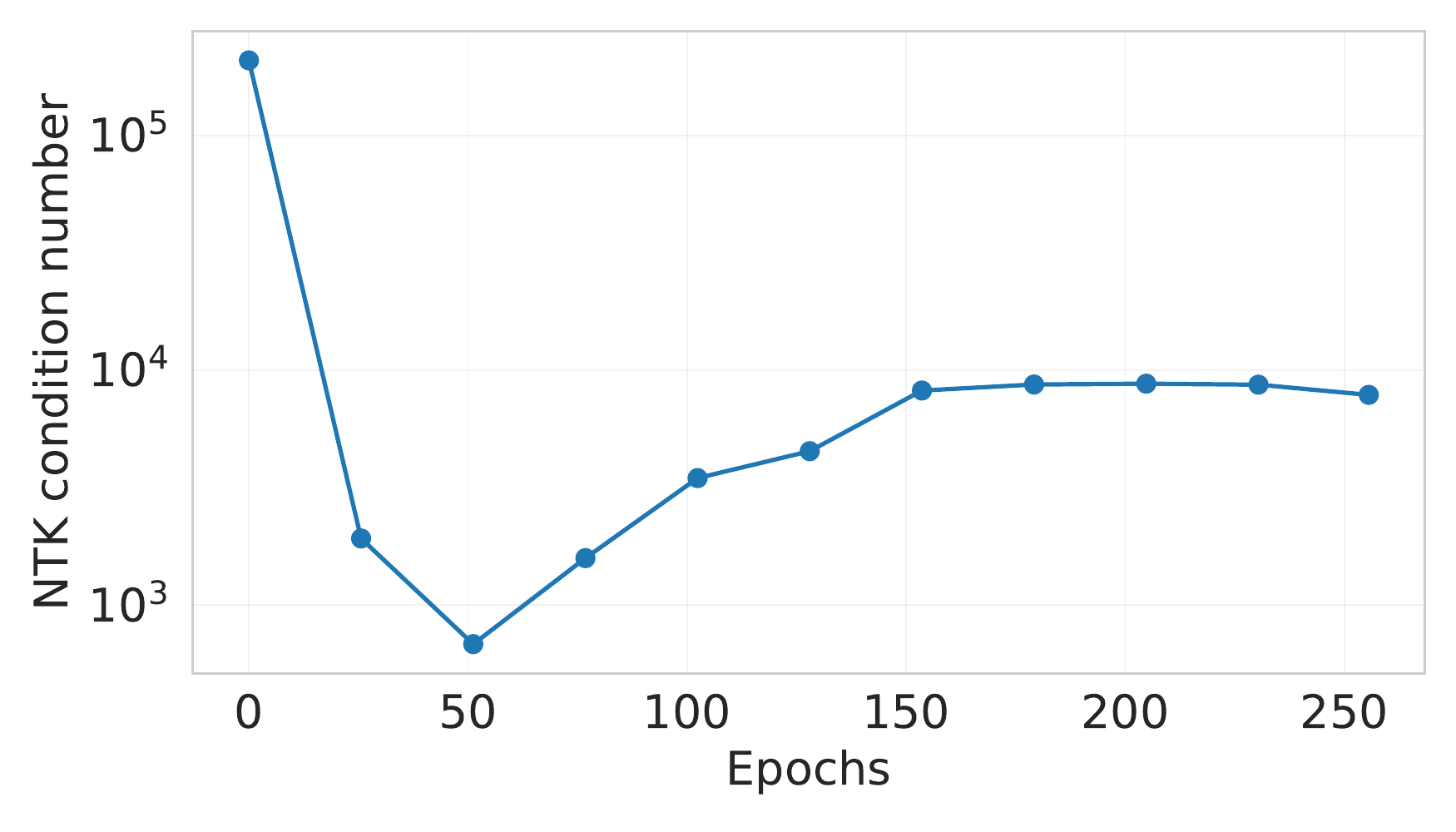}
        \caption{LeNet-5x8 student}
    \end{subfigure}
    \begin{subfigure}{0.49\textwidth}
        \includegraphics[width=\textwidth]{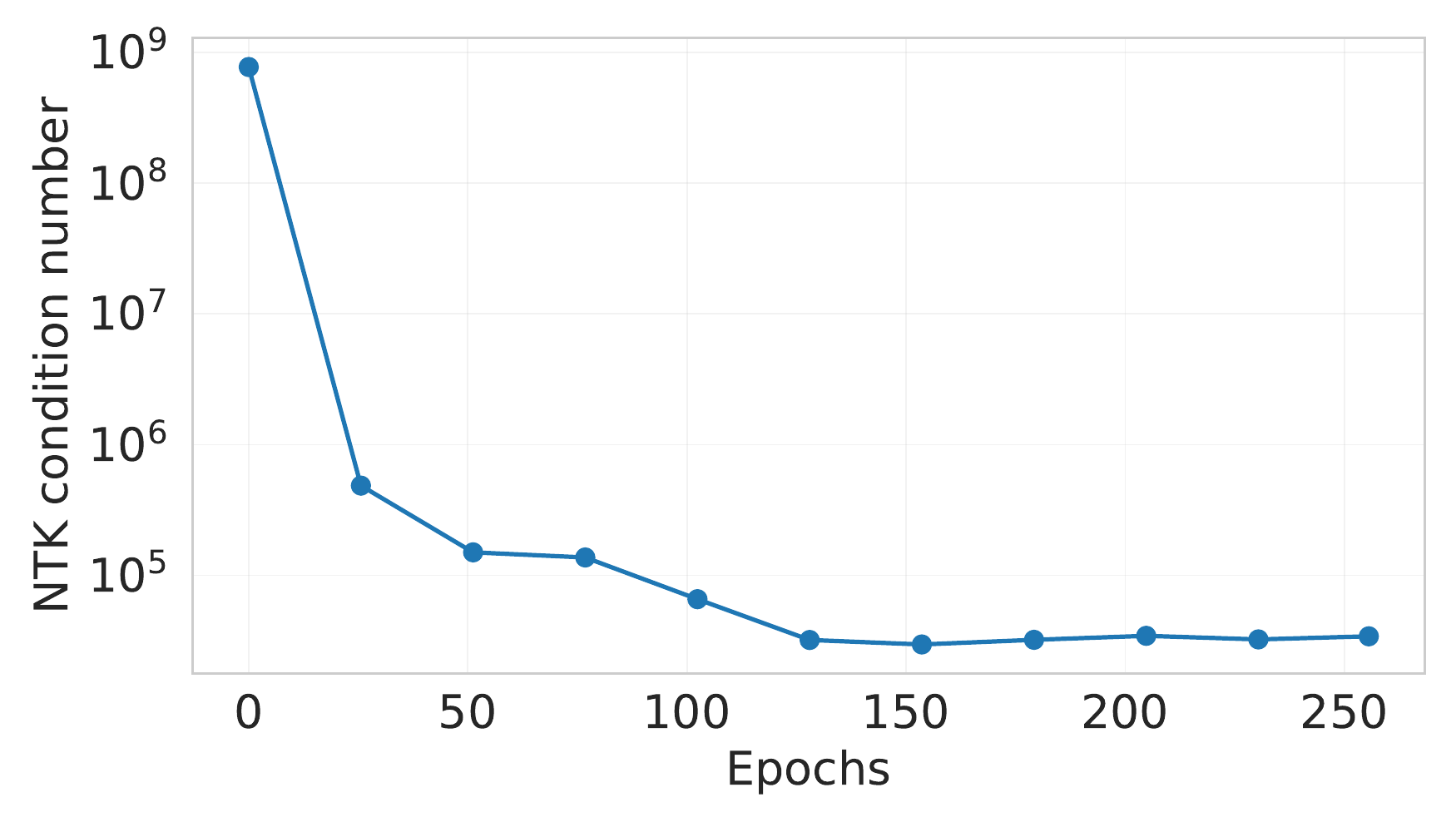}
        \caption{ResNet-20 student}
    \end{subfigure}
    \caption{Condition number of the NTK matrix of a LeNet5x8 (a) and ResNet-20 (b) students trained with MSE loss on binary CIFAR-100. The NTK matrices are computed on $2^{12}$ test examples.}
    \label{fig:ntk-condition-number}
\end{figure}

\begin{figure}[t]
    \centering
    \begin{subfigure}{0.49\textwidth}
        \includegraphics[width=\textwidth]{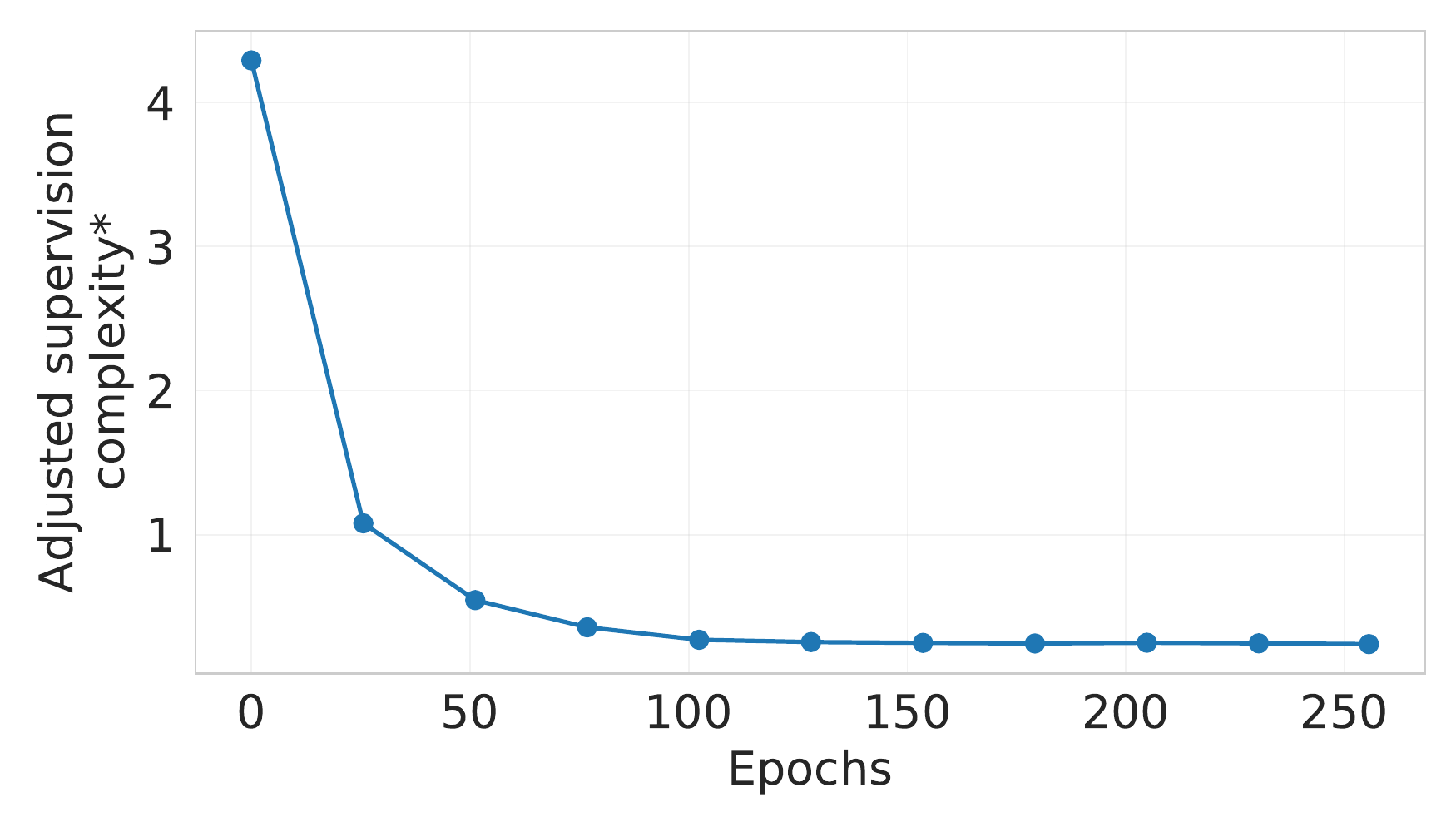}
        \caption{LeNet-5x8 student}
    \end{subfigure}
    \begin{subfigure}{0.49\textwidth}
        \includegraphics[width=\textwidth]{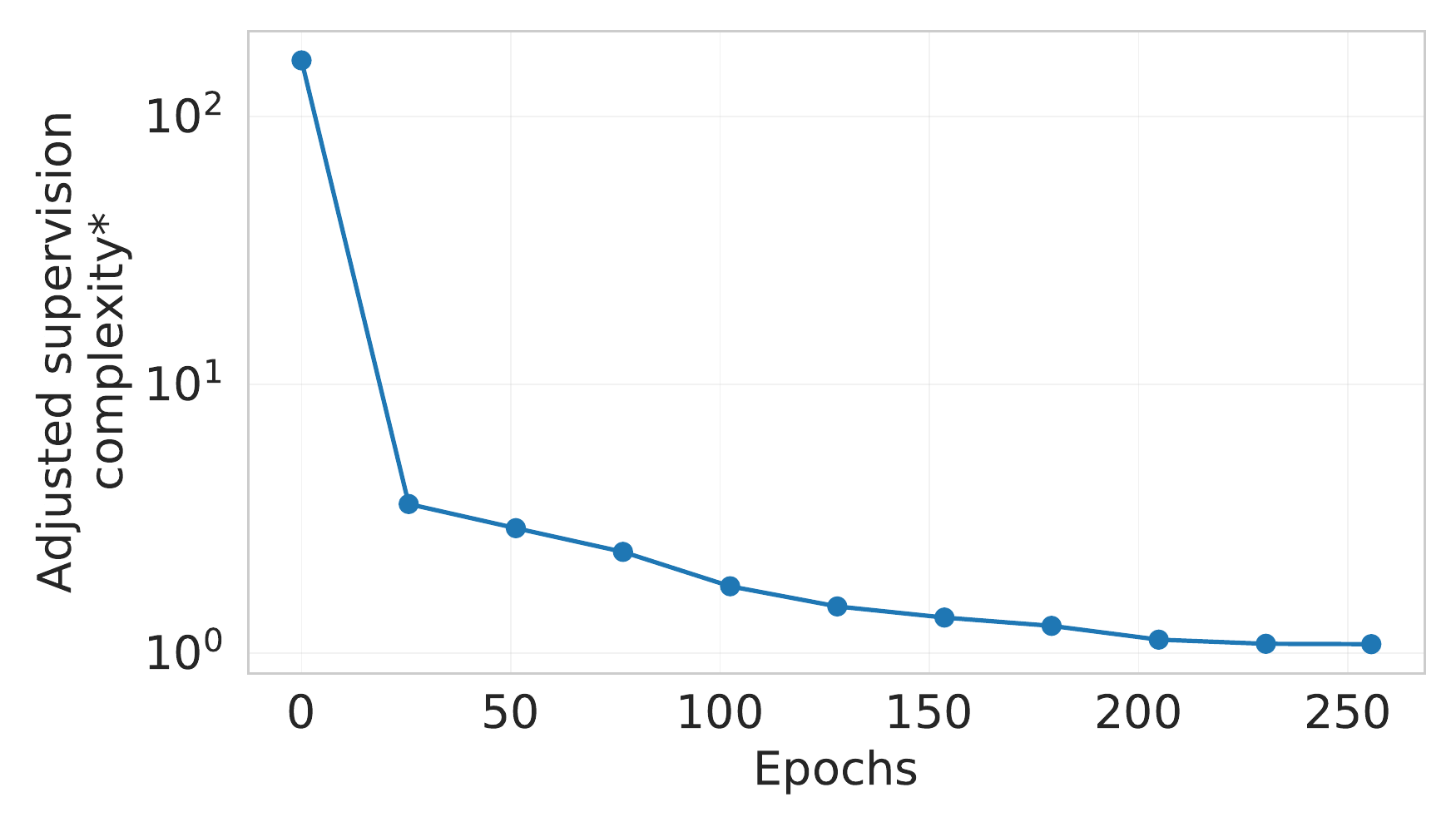}
        \caption{ResNet-20 student}
    \end{subfigure}
    \caption{Adjusted supervision complexity* of dataset labels measured on a subset of $2^{12}$ \emph{training} examples of binary CIFAR-100. Complexities are measured with respect to either a LeNet-5x8 (on the left) or ResNet-20 (on the right) models trained with MSE loss and without knowledge distillation. Note that the plot on the right is in logarithmic scale.}
    \label{fig:ntk-alignment-with-labels}
\end{figure}

\rebuttal{
\textbf{The effect of frequency of teacher checkpoints.~} As mentioned earlier, throughout this paper we used one teacher checkpoint per epoch. While this served our goal of establishing efficacy of online distillation, this choice is prohibitive for large teacher networks.
To understand the effect of the frequency of teacher checkpoints, we conduct an experiment on CIFAR-100 with ResNet-56 and LeNet-5x8 student with varying frequency of teacher checkpoints. In particular, we consider checkpointing the teacher once in every $\mathset{1, 2, 4, 8, 16, 32, 64, 128}$ epochs.
The results presented in \cref{fig:cifar100-ckpt-frequency} show that reducing the teacher checkpointing frequency to once in 16 epochs results in only a minor performance drop for online distillation with $\tau=4$.
}

\begin{figure}
    \centering
    \begin{subfigure}[h]{0.49\textwidth}
    \scalebox{1}{
    \begin{tabular}{@{}lcc@{}}
        \toprule
        {\bf Teacher update period} & \multicolumn{2}{c}{\bf Online KD} \\
        {\bf in epochs} & $\tau = 1$ & $\tau = 4$ \\
        \midrule
        1 (the default value) & 61.9 $\pm$ 0.2 & 66.1 $\pm$ 0.4 \\
        2   & 61.5 $\pm$ 0.4 & 66.0 $\pm$ 0.3  \\
        4   & 61.4 $\pm$ 0.2 & 65.6 $\pm$ 0.2  \\
        8   & 60.0 $\pm$ 0.3 & 65.4 $\pm$ 0.0  \\
        16  & 59.3 $\pm$ 0.6 & 65.4 $\pm$ 0.0  \\
        32  & 56.9 $\pm$ 0.0 & 64.1 $\pm$ 0.4  \\
        64  & 55.5 $\pm$ 0.4 & 62.8 $\pm$ 0.7  \\
        128 & 51.4 $\pm$ 0.5 & 61.3 $\pm$ 0.1  \\
        \bottomrule
    \end{tabular}%
    }
    \end{subfigure}
    \begin{subfigure}[h]{0.49\textwidth}
        \includegraphics[width=0.9\textwidth]{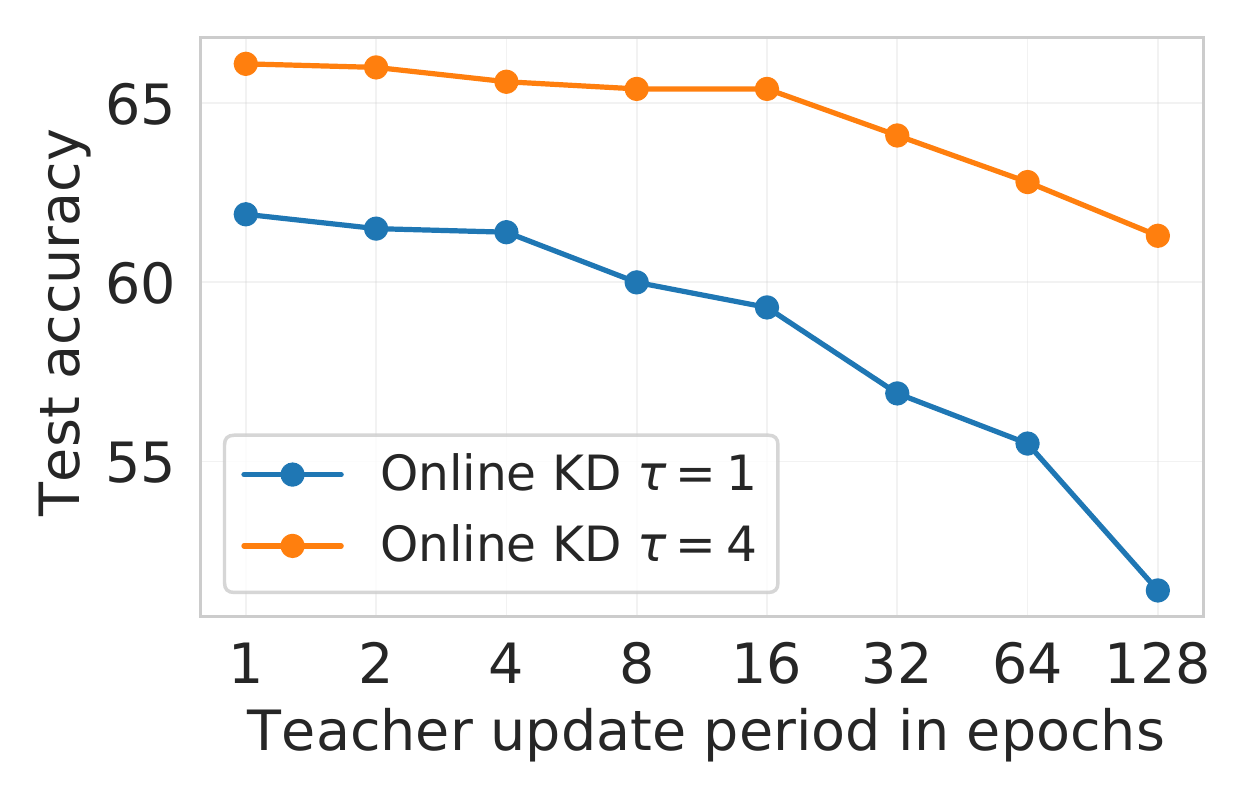}
    \end{subfigure}
    \caption{\rebuttal{Online KD results for a LeNet-5x8 student on CIFAR-100 with varying frequency of a ResNet-56 teacher checkpoints.}}
    \label{fig:cifar100-ckpt-frequency}
\end{figure}

\rebuttal{
\textbf{On label supervision in KD.~} So far in all distillation methods dataset labels were not used as an additional source of supervision for students.
However, in practice it is common to train a student with a convex combination of knowledge distillation and standard losses: $(1 - \alpha) \mathcal{L}_\text{ce} + \alpha \LL_\text{kd-ce}$.
To verify that the choice of $\alpha=1$ does not produce unique conclusions regarding efficacy of online distillation, we do experiments on CIFAR-100 with varying values of $\alpha$.
The results presented in \cref{tbl:cifar100-kd-alpha} confirm our main conclusions on online distillation.
Furthermore, we observe that picking $\alpha=1$ does not result in significant degradation of student performance.
}

\begin{table}[!t]
    \centering
    \caption{\rebuttal{Knowledge distillation results on CIFAR-100 with varying loss mixture coefficient $\alpha$.}}
    \label{tbl:cifar100-kd-alpha}
    \begin{tabular}{@{}llcccccc@{}}
    \toprule
    {\bf Setting} & {\bf $\alpha$} & {\bf No KD} & \multicolumn{2}{c}{\bf Offline KD} & \multicolumn{2}{c}{\bf Online KD} & {\bf Teacher} \\
     & &  & $\tau = 1$ & $\tau = 4$ & $\tau = 1$ & $\tau = 4$ & \\
    \midrule
    \multirow{5}{*}{{\centering \begin{tabular}{c}ResNet-56 $\rightarrow$\\LeNet-5x8\end{tabular}}} & 0.2 & \multirow{5}{*}{47.3 $\pm$ 0.6} & 47.6 $\pm$ 0.7 & 57.6 $\pm$ 0.2 & 54.3 $\pm$ 0.7 & 59.0 $\pm$ 0.6 & \multirow{5}{*}{72.0}\\
    & 0.4 & & 48.9 $\pm$ 0.3 & 58.9 $\pm$ 0.4 & 56.7 $\pm$ 0.5 & 62.5 $\pm$ 0.2 & \\
    & 0.6 & & 49.4 $\pm$ 0.5 & 59.7 $\pm$ 0.0 & 61.1 $\pm$ 0.0 & 65.3 $\pm$ 0.2 & \\
    & 0.8 & & 49.8 $\pm$ 0.1 & 60.1 $\pm$ 0.1 & 62.0 $\pm$ 0.1 & 65.9 $\pm$ 0.2 & \\
    & 1.0 & & 50.1 $\pm$ 0.4 & 59.9 $\pm$ 0.2 & 61.9 $\pm$ 0.2 & 66.1 $\pm$ 0.4 & \\
    \midrule
    \multirow{5}{*}{{\centering \begin{tabular}{c}ResNet-56 $\rightarrow$\\ResNet-20\end{tabular}}} & 0.2 & \multirow{5}{*}{67.7 $\pm$ 0.5} & 67.9 $\pm$ 0.3 & 70.3 $\pm$ 0.3 & 68.2 $\pm$ 0.3 & 70.3 $\pm$ 0.1 & \multirow{5}{*}{72.0}\\
    & 0.4 & & 67.9 $\pm$ 0.1 & 71.0 $\pm$ 0.2 & 68.7 $\pm$ 0.2 & 71.4 $\pm$ 0.2 & \\
    & 0.6 & & 68.1 $\pm$ 0.3 & 71.3 $\pm$ 0.1 & 69.6 $\pm$ 0.4 & 71.5 $\pm$ 0.2 & \\
    & 0.8 & & 68.3 $\pm$ 0.2 & 71.4 $\pm$ 0.4 & 69.8 $\pm$ 0.3 & 71.1 $\pm$ 0.3 & \\
    & 1.0 & & 68.2 $\pm$ 0.3 & 71.6 $\pm$ 0.2 & 69.6 $\pm$ 0.3 & 71.4 $\pm$ 0.3 & \\
    \bottomrule
    \end{tabular}%
\end{table}

\rebuttal{
\textbf{The effect of $\tau$.~} To confirm that our main conclusions regarding online distillation do not depend on the temperature value, we present additional experiments on CIFAR-100 with $\tau=2$ in \cref{tbl:results-cifar1000-tau=2}.
}

\begin{table}[!t]
    \centering
    \caption{\rebuttal{Results on CIFAR-100.}}
    \label{tbl:results-cifar1000-tau=2}
    \begin{tabular}{@{}lcccccc@{}}
    \toprule
    {\bf Setting} & {\bf No KD} & \multicolumn{2}{c}{\bf Offline KD} & \multicolumn{2}{c}{\bf Online KD} & {\bf Teacher} \\
     &  & $\tau = 1$ & $\tau = 2$ & $\tau = 1$ & $\tau = 2$ & \\
    \midrule
    ResNet-56 $\rightarrow$ LeNet-5x8 & 47.3 $\pm$ 0.6 & 50.1 $\pm$ 0.4 & 55.2 $\pm$ 0.1 & 61.9 $\pm$ 0.2 & \best{64.7 $\pm$ 0.2} & 72.0\\
    ResNet-56 $\rightarrow$ ResNet-20 & 67.7 $\pm$ 0.5 & 68.2 $\pm$ 0.3 & 70.4 $\pm$ 0.3 & 69.6 $\pm$ 0.3 & \best{70.8 $\pm$ 0.3} & 72.0\\
    ResNet-110 $\rightarrow$ LeNet-5x8 & 47.2 $\pm$ 0.5 & 48.6 $\pm$ 0.8 & 54.0 $\pm$ 0.5 & 60.8 $\pm$ 0.2 & \best{63.9 $\pm$ 0.2} & 73.4\\
    ResNet-110 $\rightarrow$ ResNet-20 & 67.8 $\pm$ 0.3 & 67.8 $\pm$ 0.2 & 69.3 $\pm$ 0.2 & 69.0 $\pm$ 0.3 & \best{70.5 $\pm$ 0.3} & 73.4\\
    \bottomrule
    \end{tabular}%
\end{table}

\textbf{NTK similarity.~} \cref{fig:ntk-matching-comparision-1-appendix,fig:ntk-matching-comparision-2-appendix} presents additional evidence that (a) training NTK similarity of the final student and the teacher is correlated with the final student test accuracy; and (b) that online distillation manages to transfer teacher NTK better.

\begin{figure}[!t]
    \centering
    \begin{subfigure}{0.35\textwidth}
        \includegraphics[width=\textwidth]{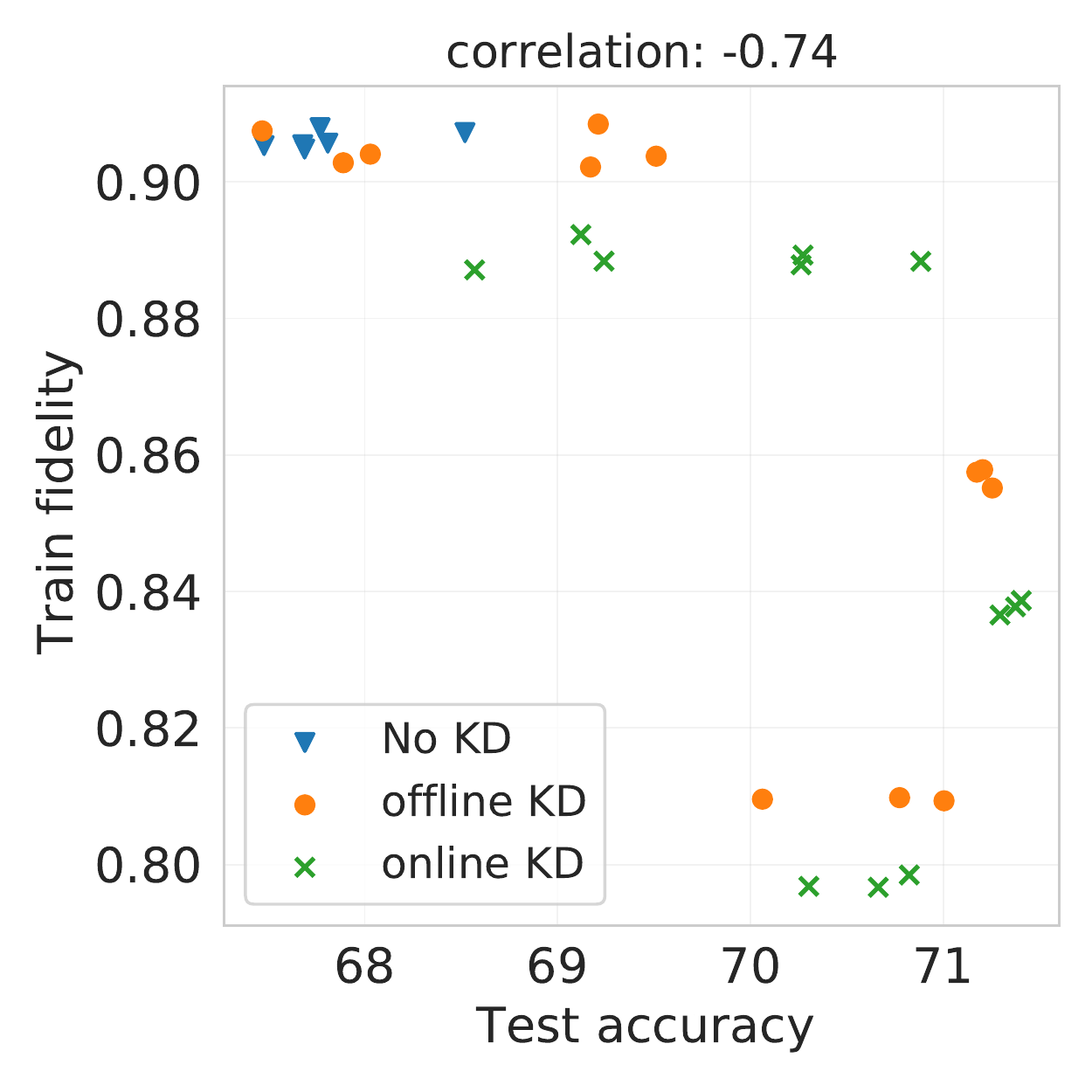}
        \caption{{\centering CIFAR-100, ResNet-20 student, ResNet-110 teacher}}
    \end{subfigure}
    \hspace{4em}
    \begin{subfigure}{0.35\textwidth}
        \includegraphics[width=\textwidth]{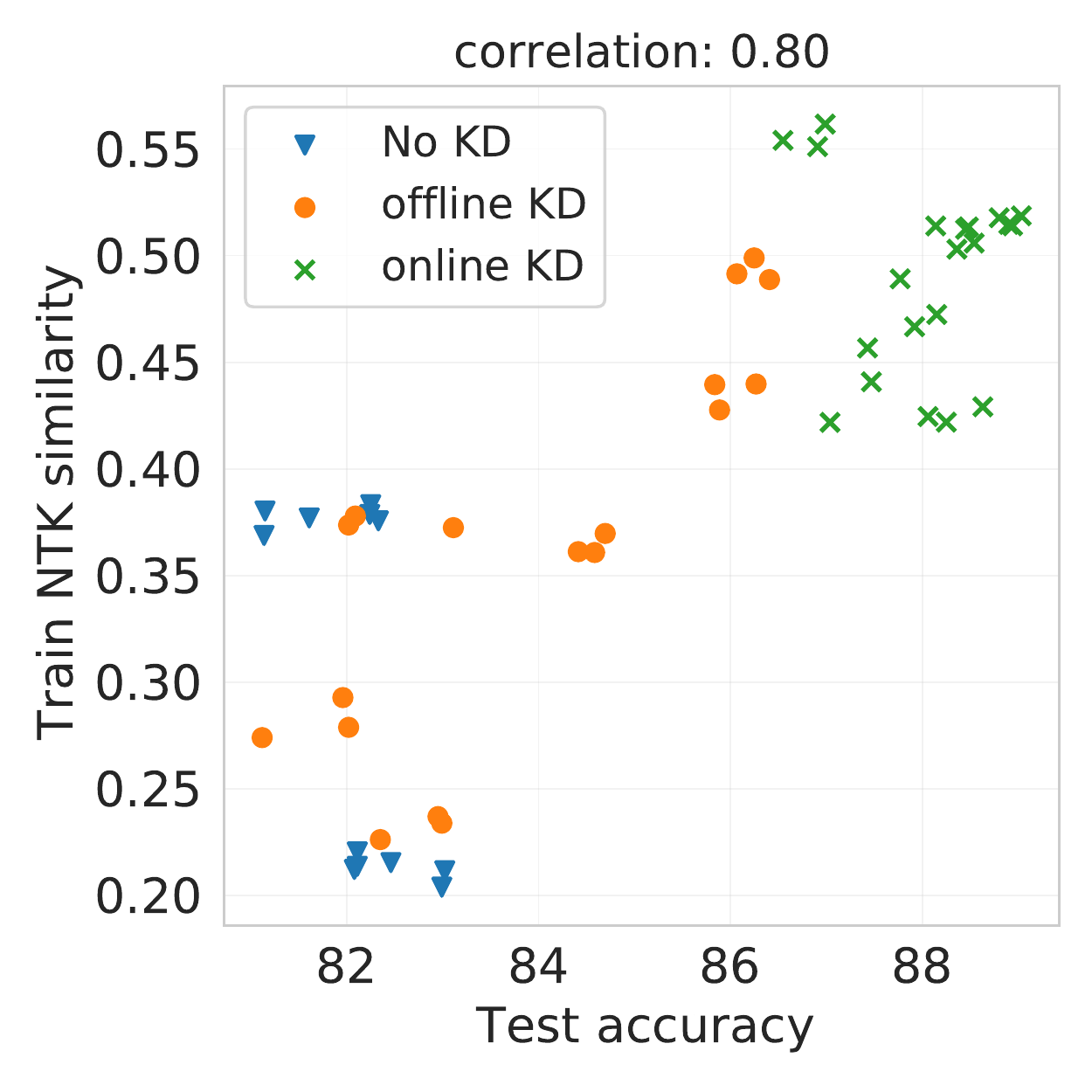}
        \caption{{\centering CIFAR-10, LeNet-5x8 student, ResNet-56 teacher}}
    \end{subfigure}
    \vskip 1em
    \begin{subfigure}{0.35\textwidth}
        \includegraphics[width=\textwidth]{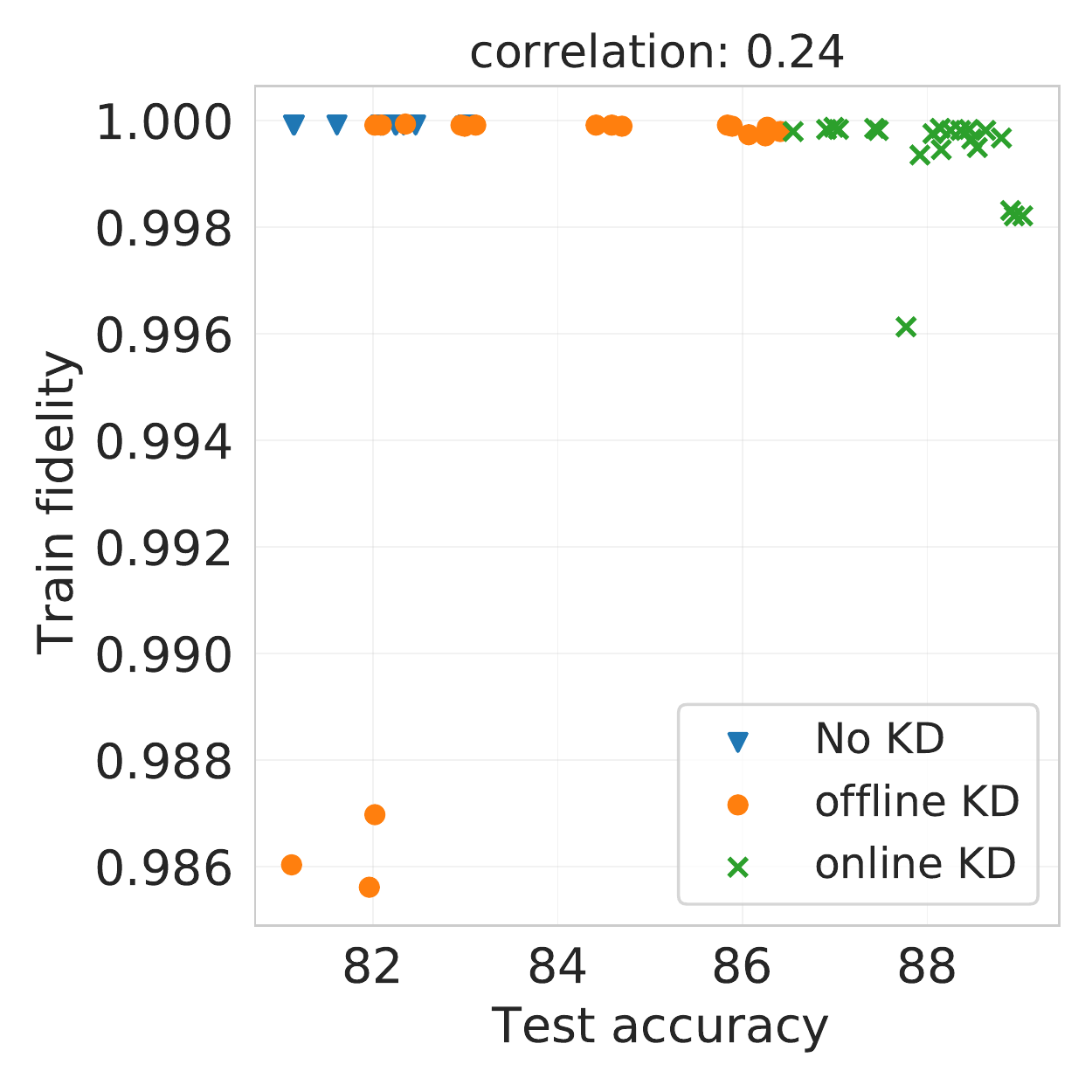}
        \caption{{\centering CIFAR-10, LeNet-5x8 student, ResNet-56 teacher}}
    \end{subfigure}
    \hspace{4em}
    \begin{subfigure}{0.35\textwidth}
        \includegraphics[width=\textwidth]{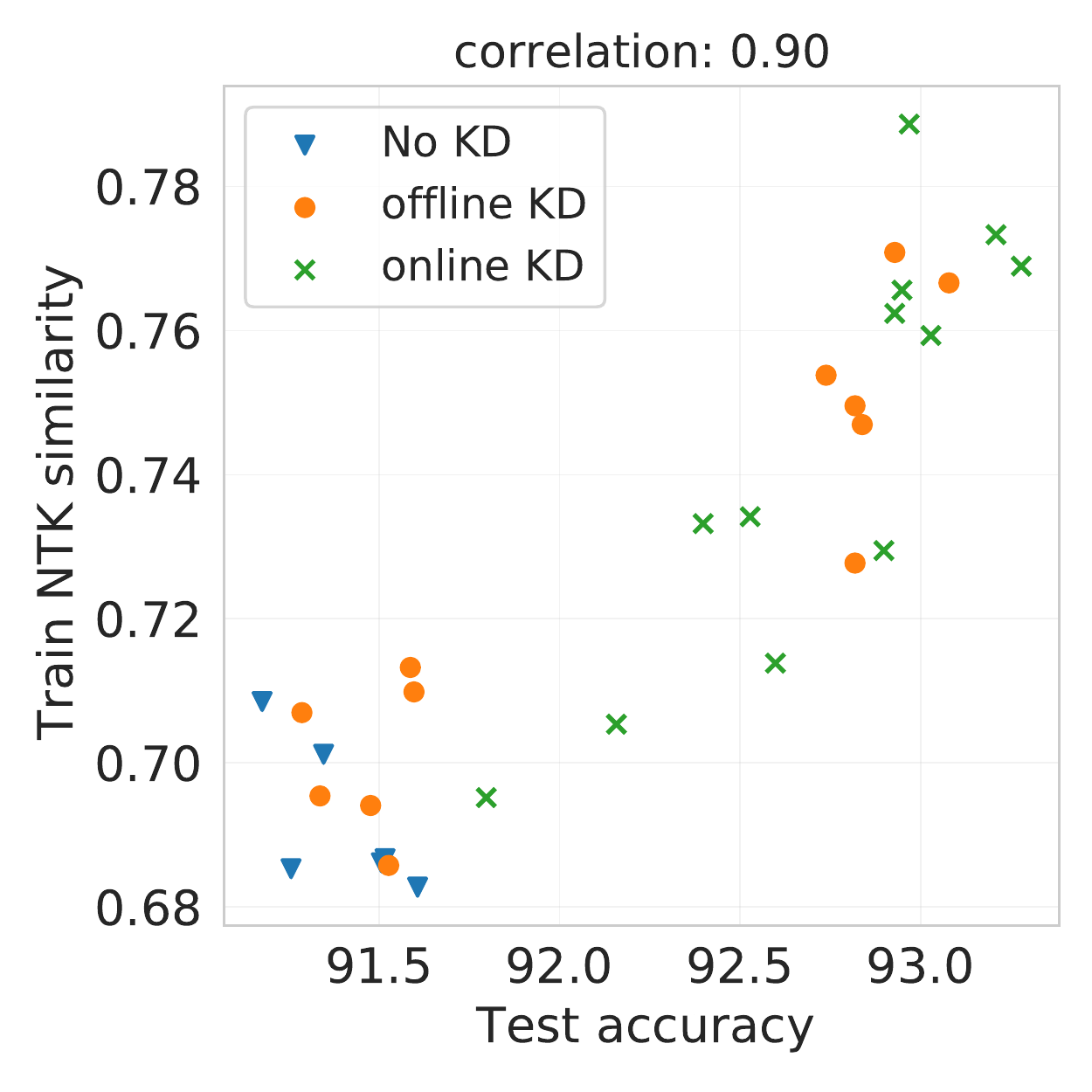}
        \caption{\rebuttal{{\centering CIFAR-10, ResNet-20 student, ResNet-101 teacher}}}
    \end{subfigure}
    \vskip 1em
    \begin{subfigure}{0.35\textwidth}
        \includegraphics[width=\textwidth]{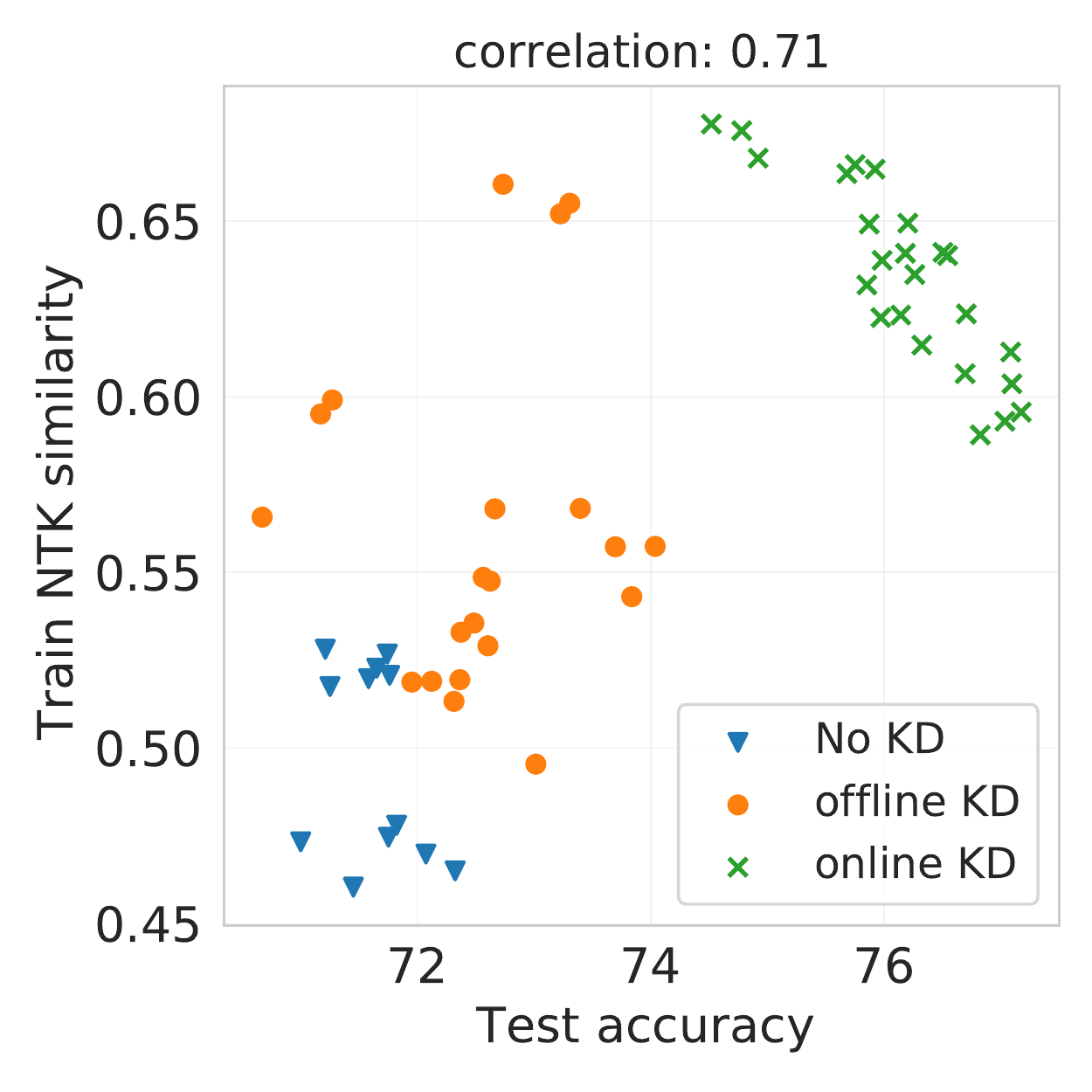}
        \caption{{\centering Binary CIFAR-100, LeNet-5x8 student, ResNet-56 teacher}}
    \end{subfigure}
    \hspace{4em}
    \begin{subfigure}{0.35\textwidth}
        \includegraphics[width=\textwidth]{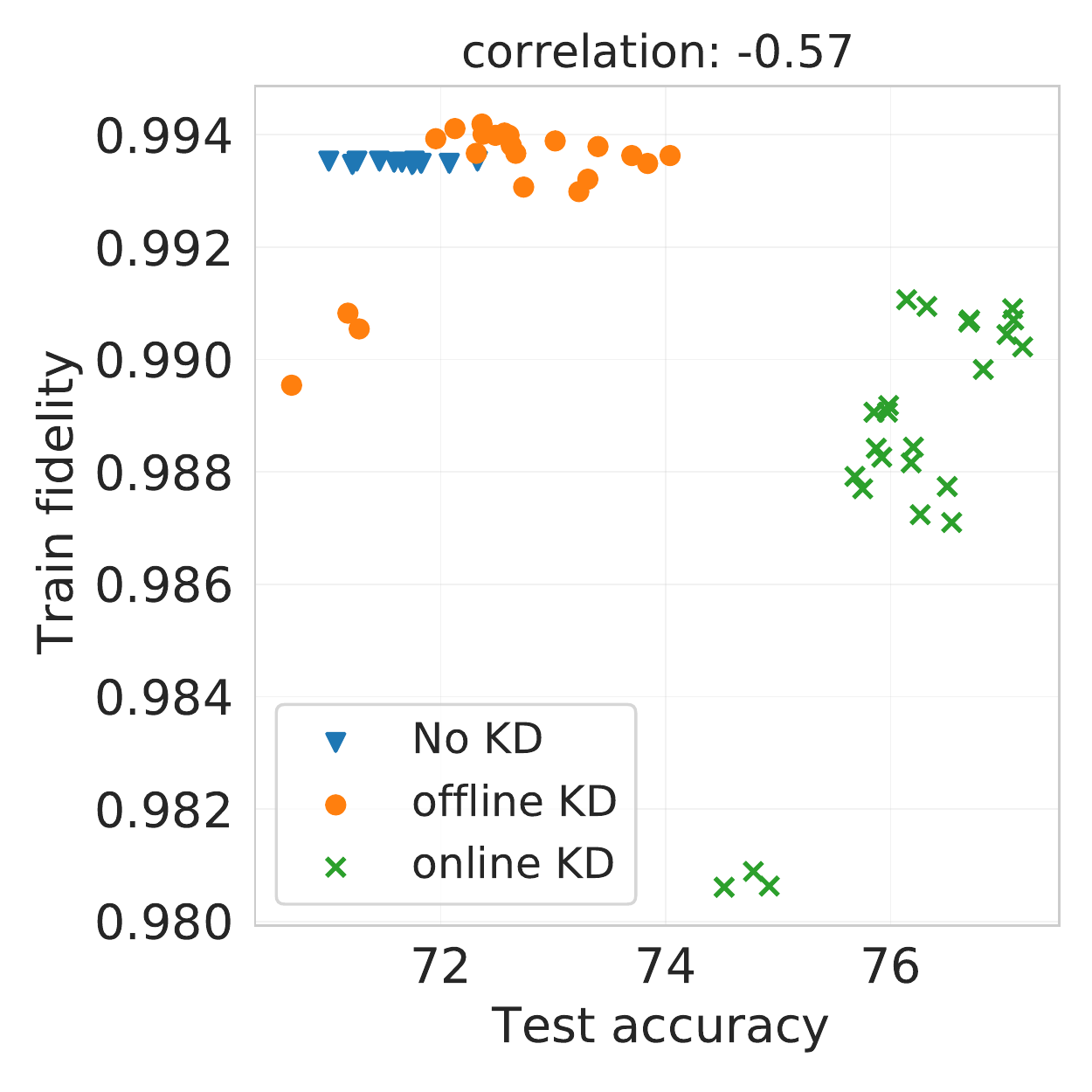}
        \caption{{\centering Binary CIFAR-100, LeNet-5x8 student, ResNet-56 teacher}}
    \end{subfigure}
    \caption{Relationship between test accuracy, train NTK similarity, and train fidelity for various teacher, student, and dataset configurations.}
    \label{fig:ntk-matching-comparision-1-appendix}
\end{figure}

\begin{figure}[!t]
    \centering
    \begin{subfigure}{0.35\textwidth}
        \includegraphics[width=\textwidth]{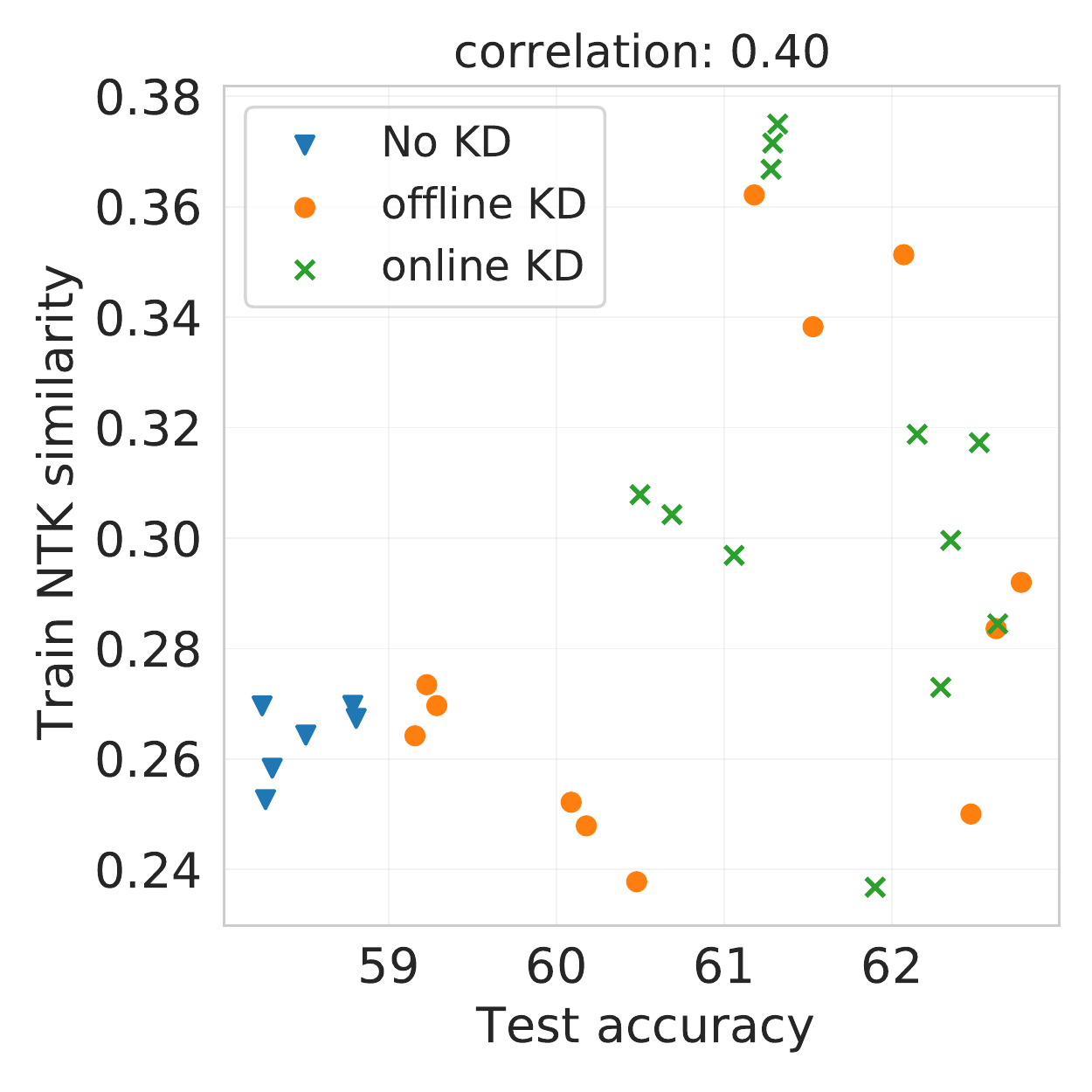}
        \caption{{\centering Tiny ImageNet, MobileNet-V3-35 student, MobileNet-V3-125 teacher}}
    \end{subfigure}
    \hspace{4em}
    \begin{subfigure}{0.35\textwidth}
        \includegraphics[width=\textwidth]{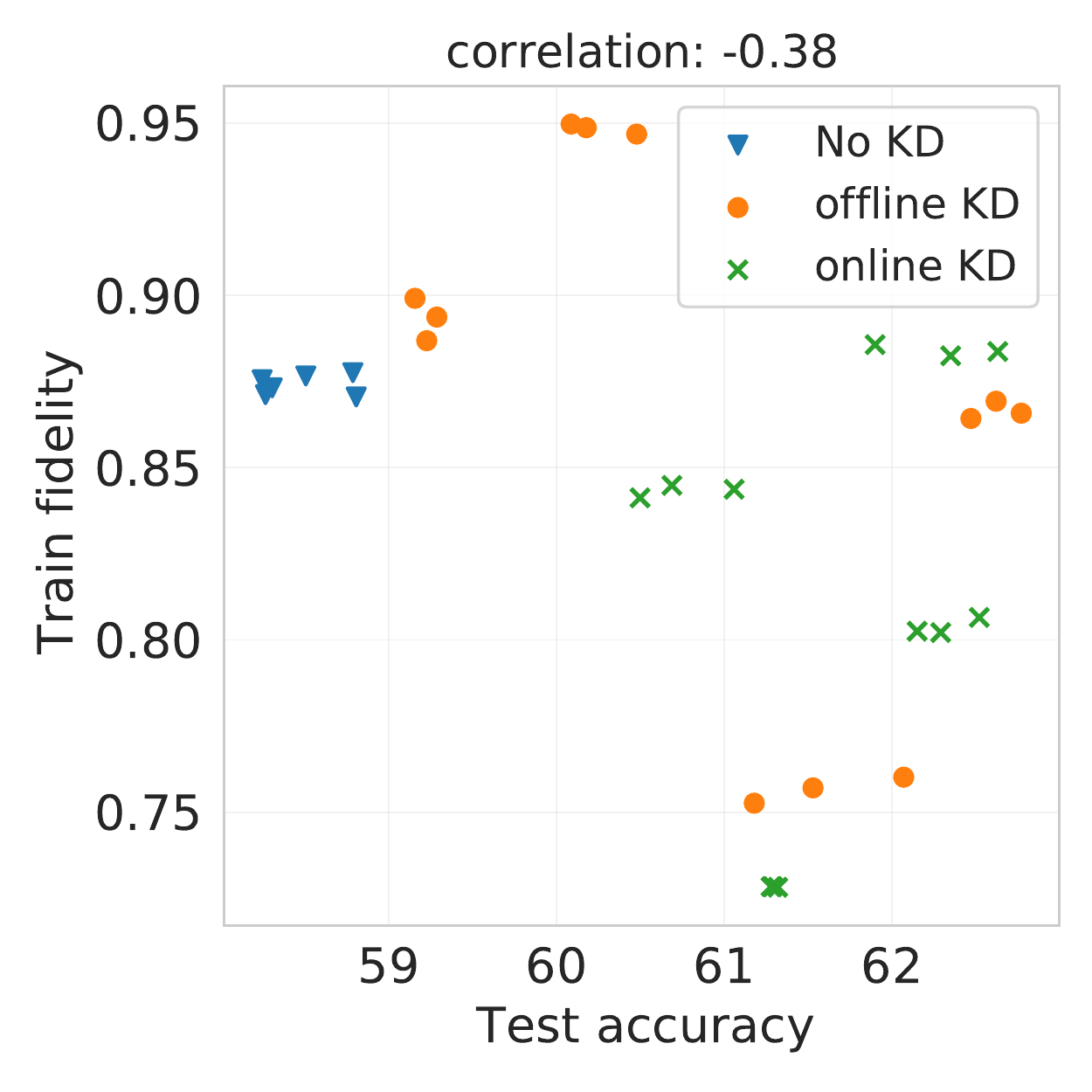}
        \caption{{\centering Tiny ImageNet, MobileNet-V3-35 student, MobileNet-V3-125 teacher}}
    \end{subfigure}
    \vskip 1em
    \begin{subfigure}{0.35\textwidth}
        \includegraphics[width=\textwidth]{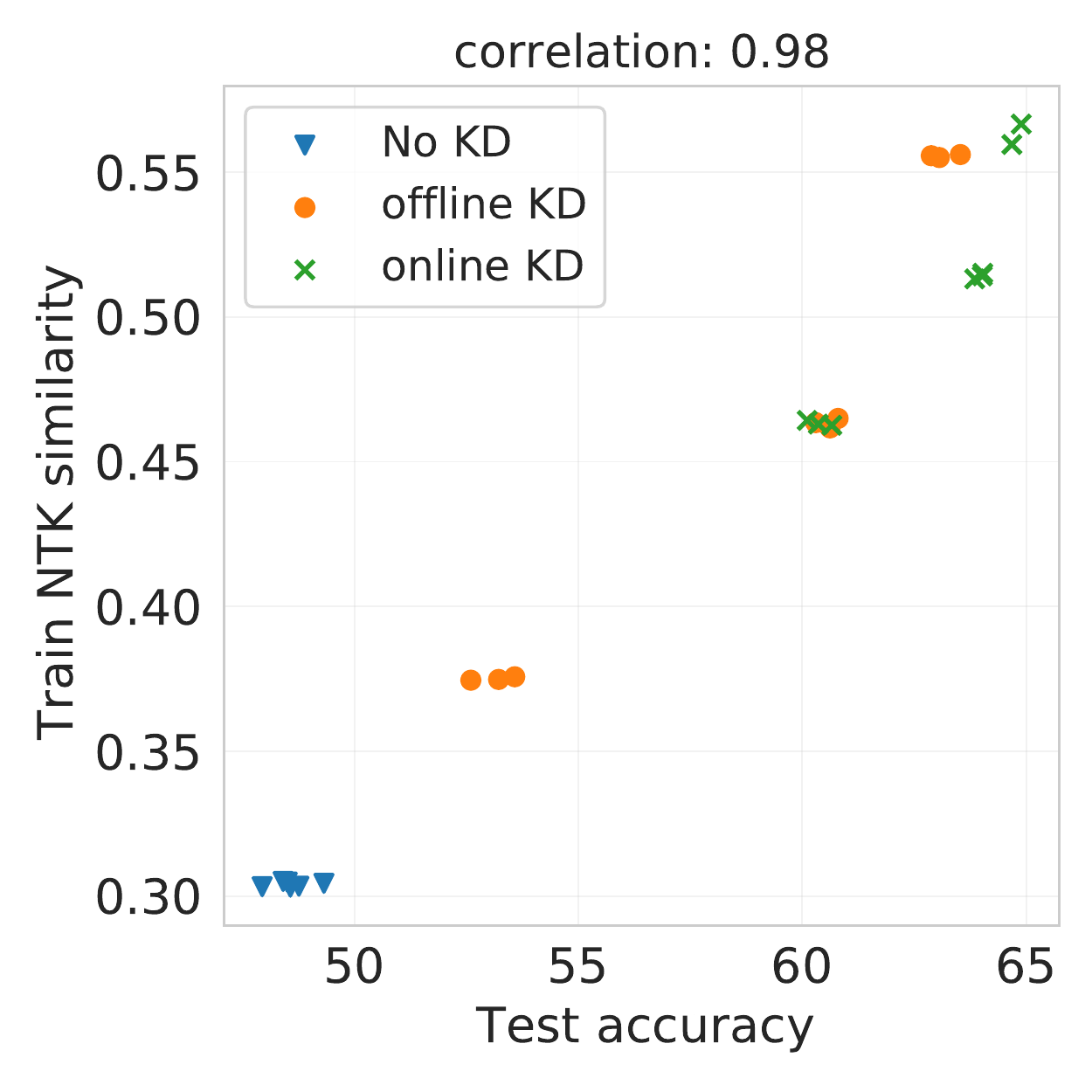}
        \caption{{\centering Tiny ImageNet, VGG-16 student, ResNet-101 teacher}}
    \end{subfigure}
    \hspace{4em}
    \begin{subfigure}{0.35\textwidth}
        \includegraphics[width=\textwidth]{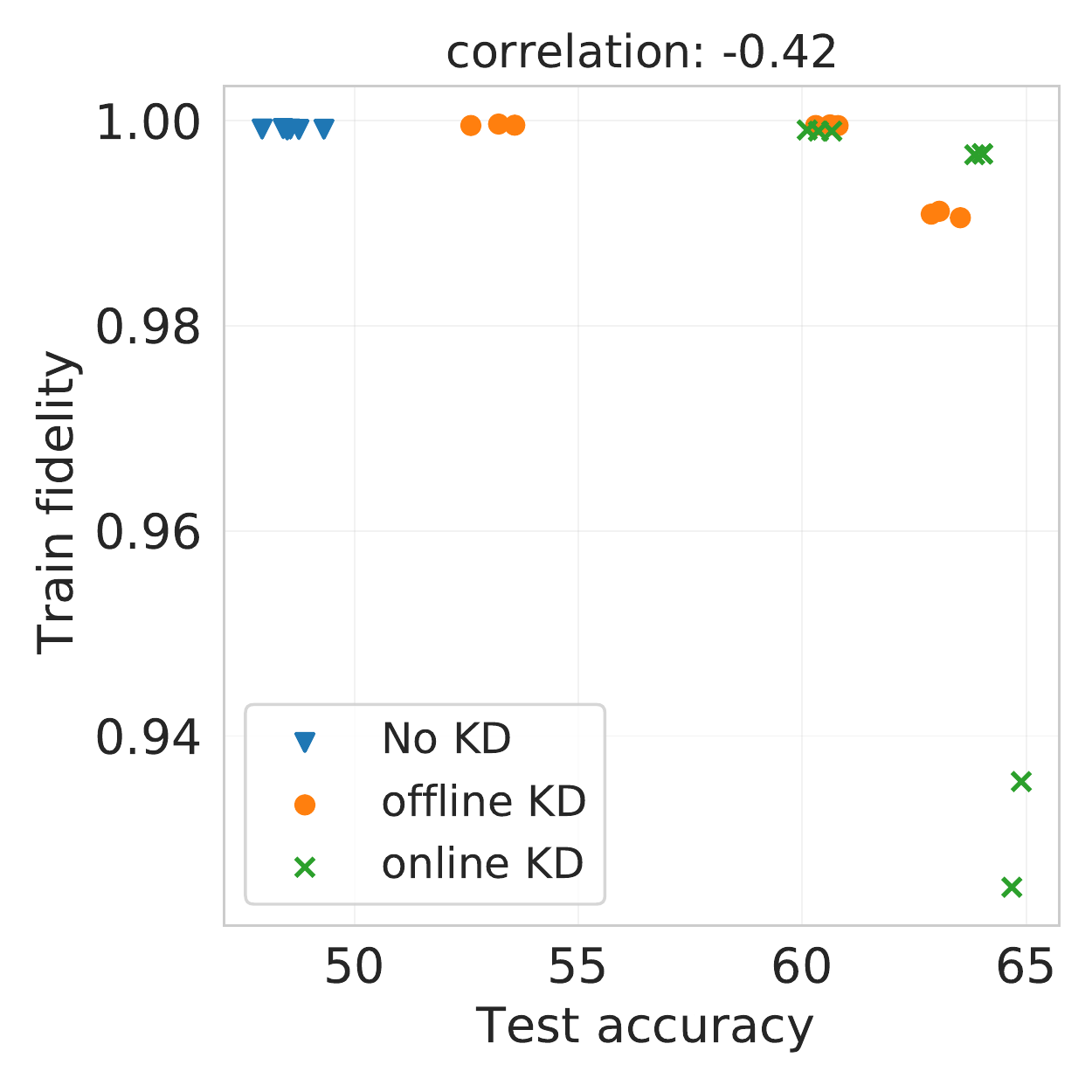}
        \caption{{\centering Tiny ImageNet, VGG-16 student, ResNet-101 teacher}}
    \end{subfigure}
    \vskip 1em
    \begin{subfigure}{0.35\textwidth}
        \includegraphics[width=\textwidth]{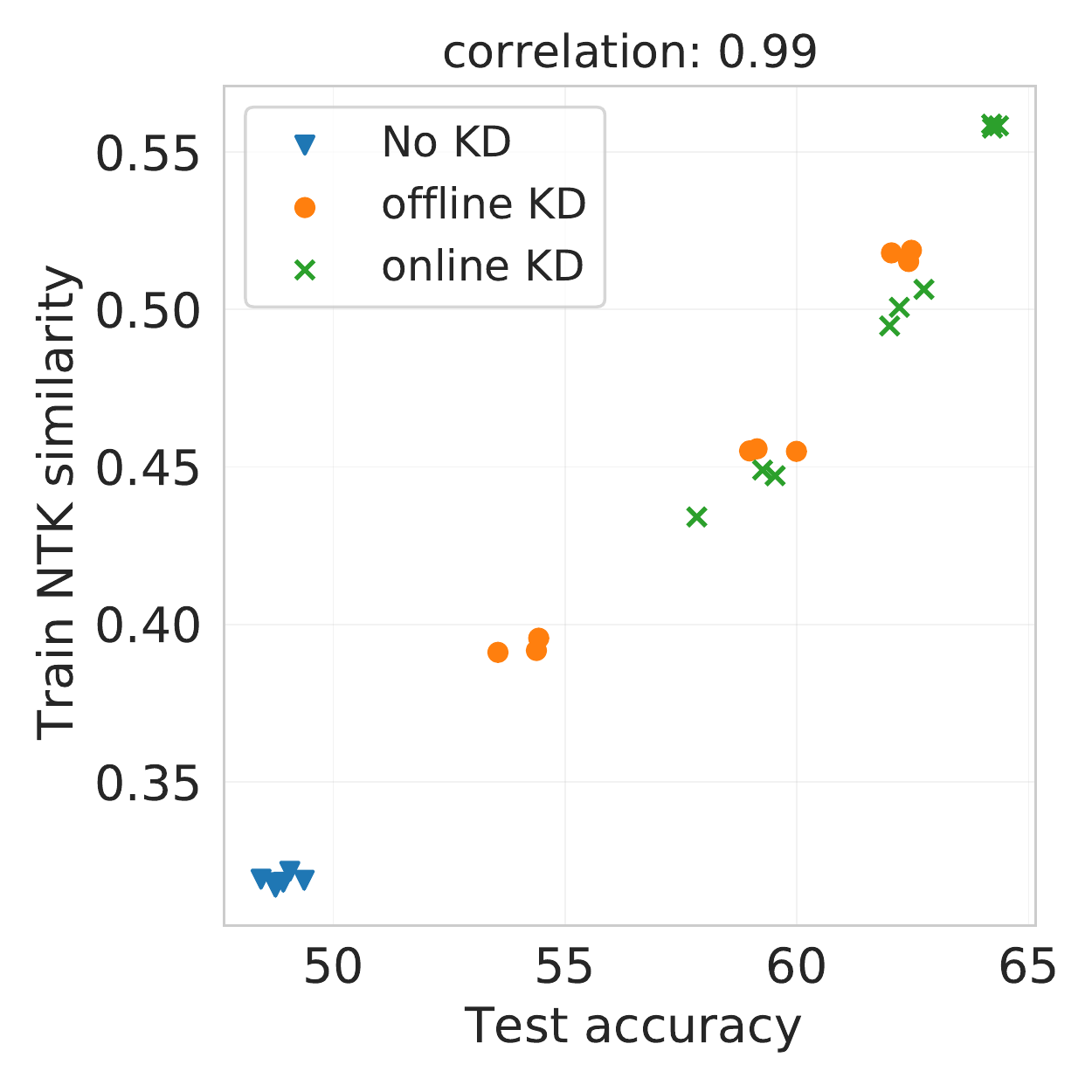}
        \caption{\rebuttal{{\centering Tiny ImageNet, VGG-16 student, MobileNet-V3-125 teacher}}}
    \end{subfigure}
    \hspace{4em}
    \begin{subfigure}{0.35\textwidth}
        \includegraphics[width=\textwidth]{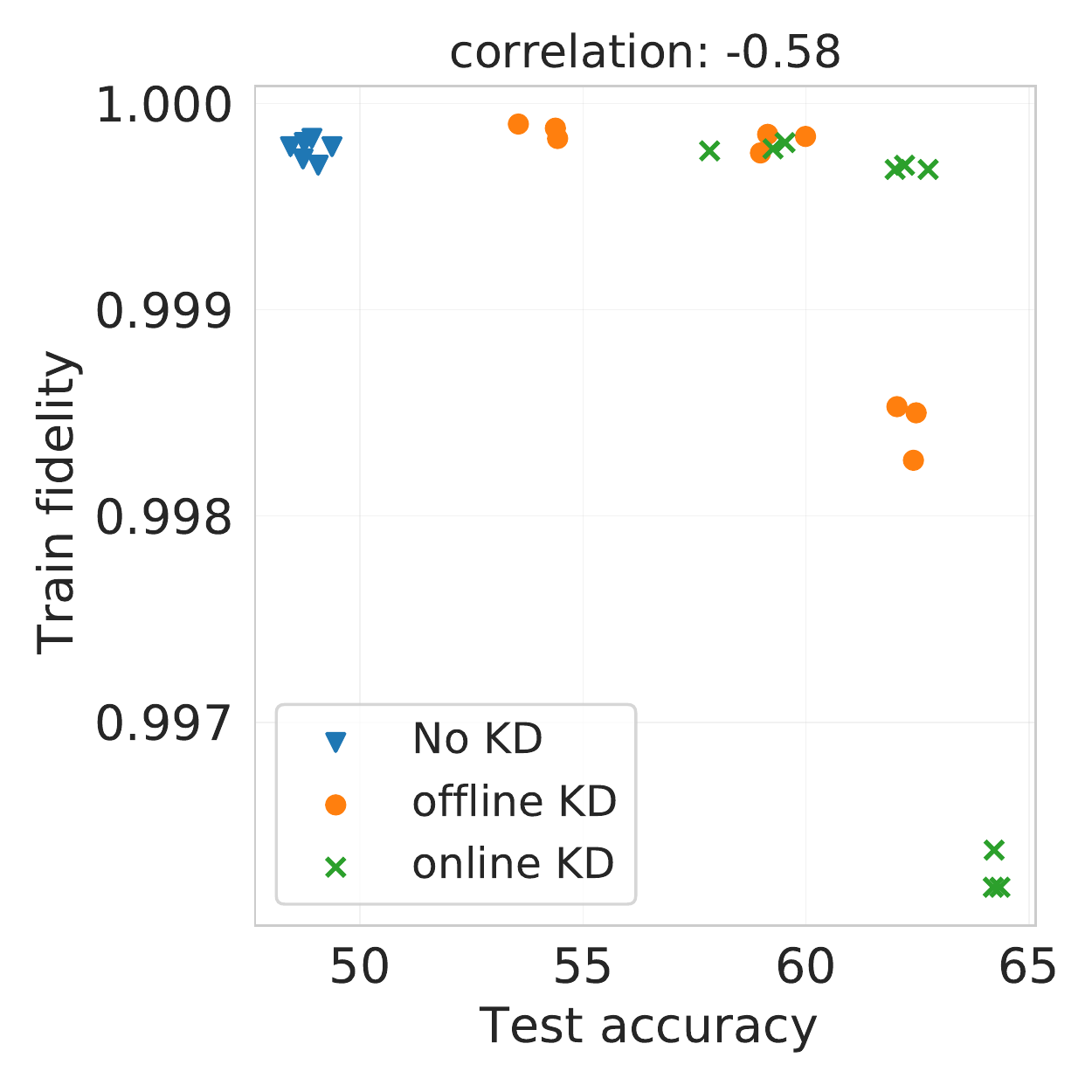}
        \caption{\rebuttal{{\centering Tiny ImageNet, VGG-16 student, MobileNet-V3-125 teacher}}}
    \end{subfigure}
    \caption{Relationship between test accuracy, train NTK similarity, and train fidelity for various teacher, student, and dataset configurations.}
    \label{fig:ntk-matching-comparision-2-appendix}
\end{figure}


\section{Expanded discussion on related work}
\label{appen:related}
The key contributions of this work are the demonstration of the role of supervision complexity in student generalization, and the establishment of online knowledge distillation as a theoretically grounded and effective method.
Both supervision complexity and online distillation have a number of relevant precedents in the literature that are worth comment.

\emph{Transferring knowledge beyond logits}.
In the seminal works of~\citet{Bucilla:2006,hinton2015distilling} transferred ``knowledge'' is in the form of output probabilities.
Later works suggest other notions of ``knowledge`` and other ways of transferring knowledge~\citep{gou2021knowledge}.
These include 
activations of intermediate layers~\citep{Romero15fitnets:hints}, 
attention maps~\citep{zagoruyko2017paying}, 
classifier head parameters~\citep{chen2022knowledge}, and 
various notions of example similarity~\citep{passalis2018learning, park2019relational, tung2019similarity, Tian2020Contrastive, he2021feature}.
Transferring teacher NTK matrix belongs to this latter category of methods.
\citet{zhang2022learning} propose to transfer a low-rank approximation of a feature map corresponding the teacher NTK.

\emph{Non-static teachers}.
Some works on KD consider non-static teachers.
In order to bridge teacher-student capacity gap,~\citet{mirzadeh2020improved} propose to perform a few rounds of distillation with teachers of increasing capacity.
In deep mutual learning~\citep{zhang2018deep, chen2020online}, codistillation~\citep{anil2018online}, and collaborative learning~\citep{guo2020online},
multiple students are trained simultaneously, distilling from each other or from an ensemble. 
In~\citet{zhou2018rocket} and \citet{shi2021prokt}, the teacher and the student are trained together.
In the former they have a common architecture trunk, while in the latter the teacher is penalized to keep its predictions close to the student's predictions.
The closest method to the online distillation method of this work is \textit{route constrained optimization}~\citep{jin2019rco}, where a few teacher checkpoints are selected for a multi-round distillation.
\citet{rezagholizadeh-etal-2022-pro} employ a similar procedure but with an annealed temperature that decreases linearly with training time, followed by a phase of training with dataset labels only.
The idea of distilling from checkpoints also appears in \citet{yang2019snapshot}, where a network is trained with a cosine learning rate schedule, simultaneously distilling from the checkpoint of the previous learning rate cycle.

\emph{Fundamental understanding of distillation}.
The effects of temperature, teacher-student capacity gap, optimization time, data augmentations, and other training details is non-trivial~\citep{cho2019efficacy,beyer2022knowledge,stanton2021does}.
It has been hypothesized and shown to some extent that teacher soft predictions capture class similarities, which is beneficial for the student~\citep{hinton2015distilling, furlanello2018born, tang2020understanding}.
\citet{yuan2020revisiting} demonstrate that this softness of teacher predictions also has a regularization effect, similar to label smoothing.
\citet{menon2021statistical} argue that teacher predictions are sometimes closer to the Bayes classifier than the hard labels of the dataset, reducing the variance of the training objective.
The vanilla knowledge distillation loss also introduces some optimization biases.
\citet{mobahi2020self} prove that for kernel methods with RKHS norm regularization, self-distillation increases regularization strength, resulting in smaller norm RKHS norm solutions.

\citet{phuong19understand} prove that in a self-distillation setting, deep linear networks trained with gradient flow converge to the projection of teacher parameters into the data span, effectively recovering teacher parameters when the number of training points is large than the number of parameters.
They derive a bound of the transfer risk that depends on the distribution of the acute angle between teacher parameters and data points.
This is in spirit related to supervision complexity as it measures an “alignment ” between the distillation objective and data
\citet{ji2020knowledge} extend this results to linearized neural networks, showing that the quantity $\Delta_z^\top \bs{K}^{-1} \Delta_z$, where $\Delta_z$ is the logit change during training, plays a key role in estimating the bound.
The resulting bound is qualitatively different compared to ours, and the $\Delta_z^\top \bs{K}^{-1} \Delta_z$ becomes ill-defined for hard labels.

\emph{Supervision complexity}.
The key quantity in our work is $\bs{Y}^\top K^{-1}\bs{Y}$.
\citet{cristianini2001kernel} introduced a related quantity $\bs{Y}^\top K \bs{Y}$ called \emph{kernel-target alignment} and derived a generalization bound with it for expected Parzen window classifiers. 
As an easy-to-compute proxy to supervision complexity,~\citet{deshpande2021linearized} use kernel-target alignment for model selection in transfer learning.
\citet{ortiz2021can} demonstrate that when NTK-target alignment is high, learning is faster and generalizes better.
\citet{arora2019fine} prove a generalization bound for overparameterized two-layer neural networks with NTK parameterization trained with gradient flow.
Their bound is approximately $\sqrt{\bs{Y}^\top (K^\infty)^{-1}\bs{Y}} / \sqrt{n}$, where $K^\infty$ is the \emph{expected NTK matrix} at a random initialization.
Our bound of \cref{thm:margin-bound-label-complexity} can be seen as a generalization of this result for all kernel methods, including linearized neural networks of any depth and sufficient width, with the only difference of using the empirical NTK matrix.
\citet{belkin2018understand} warns that bounds based on RKHS complexity of the learned function can fail to explain the good generalization capabilities of kernel methods in presence of label noise.

\end{document}